\newcommand{\eqals}[1]{\begin{align*}#1\end{align*}}
\newcommand{\eqal}[1]{\begin{align}#1\end{align}}
\newcommand{\eqalsplit}[1]{\eqal{\begin{split}#1\end{split}}}
\newcommand{\bpr}{\begin{proof}}
\newcommand{\epr}{\end{proof}}
\newcommand{\be}{\begin{equation}}
\newcommand{\ee}{\end{equation}}
\newcommand{\bd}{\begin{definition}}
\newcommand{\ed}{\end{definition}}
\newcommand{\bi}{\begin{itemize}}
\newcommand{\ei}{\end{itemize}}
\newcommand{\ba}{\begin{ass}}
\newcommand{\ea}{\end{ass}}
\newcommand{\br}{\begin{remark}}
\newcommand{\er}{\end{remark}}
\newcommand{\bp}{\begin{proposition}}
\newcommand{\ep}{\end{proposition}}
\newcommand{\blm}{\begin{lemma}}
\newcommand{\elm}{\end{lemma}}
\newcommand{\bt}{\begin{theorem}}
\newcommand{\et}{\end{theorem}}
\newcommand{\bcor}{\begin{corollary}}
\newcommand{\ecor}{\end{corollary}}
\newcommand{\bex}{\begin{example}}
\newcommand{\eex}{\end{example}}
 \newcommand{\X}{X}
\newcommand{\Y}{Y}
\newcommand{\Z}{Z}
\newcommand{\pset}[1]{\left[#1\right]}
\newcommand{\pY}{{\pset{\Y}}}
\newcommand{\pX}{{\pset{\X}}}
\newcommand{\pZ}{{\pset{\Z}}}
\newcommand{\pp}{\pi}
\newcommand{\mubar}{{\bar{\mu}}}
\newcommand{\fhat}{{\widehat f}}
\newcommand{\fstar}{{f^*}}
\newcommand{\ghat}{{\widehat g}}
\newcommand{\gstar}{{g^*}}
\newcommand{\gbs}{{\bar{g}^*}}
\newcommand{\hstar}{{h^*}}
\newcommand{\gtstar}{{\bar{g}^*}}
\newcommand{\E}{\mathcal{E}}
\newcommand{\Expect}{\mathbb{E}}
\newcommand{\A}{\mathcal{A}}
\newcommand{\R}{\mathbb{R}}
\renewcommand{\P}{P}
\newcommand{\N}{\mathbb{N}}
\renewcommand{\L}{L}
\newcommand{\F}{\mathcal{F}}
\newcommand{\hh}{ {\mathcal{H}} }
\newcommand{\ff}{{\mathcal{F}}}
\newcommand{\K}{{\mathbf{K}}}
\newcommand{\la}{\lambda}
\newcommand{\loss}{\bigtriangleup}
\newcommand{\closs}{{\msf{c}_\loss}}
\newcommand{\rhox}{{\rho_\X}}
\newcommand{\rhoxhat}{ {\widehat\rho_\X} }
\newcommand{\Ctilde}{ {\widetilde {C}} }
\newcommand{\Chat}{ {\widehat {C}} }
\newcommand{\Chatla}{ {\widehat {C}_\la} }
\newcommand{\Cla}{ {{C_\la}} }
\newcommand{\Btilde}{ {\widetilde {B}} }
\newcommand{\Bhat}{ {\widehat {B}} }
\newcommand{\Ltwo}{{L^2}}
\newcommand{\LXP}{{L^2(\X\times\P,\pp\rhox)} }
\newcommand{\LXPH}{{L^2(\X\times\P,\pp\rhox,\hh)} }
\newcommand{\LXPHtilde}{{L^2(\X\times\P,\pp\rhoxhat,\hh)} }
\renewcommand{\gg}{{\cal G}}
\newcommand{\ggbar}{{\overline{\gg}}}
\newcommand{\kbar}{{\bar{k}}}
\newcommand{\scov}{\mathsf{C}}
\renewcommand{\eqals}[1]{\eqal{#1}}
\newcommand{\msf}[1]{\mathsf{#1}}
\newcommand{\mq}{{\msf{s}}}
\renewcommand{\i}{{\mathfrak{i}}}
\newcommand{\ix}{{\i_\X}}
\newcommand{\pt}[2]{{{#1}_{#2}}}
\newcommand{\auxx}{\chi}
\newcommand{\auxy}{\eta}
\declaretheorem[name=Theorem,refname=Thm.]{theorem}
\declaretheorem[name=Lemma,sibling=theorem]{lemma}
\declaretheorem[name=Proposition,refname=Prop.,sibling=theorem]{proposition}
\declaretheorem[name=Remark]{remark}
\declaretheorem[name=Corollary,refname=Cor.,sibling=theorem]{corollary}
\declaretheorem[name=Definition,refname=Def.]{definition}
\declaretheorem[name=Assumption,refname=Asm.]{assumption}
\declaretheorem[name=Example]{example}
\newcommand{\Nystrom}[1]{{Nystr\"om}}
\providecommand{\scal}[2]{\left\langle{#1},{#2}\right\rangle}
\providecommand{\nor}[1]{\bigl\|{#1}\bigr\|}
\newcommand{\proj}{\ensuremath{\text{\rm proj}}}
\newcommand{\tr}{\ensuremath{\text{\rm Tr}}}
\newcommand{\Span}{\ensuremath{\text{\rm span}}}
\newcommand{\esssup}{\ensuremath{\text{\rm ess~sup~ }}}
\newcommand{\argmin}{\operatornamewithlimits{argmin}}
\newcommand{\HS}{{\rm HS}}
\newcommand{\sign}{\ensuremath{\text{\rm sign}}}
\renewcommand{\paragraph}[1]{\ \newline\noindent{{\textbf{#1.}}}}
\renewcommand{\leq}{\leqslant}
\crefname{assumption}{Assumption}{Assumptions}
\crefname{equation}{}{}
\crefname{figure}{Fig.}{Fig.}
\crefname{table}{Table}{Tables}
\crefname{section}{Sec.}{Sec.}
\crefname{theorem}{Thm.}{Thm.}
\crefname{lemma}{Lemma}{Lemmas}
\crefname{corollary}{Cor.}{Cor.}
\crefname{example}{Example}{Examples}
\crefname{remark}{Remark}{Remarks}
\crefname{algorithm}{Alg.}{Algorightms}
\crefname{appendix}{Appendix}{Appendices}
\crefname{subappendix}{Appendix}{Appendices}
\crefname{subsubappendix}{Appendix}{Appendices}
\newcommand{\bl}{BL}
\newcommand{\wl}{WL}
\title{\Huge\bf Localized Structured Prediction\vspace{1em}}
\author{
Carlo Ciliberto \thanks{Imperial College London - University College London, London, United Kingdom.}\\
{\em\small c.ciliberto@imperial.ac.uk}\\
\and
Francis Bach \thanks{INRIA - Département d’informatique de l’ENS, Ecole normale supérieure, CNRS, INRIA, PSL
	Research University, 75005 Paris, France}\\
{\em\small francis.bach@inria.fr}\\
\and
Alessandro Rudi $^\dag$\\
{\em\small alessandro.rudi@inria.fr}\\
}
\begin{document}

\maketitle

\begin{abstract}
\noindent Key to structured prediction is exploiting the problem's structure to simplify the learning process. A major challenge arises when data exhibit a local structure (i.e., are made ``by parts'') that can be leveraged to better approximate the relation between (parts of) the input and (parts of) the output. Recent literature on signal processing, and in particular computer vision, shows that capturing these aspects is indeed essential to achieve state-of-the-art performance. However, in this context algorithms are typically derived on a case-by-case basis. In this work we propose the first theoretical framework to deal with part-based data from a general perspective and study a novel method within the setting of statistical learning theory. Our analysis is novel in that it explicitly quantifies the benefits of leveraging the part-based structure of a problem on the learning rates of the proposed estimator.
\end{abstract}

\maketitle


\section{Introduction}
\label{sec:introduction}

%
%
%
%
Structured prediction deals with supervised learning problems where the output space is not endowed with a canonical linear metric but has a rich semantic or geometric structure \cite{bakir2007predicting,nowozin2011}. Typical examples are settings in which the outputs correspond to strings (e.g., captioning \cite{karpathy2015deep}), images (e.g., segmentation \cite{alahari2008reduce}), rankings \cite{duchi2010} or protein foldings \cite{joachims2009predicting}.
While the lack of linearity poses several modeling and computational challenges, this additional complexity comes with a potentially significant advantage: when suitably incorporated within the learning model, knowledge about the structure allows to capture key properties of the data. This could potentially lower the sample complexity of the problem, attaining better generalization performance with less training examples. A natural scenario in this sense is the case where both input and output data are organized into ``parts'' that can interact with one another according to a specific structure. Examples can be found in computer vision (e.g., segmentation \cite{alahari2008reduce}, localization \cite{blaschko2008learning,lampert2009efficient}, pixel-wise classification \cite{szummer2008learning}), speech recognition \cite{bahl1986maximum,sutton2012introduction}, natural language processing \cite{tsochantaridis2005}, trajectory planing \cite{ratliff2006maximum} or hierarchical classification \cite{tuia2011structured}. 

Recent literature on the topic has empirically shown that the local structure in the data can indeed lead to significantly better predictions than global approaches \cite{felzenszwalb2010object,vedaldi2009structured}. However in practice, these ideas are typically investigated on a case-by-case basis, leading to ad-hoc algorithms that cannot be easily adapted to new settings. On the theoretical side, few works have considered less specific part-based factorizations \cite{cortes2016structured} and a comprehensive theory analyzing the effect of local interactions between parts within the context of learning theory is still missing.

In this paper, we propose: $1)$ a novel theoretical framework that can be applied to a wide family of structured prediction settings able to capture potential local structure in the data, and $2)$ a structured prediction algorithm, based on this framework for which we prove universal consistency and generalization rates. 
A key contribution of our analysis is to quantify the impact of the part-based structure of the problem on the learning rates of the proposed estimator. In particular, we prove that under natural assumptions on the local behavior of the data, our algorithm benefits {\em adaptively} from this underlying structure. We support our theoretical findings with experiments on the task of detecting local orientation of ridges in images depicting human fingerprints.

\section{Learning with Between- \& Within-locality} \label{sec:locality}

To formalize the concept of locality within a learning problem, in this work we assume that the data is structured in terms of ``parts''. Practical examples of this setting often arise in image/audio or language processing, where the signal has a natural factorization into patches or sub-sequences. Following these guiding examples, we assume every input $x\in\X$ and output $y \in \Y$ to be interpretable as a collection of (possibly overlapping) parts, and denote $\pt x p$ (respectively $\pt y p$) its $p$-th part, with $p\in\P$ a set of part identifiers (e.g., the position and size of a patch in an image). We assume input and output to share same part structure with respect to $\P$. To formalize the intuition that the learning problem should interact well with this structure of parts, we introduce two key assumptions: {\em between-locality} and {\em within-locality}. They characterize respectively the interplay {\em between} corresponding input-output parts and the correlation of parts {\em within} the same input.

  \begin{figure}
    \centering
    \includegraphics[height=12em]{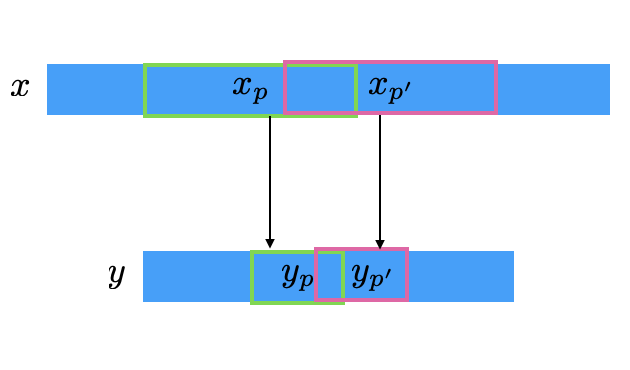}\qquad\qquad
    \includegraphics[height=16em]{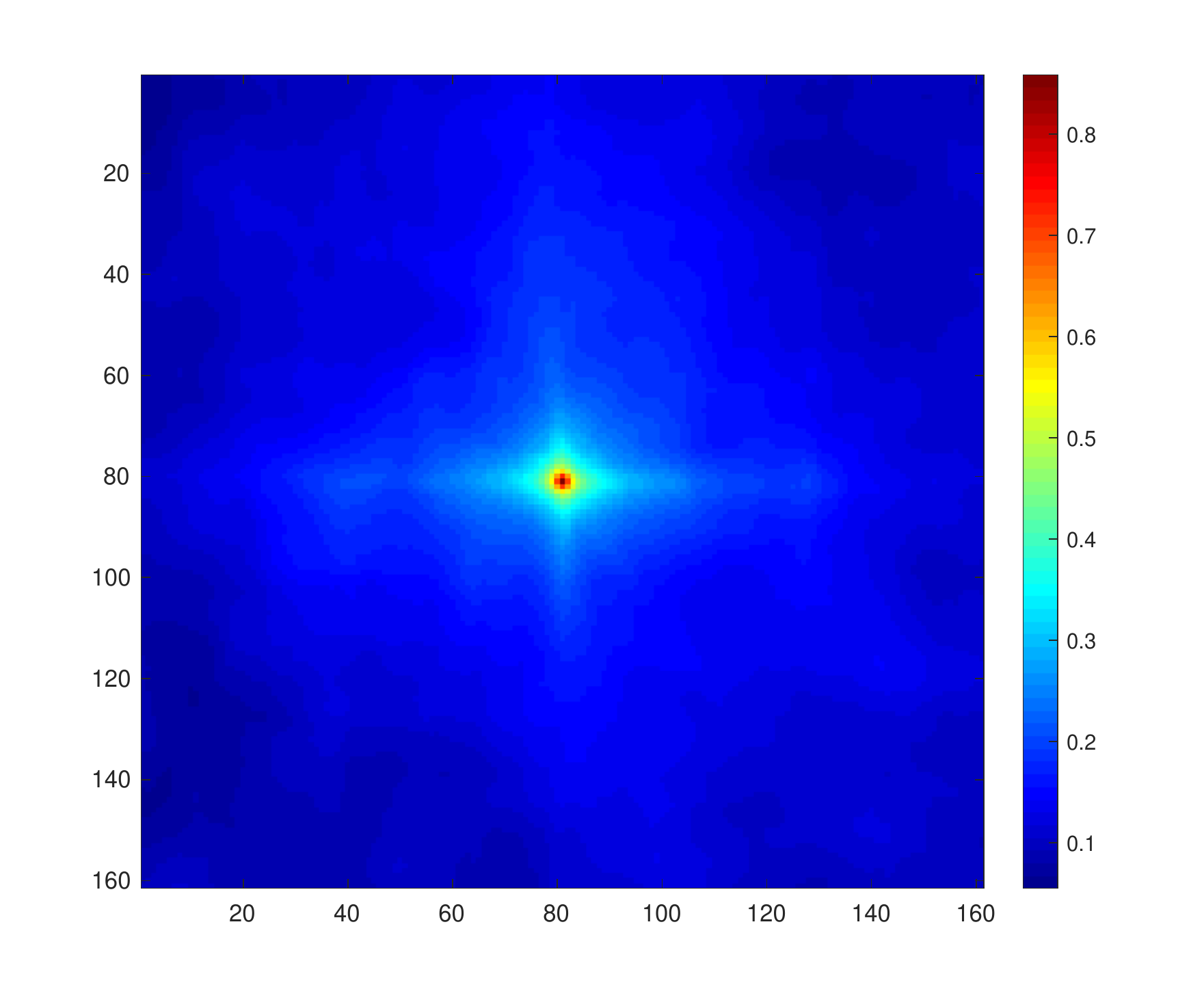}

   \vspace*{-.3cm}
   
    \caption{(Left) Between-locality in a sequence-to-sequence setting: each window (part) $y_p$ of the output sequence $y$ is fully determined by the part $x_p$
     of the input sequence $x$, for every $p\in\P$. (Right) Empirical within-locality $\scov_{p,q}$ of $100$ images sampled from ImageNet between a $20\times20$ patch $q$ and the central patch $p$.}
    \label{fig:locality}
  \end{figure}
  %


\begin{assumption}[Between-locality]\label{asm:between-locality}
$y_p$ is conditionally independent from $x$, given $x_p$, moreover the probability of $y_p$ given $x_p$ is the same as $y_{q}$ given $x_{q}$, for any $p,q \in P$. 
\end{assumption}
\noindent{\em Between-locality (\bl{})} assumes that the $p$-th part of the output $y\in\Y$ depends only on the $p$-th part of the input $x\in\X$, see \cref{fig:locality} (Left) for an intuition in the case of sequence-to-sequence prediction. This is often verified in pixel-wise classification settings, where the class $y_p$ of a pixel $p$ is determined only by the sub-image in the corresponding patch $x_p$. \bl{} essentially corresponds to assuming a joint graphical model on the parts of $x$ and $y$, where each $y_p$ is only connected to $x_p$ but not to other parts.


\bl{} motivates us to focus on a local level by directly learning the relation between input-output parts. This is often an effective strategy in computer vision \cite{lampert2009efficient,vedaldi2009structured,felzenszwalb2010object} but intuitively, one that provides significant advantages only when the input parts are not highly correlated with each other: in the extreme case where all parts are identical, there is no advantage in solving the learning problem locally. 
In this sense it can be useful to measure the amount of ``covariance''
\eqal{\label{eq:within-locality-qp}
\scov_{p,q} = \mathbb{E}_x ~{S(x_p,x_q)} - \mathbb{E}_{x,x'} ~S(x_p, x'_q)
}
between two parts $p$ and $q$ of an input $x$, for $S(x_p,x_q)$ a suitable measure of similarity between parts (if $S(x_p,x_q) = x_p x_q$, with $x_p$ and $x_q$ scalars random variables, then $\scov_{p,q}$ is the $p,q$-th entry of the covariance matrix of the vector $(x_1,\dots,x_{|\P|})$~).
Here $\Expect_x S(x_p,x_q)$ and $\Expect_{x,x'} S(x_p,x'_q)$ measure the similarity between the $p$-th and the $q$-th part of, respectively, the {\em same} input, and two {\em independent} ones (in particular $\scov_{p,q}=0$ when the $p$-th and $q$-th part of $x$ are independent).
In many applications, it is reasonable to assume that $\scov_{p,q}$ decays according to the distance between $p$ and $q$. 

\begin{assumption}[Within-locality]\label{asm:within-locality}
There exists a distance $d:\P\times\P\to\R$ and $\gamma \geq 0$, such that
\eqal{\label{eq:asm-within-locality}
  \left|\scov_{p,q}\right| ~\leq ~ {\msf r}^2 ~e^{-\gamma d(p,q)} \qquad\text{with}\qquad \msf r^2 = \sup_{x,x'} |S(x,x')|.
}
\end{assumption}
\noindent{\em Within-locality (\wl{})} is always satisfied for $\gamma = 0$. However, when $x_p$ is independent of $x_q$, it holds with $\gamma = \infty$ and $d(p,q) = \delta_{p,q}$ the Dirac's delta. Exponential decays of correlation are typically observed when the distribution of the parts of $x$ factorizes in a graphical model that connects parts which are close in terms of the distance $d$: although all parts depend on each other, the long-range dependence typically goes to zero exponentially fast in the distance (see, e.g.,~\cite{meyn2012markov} for mixing properties of Markov chains). \cref{fig:locality} (Right) reports the empirical \wl{} measured on $100$ images randomly sampled from ImageNet \cite{deng2009imagenet}: each pixel $(i,j)$ reports the value of $\scov_{p,q}$ of the central patch~$p$ with respect to a $20\times20$ patch $q$ centered in $(i,j)$. Here $S(x_p,x_q) = x_p^\top x_q$. We note that $\scov_{p,q}$ decreases extremely fast as a function of the distance $\nor{p-q}$, suggesting that \cref{asm:within-locality} holds with a large value of $\gamma$. 

\paragraph{Contributions} In this work  we present a novel structured prediction algorithm that adaptively leverages locality in the learning problem, when present (\cref{sec:algorithm}). 
%
%
We study the generalization properties of the proposed estimator (\cref{sec:theory}), showing that it is equivalent to the state of the art in the worst case scenario. More importantly, if the locality \cref{asm:between-locality,asm:within-locality} are satisfied, we prove that our learning rates improve proportionally to the number $|\P|$ of parts in the problem. Here we give an informal version of this main result, reported in more detail in \cref{thm:rates-improved-with-parts} (\cref{sec:theory}). Below we denote by $\fhat$ the proposed estimator, by $\E(f)$ the expected risk of a function $f:\X\to\Y$ and $\fstar = \argmin_{f} \E(f)$.
\begin{theorem}[Informal - Learning Rates \& Locality]\label{thm:motivation}
Under mild assumptions on the loss and the data distribution, if the learning problem is local (\cref{asm:between-locality,asm:within-locality}), there exists $c_0>0$ such that
\eqal{\label{eq:informal-bound}
  \Expect~\left[ \E(\fhat~) - \E(\fstar)\right]  ~\leq~ c_0 \left( \frac{\mq}{ n |P|}\right)^{1/4}, \qquad \mq = \frac{\mathsf{r}^2}{|P|}\sum_{p,q=1}^{|P|}~ e^{-\gamma d(p,q)},
}
\noindent where the expectation is taken with respect to the sample of $n$ input-output points used to train $\fhat$.
\end{theorem}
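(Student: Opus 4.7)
The plan is to combine the standard implicit-loss-embedding / surrogate argument for structured prediction with a variance computation that exploits the part structure. The generic machinery yields a comparison inequality of the form $\E(\fhat) - \E(\fstar) \leq c_1 \nor{\ghat - \gstar}_{\LX}$, where $\ghat$ is a kernel-ridge-type estimator of a Hilbert-valued surrogate target $\gstar$. Taking expectations and applying Jensen gives $\Expect[\E(\fhat) - \E(\fstar)] \leq c_1\, (\Expect\nor{\ghat-\gstar}_{\LX}^2)^{1/2}$, so the exponent $1/4$ in \eqref{eq:informal-bound} will arise as the composition $1/2 \times 1/2$: once from this square root, and once from the standard $N^{-1/2}$ rate of the squared $L^2$ error of a ridge estimator. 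The work is to pin down the effective sample size $N$ and show it is of order $n|\P|/\mq$.

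First I would use \cref{asm:between-locality} to factorize the surrogate target. Because $y_p\mid x$ depends only on $x_p$ and has the same conditional law for every $p$, the conditional embedding can be written $\gstar(x) = \frac{1}{|\P|}\sum_{p\in\P}\gbs(x_p)$ for a single common local target $\gbs$, and the natural estimator $\ghat$ is the analogous average obtained by fitting $\gbs$ via ridge regression over the pooled collection of $n|\P|$ observed part pairs $(x_{i,p},y_{i,p})$. A bias--variance decomposition of this ridge regression reduces $\Expect \nor{\ghat-\gstar}_{\LX}^2$ to a bias term (independent of the part structure and controlled by a standard source-condition assumption) and a variance term whose size is governed by the second moment of the pooled empirical mean operator.

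The main step, and the one I expect to be the main obstacle, is the variance calculation under correlated pooled sampling, since the $n|\P|$ part-level observations are \emph{not} i.i.d.\ -- parts drawn from the same input are dependent. I would expand this second moment as a double sum over sample indices $i,j\in\{1,\dots,n\}$ and part indices $p,q\in\P$; independence across examples kills all $i\neq j$ terms, leaving $n$ identical contributions that for fixed $i$ take the form $\sum_{p,q\in\P} \scov_{p,q}$, provided the similarity $S$ in \eqref{eq:within-locality-qp} is chosen as the Hilbert inner product of the relevant feature maps. \cref{asm:within-locality} then bounds this double sum by $\msf{r}^2 \sum_{p,q} e^{-\gamma d(p,q)} = |\P|\,\mq$, yielding an effective sample size of order $n|\P|/\mq$ and, after optimising the regularisation, $\Expect\nor{\ghat-\gstar}_{\LX}^2 = O(\sqrt{\mq/(n|\P|)})$. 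Composing with the comparison inequality and extracting a square root delivers the claimed $(\mq/(n|\P|))^{1/4}$ rate, with $c_0$ absorbing the comparison constant, the ridge-regression complexity constants, and the bias contribution. As a sanity check, the two extremes recover the expected behaviour: $\gamma\to\infty$ with $d(p,q)=\delta_{p,q}$ gives $\mq = \msf{r}^2$ and the full $|\P|$-fold improvement, while $\gamma = 0$ gives $\mq = \msf{r}^2|\P|$ and the classical $n^{-1/4}$ worst-case rate.
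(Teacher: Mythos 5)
Your proof takes essentially the same route as the paper: the comparison inequality reducing to the surrogate $L^2$ error $\Expect\|\ghat-\gstar\|^2$, the ridge-regression bias--variance decomposition, and the key variance expansion in which independence kills the $i\neq j$ cross-example terms while within-locality bounds the cross-part covariance $\sum_{p,q}\scov_{p,q}$ within each example (with the similarity taken to be $S=\bar k^2$, matching the terms $k((x,p),(x,q))^2$ that actually arise in the second moment) --- this is exactly the paper's Lemma computing $\Expect\|\Chat-C\|_\HS^2$ and its companion for $\Bhat$. The only cosmetic difference is that you regress on the full pooled collection of $n|\P|$ part-observations, whereas the paper subsamples an auxiliary set of size $m$ drawn with replacement, which introduces an extra $\mathsf{r}^2/m$ Monte-Carlo variance term that the informal statement suppresses by taking $m$ of order $n|\P|$.
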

\noindent In the worst-case scenario $\gamma=0$ (no exponential decay of the covariance between parts), the bound in \cref{eq:informal-bound} scales as $1/n^{1/4}$ (since $\mq = \msf r^2 |P|$) recovering \cite{ciliberto2016}, where no structure is assumed on the parts. However, as soon as $\gamma>0$, $\mq$ can be upper bounded by a constant independent of $|P|$ and thus the rate scales as $1/(|P|n)^{1/4}$, accelerating proportionally to the number of parts. In this sense, \cref{thm:motivation} shows the significant benefit of making use of locality. The following example focuses on the special case of sequence-to-sequence prediction.

\begin{example}[Locality on Sequences]\label{ex:locality-on-sequences}
As depicted in \cref{fig:locality}, for discrete sequences we can consider parts (e.g., windows) indexed by $\P=\{1,\dots,|\P|\}$, with $d(p,q) = |p-q|$ for $p,q\in\P$ (see \cref{sec:example-sequence-to-sequence} for more details). In this case, \cref{asm:within-locality} leads to
\eqals{\label{eq:locality-on-sequences-example}
\mq ~~\leq~~ 2\mathsf{r}^2 (1-e^{-\gamma})^{-1},
}
which for $\gamma > 0$ is bounded by a constant not depending on the number of parts. Hence, \cref{thm:motivation} guarantees a learning rate of order $1/(n|P|)^{1/4}$, which is significanlty faster than the rate $1/n^{1/4}$ of methods that do not leverage locality such as \cite{ciliberto2016}. See \cref{sec:exp} for empirical support to this observation.
\end{example}

\section{Problem Formulation}\label{sec:problem-formulation}
  
We denote by $\X, \Y$ and $\Z$ respectively the {\em input space}, {\em label space} and {\em output space} of a learning problem. Let $\rho$ be a probability measure on $\X \times \Y$ and $\loss:\Z \times \Y \times \X \to \R$ a loss measuring prediction errors between a label $y\in\Y$ and a output $z\in\Z$, possibly parametrized by an input $x\in\X$. To stress this interpretation we adopt the notation $\loss(z,y|x)$. Given a finite number of $(x_i, y_i)_{i=1}^n$ independently sampled from $\rho$, our goal is to approximate the minimizer $\fstar$ of the {\em expected risk}
\eqal{\label{eq:base-fstar}%
\min_{f:\X\to\Z} \E(f), \quad \textrm{with} \quad \E(f) = \int \loss(f(x), y|x) ~d\rho(x,y).%
}

\paragraph{Loss Made by Parts} We formalize the intuition introduced in \cref{sec:locality} that data are decomposable into parts: we denote the sets of {\em parts} of $\X,\Y$ and $\Z$ by respectively $\pX,\pY$ and $\pZ$. These are abstract sets that depend on the problem at hand (see examples below). We assume $\P$ to be a set of part ``indices'' equipped with a selection operator $\X\times\P\to\pX$ denoted $(x,p)\mapsto[x]_p$ (analogously for $\Y$ and $\Z$). When clear from context, we will use the shorthand $x_p = [x]_p$. For simplicity, in the following we will assume $\P$ be finite, however our analysis generalizes also to the infinite case (see supplementary material). Let $\pi(\cdot|x)$ be a probability distribution over the set of parts $\P$, conditioned with respect to an input $x\in\X$. We study loss functions $\loss$ that can be represented as 
\eqal{\label{eq:loss-base-parts}
\loss(z,y|x) = \sum_{p\in\P} ~ \pp(p|x) ~\L_p (\pt z p,\pt yp |~ \pt xp).
}
The collection of $(\L_p)_{p\in P}$ is a family of loss functions $\L_p:\pZ\times\pY\times\pX\to\R$, each comparing the $p$-th part of a label $y$ and output $z$.  For instance, in an image processing scenario, $\L_p$ could measure the similarity between the two images at different locations and scales, indexed by $p$. In this sense, the distribution $\pi(p|x)$ allows to weigh each $\L_p$ differently depending on the application (e.g., mistakes at large scales could be more relevant than at lower scales). Various examples of parts and concrete cases are illustrated in the supplementary material,
here we report an extract.

\begin{example}[Sequence to Sequence Prediction]\label{ex:intro-learning-sequences}
  Let $\X = A^k$, $\Y = \Z =  B^k$ for two sets $A, B$ and $k \in \N$ a fixed length. We consider in this example parts that are windows of length $l \leq k$. Then $P = \{1,\dots, k-l+1\}$ where $p \in P$ indexes the window $x_p = (x^{(p)},\dots, x^{(p+l-1)})$, with $x \in \X$, where we have denoted $x^{(s)}$ the $s$-th entry of the sequence $x\in X$, analogous definition for $y_p, z_p$. Finally, we choose the loss $L_p$ to be the 0-1 distance between two strings of same length $L_p(z_p,y_p|x) = \boldsymbol{1}(z_p \neq y_p)$. Finally, we can choose $\pi(p|x) = 1/|P|$, leading to a loss function
  $\loss(z,y|x) = \frac{1}{|P|} \sum_{p\in\P} ~ \boldsymbol{1}(\pt z p \neq \pt yp)$, which is common in the context of CRF \cite{lafferty2001conditional}.
\end{example}  
\begin{remark}[Examples of Loss Functions by Parts]
Several loss functions used in machine learning have a natural formulation in terms of \cref{eq:loss-base-parts}. Notable examples are the Hamming distance \cite{collins2004parameter,taskar2004max,cortes2014ensemble}, used in settings such as hierarchical classification \cite{tuia2011structured}, computer vision \cite{nowozin2011,vedaldi2009structured,szummer2008learning} or trajectory planning \cite{ratliff2006maximum} to name a few. Also, loss functions used in natural language processing, such as the precision/recall and F$1$ score can be written in this form. Finally, we point out that multi-task learning settings \cite{micchelli2004} can be seen as problem by parts, with the loss corresponding to the sum of standard regression/classification loss functions (least-squares, logistic, etc.) over the tasks/parts. 
\end{remark}

\begin{algorithm}[t]
\caption{~}
\caption{ -- {\sc Localized Structured Prediction} }
\label{alg:self-learning}
\begin{algorithmic}
  \State ~
  \State {\bfseries Input:} training set $(x_i,y_i)_{i=1}^n$, distributions $\pi(\cdot|x)$ a reproducing kernel $k$ on $\X\times\P$, hyperparameter $\la>0$, auxiliary dataset size $m\in\N$.
  \State~
  \State {\sc Generate} the auxiliary set $(\eta_j,\chi_j,p_j)_{j=1}^m$:
  \State \quad Sample $i_j\in U_n(\cdot)$. Set $\chi_j = x_{i_j}$.
  \State \quad Sample $p_j\sim\pp(\cdot|\chi_j)$. Set $\eta_j = [y_{i_j}]_{p_j}$.
  
  \State~
  
  \State {\sc Learn} the coefficients for the map $\alpha$:
  \State \quad Set $\K$ with $\K_{jj'}\,=\,k((\chi_j,p_j),(\chi_{j'},p_{j'}))$.
  \State \quad $\mathbf{A} = (\K + m\la I)^{-1}$.
  
  \State~
  \State {\bfseries Return} the map $\alpha:(x,p) \mapsto \mathbf{A} ~ v(x,p) \in\R^m$ 
  \State \quad with $v(x,p)_j = k\big((\chi_j,p_j),(x,p)\big)$. 
  
\end{algorithmic}
\end{algorithm}

\section{Algorithm}\label{sec:algorithm}
  
In this section we introduce our estimator for structured prediction problems with parts. Our approach starts with an auxiliary step for dataset generation that explicitly extracts the parts from the data.

\paragraph{Auxiliary Dataset Generation} The locality assumptions introduced in \cref{sec:locality} motivate us to learn the local relations between individual parts $p\in\P$ of each input-output pair. In this sense, given a training dataset ${\cal D} = (x_i,y_i)_{i=1}^n$ a first step would be to extract a new, part-based dataset $\{ (x_p, p, y_p)~|~ (x,y) \in {\cal D},~ p \in \P\}$. However in most applications the cardinality $|P|$ of the set of parts can be very large (possibly infinite as we discuss in the Appendix) making this process impractical. Instead, we generate an {\em auxiliary dataset} by randomly sub-sampling $m \in \N$ elements from the part-based dataset. Concretely, for $j \in \{1,\dots, m\}$, we first sample $i_j$ according to the uniform distribution $U_{n}$ on $\{1,\dots,n\}$, set $\auxx_j = x_{i_j}$, sample $p_j\sim\pi(\cdot~|~\auxx_j)$ and finally set $\auxy_j = [y_{i_j}]_{p_j}$. This leads to the auxiliary dataset ${\cal D}' = (\auxx_j,p_j,\auxy_j)_{j=1}^m$, as summarized in the {\sc Generate} routine of \cref{alg:self-learning}.

\paragraph{Estimator} Given the auxiliary dataset, we propose the estimator $\fhat:\X\to\Z$, such that $\forall x\in\X$
\eqal{\label{eq:estimator}
\fhat(x) = \argmin_{z\in\Z} \sum_{p \in P} \sum_{j=1}^m \alpha_j(x,p)~  \Big[\pp(p|x)~\L_p(\pt z p, \auxy_j | \pt xp)\Big].
}
%
The functions $\alpha_j:\X\times\P\to\R$ are {\em learned} from the auxiliary dataset and are the fundamental components allowing our estimator to capture the part-based structure of the learning problem. Indeed, for any test point $x\in\X$ and part $p\in\P$, the value $\alpha_j(x,p)$ can be interpreted as a measure of how similar $x_p$ is to the $p_j$-th part of the auxiliary training point $\auxx_j$. For instance, assume $\alpha_j(x,p)$ to be an approximation of the delta function that is $1$ when $\pt xp = \pt {[\auxx_j]}{p_j}$ and $0$ otherwise. Then, 
\eqals{
\alpha_j(x,p)~\L_p(\pt z p, \auxy_j | \pt xp) ~~\approx~~  \delta(\pt xp, [\auxx_j]_{p_j}) ~\L_p(\pt z p, \auxy_j | \pt xp), 
}
%
which implies essentially that
\eqals{
x_p \approx [\auxx_j]_{p_j} ~~\Longrightarrow~~ z_p \approx \auxy_j.
}
In other words, if the $p$-th part of test input $x$ and the $p_j$-th part of the auxiliary training input $\auxx_j$ (i.e., the $p_j$-th part of the training input $x_{i_j}$) are deemed similar, then the estimator will encourage the $p$-th part of the test output $z$ to be similar to the auxiliary part $\auxy_j$. This process is illustrated in \cref{fig:impliedsimilarity} for an ideal computer vision application: for a given test image $x$, the $\alpha$ scores detect a similarity between the $p$-th patch of $x$ and the $p_j$-th patch of the training input $x_{i_j}$. Hence, the estimator will enforce the $p$-th patch of the output $z$ to be similar to the $p_j$-th patch of the training label $y_{i_j}$.

\begin{figure}[t]
    \vspace{0pt}
  	\centering
  	\includegraphics[trim={1cm 3.5cm 1.5cm 2cm},clip,height=14em]{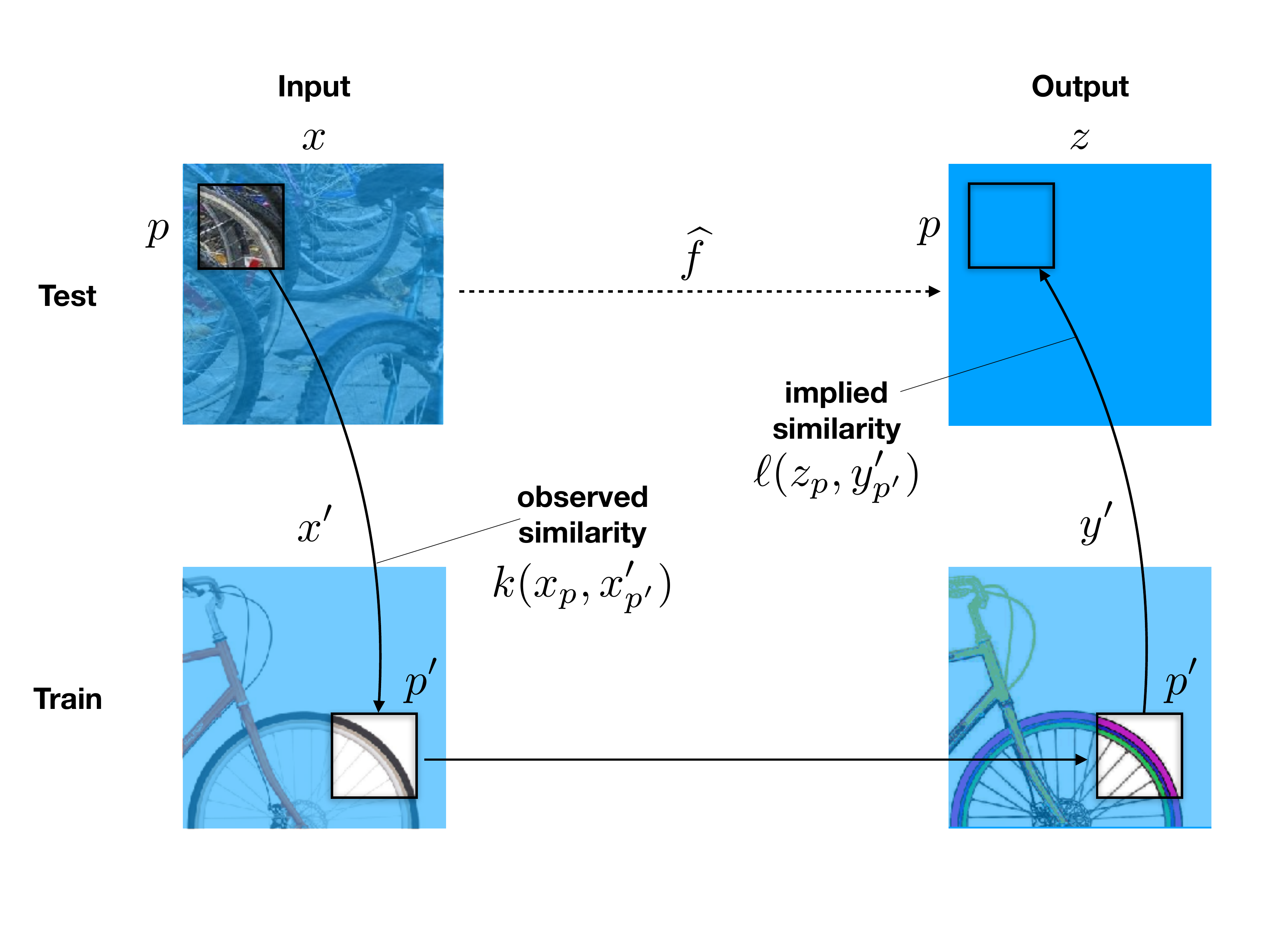}
  	\caption{Illustration of the prediction process for the Localized Structured Prediction Estimator \cref{eq:estimator} for a hypothetical computer vision application.}
  	\label{fig:impliedsimilarity}
\end{figure}

\paragraph{Learning $\alpha$} In line with previous work on structured prediction \cite{ciliberto2016}, we learn each $\alpha_j$ by solving a linear system for a problem akin to kernel ridge regression (see \cref{sec:theory} for the theoretical motivation). In particular, let $k:(\X\times\P)\times(\X\times\P)\to\R$ be a positive definite kernel, we define
\eqal{\label{eq:estimator-alpha}
(\alpha_1(x,p),\dots,\alpha_m(x,p))^\top = (\K + m\la I)^{-1}v(x,p) ,
}
where $\K\in\R^{m\times m}$ is the empircal kernel matrix with entries $\K_{jh} = k((\auxx_j,p_j),(\auxx_h,p_{h}))$ and $v(x,p)\in\R^m$ is the vector with entries $v(x,p)_j = k((\auxx_j,p_j),(x,p))$. Training the proposed algorithm, consists in precomputing $\mathbf{A} = (\K + m\la I)^{-1}$ to evaluate the coefficients $\alpha$ as detailed by the {\sc Learn} routine in \cref{alg:self-learning}. While computing $\mathbf{A}$ amounts to solving a linear system, which requires $O(m^3)$ operations, we note that it is possible to achieve the same statistical accuracy with reduced complexity $O(m\sqrt{m})$ by means of low rank approximations (see \cite{dieuleveut2017harder,rudi2017falkon}). 


\begin{remark}[\bf Evaluating $\fhat$]\label{rem:evaluating-fhat}
According to \eqref{eq:estimator}, evaluating $\fhat$ on a test point $x\in\X$ consists in solving an optimization problem over the output space $\Z$. This is a standard strategy in structured prediction, where an optimization protocol is derived on a case-by-case basis depending on both $\loss$ and $\Z$ (see, e.g., \cite{nowozin2011}). However, the specific form of our estimator suggests a general stochastic meta-algorithm to address this problem,. In particular, we can reformulate \cref{eq:estimator} as
\eqal{\label{eq:evaluating-as-sgd}
\fhat(x) = \argmin_{z\in\Z} ~ \mathbb{E}_{j,p}~ h_{j,p}(z|x)
,}
with $p$ sampled according to $\pp$, $j\in\{1,\dots,m\}$ sampled according to the weights $\alpha_j$ and $h_{j,p}$ suitably defined in terms of $\L_p$. When the $h_{j,p}$ are (sub)differentiable, \eqref{eq:evaluating-as-sgd} can be effectively addressed by stochastic gradient methods (SGM). In \cref{alg:sgm-hatf-evaluation} in \cref{sec:app-algorithm} we give an example of this strategy.
\end{remark}

  
  
  
  

\section{Generalization Properties of Structured Prediction with Parts}\label{sec:theory}

In this section we study the statistical properties for the proposed algorithm, with particular attention to the impact of locality on learning rates, see \cref{thm:rates-improved-with-parts} (for a complete analysis of univeral consistency and learning rates without locality assumptions, see \cref{sec:app-consistency,sec:app-learning-rates}). Our analysis leverages the assumption that the loss function $\loss$ is a {\em Structure Encoding Loss Function (SELF) by Parts}.
\begin{definition}[SELF by Parts]\label{def:self-simple}
A function $\loss:\Z\times\Y\times\X\to\R$ is a {\em Structure Encoding Loss Function (SELF) by Parts} if it admits a factorization in the form of \eqref{eq:loss-base-parts} with functions $\L_p:\pZ\times\pY\times\pX\to\R$, and there exists a separable Hilbert space $\hh$ and two bounded maps $\psi:\pZ\times\pX\times\P\to\hh$, $\varphi:\pY\to\hh$ such that for any $\zeta \in \pZ$, $\eta \in \pY$, $\xi \in \pX$, $p \in \P$
\eqal{\label{eq:self-simple}
  L_p(\zeta,\eta|\xi) ~~=~~ \scal{\psi(\zeta,\xi,p)}{\varphi(\eta)}_\hh.
}
\end{definition}
%
%
%
\noindent The definition of ``SELF by Parts'' specializes the definition of SELF in \cite{ciliberto2017consistent} and in the following we will always assume $\loss$ to satisfy it. Indeed, \cref{def:self-simple} is satisfied when the spaces of parts involved are discrete sets and it is rather mild in the general case (see \cite{ciliberto2016} for an exhaustive list of examples). 
Note that when $\loss$ is SELF, the solution of \cref{eq:base-fstar} is completely characterized in terms of the conditional expectation (related to the {\em conditional mean embedding} \cite{caponnetto2007,lever2012conditional,song2013kernel}) of $\varphi(y_p)$ given $x$, denoted by $\gstar:\X\times \P \to \hh$, as follows.
\begin{restatable}{lemma}{LCharacterizationFstar}\label{lm:char-fstar}
Let $\loss$ be SELF and $\Z$ compact. Then, the minimizer of \cref{eq:base-fstar} is $\rhox$-a.e. characterized by
\eqal{\label{eq:char-fstar}
\fstar(x) = \argmin_{z \in Z} \sum_{p \in P} \pi(p|x) \scal{\psi(z_p,x_p,p)}{\gstar(x,p)}_\hh, \qquad \gstar(x,p) = \int_\Y \varphi(y_p) d\rho(y|x).
}
\end{restatable}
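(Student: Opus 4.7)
\textbf{Proof plan for Lemma \ref{lm:char-fstar}.}

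The plan is to rewrite the expected risk $\E(f)$ by successively peeling off the outer integral in $x$, the inner integral in $y$, and the sum over parts, and then show that the resulting integrand is minimized pointwise in $x$ by the formula in \eqref{eq:char-fstar}. First, I would disintegrate $\rho$ as $d\rho(x,y) = d\rho(y \mid x)\, d\rhox(x)$ (this uses only that $\Y$ is a Polish space or similar regularity) and apply the SELF-by-parts decomposition \eqref{eq:loss-base-parts}--\eqref{eq:self-simple} to get
\eqals{
\E(f) \;=\; \int_\X \sum_{p \in \P} \pp(p|x)\int_\Y \scal{\psi(f(x)_p, x_p, p)}{\varphi(y_p)}_\hh\, d\rho(y|x)\, d\rhox(x).
}
Next I would pull $\psi(f(x)_p,x_p,p)$ out of the inner integral via the Bochner integral identity $\scal{u}{\int \varphi(y_p) d\rho(y|x)}_\hh = \int \scal{u}{\varphi(y_p)}_\hh d\rho(y|x)$. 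This is justified because $\varphi$ is bounded and $\hh$ is separable, so $\varphi(y_p)$ is Bochner integrable with respect to $\rho(\cdot|x)$ and the definition of $\gstar(x,p)$ makes sense. This yields
\eqals{
\E(f) \;=\; \int_\X F(f(x), x)\, d\rhox(x), \qquad F(z,x) := \sum_{p \in \P} \pp(p|x)\scal{\psi(z_p,x_p,p)}{\gstar(x,p)}_\hh.
}

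Now I would argue that any measurable $f : \X \to \Z$ satisfies $\E(f) \geq \int_\X \inf_{z\in\Z} F(z,x)\, d\rhox(x)$, with equality if $f(x) \in \argmin_{z\in\Z} F(z,x)$ for $\rhox$-a.e.\ $x$. Existence of the argmin at every $x$ follows from compactness of $\Z$ together with continuity of $z \mapsto F(z,x)$ — which I would get from continuity/boundedness of $\psi(\cdot,x_p,p)$, finiteness of $|\P|$, and Cauchy--Schwarz in $\hh$. Combining this with the lower bound and a measurable selection theorem (e.g., Kuratowski--Ryll-Nardzewski) produces a measurable $\rhox$-a.e.\ minimizer, which is precisely the stated $\fstar$.

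The main obstacle I expect is the measurable selection step: one has to verify that the set-valued map $x \rightrightarrows \argmin_{z} F(z,x)$ has measurable graph (or is weakly measurable in the sense required by the selection theorem), which in turn requires joint measurability of $(z,x)\mapsto F(z,x)$ and the $\rhox$-measurability of $x\mapsto\gstar(x,p)$. The latter follows from the Bochner-integral definition of $\gstar$ (conditional expectation of a bounded $\hh$-valued random element). Aside from this, the argument is essentially the standard pointwise Bayes decision reduction, adapted to the part-based SELF factorization. The compactness of $\Z$ is needed only to ensure existence of the inner argmin; the "$\rhox$-a.e." caveat absorbs the usual non-uniqueness of minimizers.
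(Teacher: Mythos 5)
Your proof is correct and follows essentially the same route as the paper: a pointwise Bayes decision reduction combined with the SELF factorization pulled inside the conditional expectation via the Bochner integral. The only surface difference is that the paper packages the measurable-selection/existence step in a single citation to Berge's maximum theorem (following \cite{ciliberto2016}) under compactness of $\Z$ and continuity of $\psi$, whereas you make the selection explicit via Kuratowski--Ryll-Nardzewski; both steps do the same job.
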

\noindent\cref{lm:char-fstar} (proved in \cref{app:derivation}) shows that $\fstar$ is completely characterized in terms of the conditional expectation $\gstar$, which indeed plays a key role in controlling the learning rates of $\fhat$. 
%
%
%
%
In particular, we investigate the learning rates in light of the two assumptions of between- and within-locality introduced in \cref{sec:locality}. To this end, we first study the direct effects of these two assumptions on the learning framework introduced in this work.

\paragraph{The effect of Between-locality} We start by observing that the between-locality between parts of the inputs and parts of the output allows for a refined characterization of the conditional mean $\gstar$.

\begin{lemma}\label{prop:gstar-as-gbs}
  Let $\gstar$ be defined as in \cref{eq:char-fstar}. Under \cref{asm:between-locality}, there exists $\gbs:\pX\to\hh$ such that 
  \eqal{\label{eq:gbs-defintion}
    \gstar(x,p) = \gbs(x_p) \qquad\qquad \forall x\in\X,~ p\in\P.
  }
\end{lemma}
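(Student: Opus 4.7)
The plan is to unpack the definition of $\gstar(x,p)=\int_\Y \varphi(y_p)\,d\rho(y|x)$ using the two separate properties packaged inside \cref{asm:between-locality}: conditional independence of $y_p$ from $x$ given $x_p$, and the invariance of the conditional law of $y_p$ given $x_p$ across $p\in\P$. Each property contributes one reduction of the dependence of $\gstar(x,p)$ on $(x,p)$, and stringing them together leaves only a dependence on $x_p$.

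First, since the integrand $\varphi(y_p)$ is a function of $y$ through $y_p$ alone, I would rewrite the integral as a Bochner integral on $\pY$ against the pushforward law of $y_p$ given $x$. That is, introducing the regular conditional distribution $\rho_p(\cdot\,|\,x)$ of $y_p$ given $x$, we have
\[
\gstar(x,p) \;=\; \int_{\pY} \varphi(\eta)\, d\rho_p(\eta\,|\,x).
\]
Next, the first part of \cref{asm:between-locality}---conditional independence of $y_p$ from $x$ given $x_p$---yields a kernel $\mu_p(\cdot\,|\,\xi)$ on $\pY$ indexed by $\xi\in\pX$ such that $\rho_p(\cdot\,|\,x)=\mu_p(\cdot\,|\,x_p)$ for $\rhox$-a.e.\ $x\in\X$. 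Hence the integral no longer depends on all of $x$ but only on $x_p$.

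Then, the second part of \cref{asm:between-locality}---that the conditional law of $y_p$ given $x_p$ coincides with that of $y_q$ given $x_q$ for every $p,q\in\P$---implies $\mu_p(\cdot\,|\,\xi)=\mu_q(\cdot\,|\,\xi)$ as probability measures on $\pY$ for all $p,q\in\P$ and every $\xi\in\pX$. Denoting this common kernel by $\mu(\cdot\,|\,\xi)$, I can then define
\[
\gbs(\xi) \;=\; \int_{\pY} \varphi(\eta)\, d\mu(\eta\,|\,\xi), \qquad \xi \in \pX,
\]
which is well-defined and takes values in $\hh$ because $\varphi$ is bounded (by the SELF by Parts definition). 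Combining the two reductions gives $\gstar(x,p)=\gbs(x_p)$ for every $p\in\P$ and $\rhox$-a.e.\ $x\in\X$, which is the claim.

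I do not expect a genuine obstacle here: the lemma is essentially a restatement of \cref{asm:between-locality} in terms of the conditional mean embedding $\gstar$. The only minor technicality is the existence of the regular conditional distributions $\rho_p(\cdot\,|\,x)$ and $\mu(\cdot\,|\,\xi)$, but this is standard under the implicit assumption that $\X,\Y,\pX,\pY$ are (e.g.) Polish spaces, which is in line with the standing setup of the paper.
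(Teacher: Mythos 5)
Your proof is correct and follows essentially the same route as the paper's: the paper first recasts \cref{asm:between-locality} in the measure-theoretic form of \cref{asm:interaction-mu-rho} (introducing the $p$-independent kernel $\bar\mu$ on $\pY$ given $\pX$) and then observes that $\gstar(x,p)=\int\varphi\,d\bar\mu(\cdot|x_p)=:\gbs(x_p)$ is immediate from the definitions, which is exactly the two-step unpacking (conditional independence gives dependence on $x_p$ only, invariance across $p$ gives a common kernel) that you carry out explicitly. The only cosmetic difference is that you argue directly from \cref{asm:between-locality} rather than via the paper's pre-packaged \cref{asm:interaction-mu-rho}.
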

\noindent\cref{prop:gstar-as-gbs} above shows that we can learn $\gstar$ by focusing on a ``simpler'' problem, identified by the function $\gbs$ acting only the parts $\pX$ of $\X$ rather than on the whole input directly (for a proof see \cref{lem:general-gstar-as-gbs} in \cref{sec:app-learning-rates-parts}). This motivates the adoption of the restriction kernel \cite{blaschko2008learning}, namely a function $k:(\X\times\P)\times(\X\times\P)\to\R$ such that
\eqal{\label{eq:restriction-kernel}
  k((x,p),(x',q)) = \bar k(x_p,x_q),
}
which, for any pair of inputs $x,x'\in\X$ and parts $p,q\in\P$, measures the similarity between the $p$-part of $x$ and the $q$-th part of $q$ via a kernel $\bar k:\pX\times\pX\to\R$ on the parts of $\X$. The restriction kernel is a well-established tool in structured prediction settings \cite{blaschko2008learning} and it has been observed to be remarkably effective in computer vision applications \cite{lampert2009efficient,vedaldi2009structured,felzenszwalb2010object}.

\paragraph{The effect of Within-locality} We recall that within-locality characterizes the statistical correlation between two different parts of the input (see \cref{asm:within-locality}). 
To this end we consider the simplified scenario where the parts are sampled from the uniform distribution on $\P$, i.e., $\pi(p|x) = \frac{1}{|P|}$ for any $x\in\X$ and $p\in\P$. While more general situations can be considered, this setting is useful to illustrate the effect we are interested in this work. We now define some important quantities that characterize the learning rates under locality,
\eqal{\label{eq:within-locality-measures}
    \scov_{p,q} =  \Expect_{x,x'} \left[ ~\bar k(x_p,x_q)^2 - \bar k(x_p,x'_q)^2 ~ \right], \qquad \mathsf{r} = \sup_{x \in\X, p \in \P} \bar k(x_p,x_p).
  }
 It is clear that the terms $\scov_{p,q}$ and $\msf r$ above correspond respectively to the correlations introduced in \cref{eq:within-locality-qp} and the scale parameter introduced in \cref{eq:asm-within-locality}, with similarity function $S = \bar k^2$.
%
%
%
Let $\fhat$ be the structured prediction estimator in \cref{eq:estimator} learned using the restriction kernel in \cref{eq:restriction-kernel} based on $\bar{k}$ and denote by $\bar{\gg}$ the space of functions $\bar{\gg} = \hh \otimes \bar{\ff}$ with $\bar{\ff}$ the reproducing kernel Hilbert space \cite{aronszajn1950theory} associated to $\bar{k}$. In particular, in the following we will consider the standard assumption in the context of non-parametric estimation \cite{caponnetto2007} on the regularity of the target function, which in our context reads as $\gbs \in \bar{\gg}$. Finally we introduce $\msf{c}_\loss^2 = \sup_{z\in\Z,x\in\X} \frac{1}{|P|} \sum_{p \in P} \|\psi(z,x,p)\|_\hh^2$ to measure the ``complexity'' of the loss~$\loss$ w.r.t. the representation induced by SELF decomposition (\cref{def:self-simple}) analogously to Thm.~2 of \cite{ciliberto2016}.
\begin{restatable}[Learning Rates \& Locality]{theorem}{TRatesImprovedWithParts}\label{thm:rates-improved-with-parts}
   Under \cref{asm:between-locality,asm:within-locality} with $S = \bar{k}^2$, let $\gbs$ satisfying \cref{prop:gstar-as-gbs}, with  $\bar{\msf{g}} = \|\gbs\|_{\bar{\gg}} < \infty$. Let $\mq$ be as in \cref{eq:informal-bound}. When $\la = (\mathsf{r}^2/m + \mq/(|P|n))^{1/2}$, then
  \eqal{
    \mathbb{E}~{\cal E}(\fhat~)- {\cal E}(\fstar)  ~\leq~  12 ~\closs~ \mathsf{\bar{g}} ~ \left(\frac{\msf r^2}{m} + \frac{\msf r^2}{|P|n} + \frac{\mq}{|P|n} \right)^{1/4}.
  }
\end{restatable}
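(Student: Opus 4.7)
The strategy is to pass through a three-stage reduction.

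\textbf{Stage 1: from $\fhat$ to an estimator of the conditional embedding.} Define $\ghat(x,p) := \sum_{j=1}^m \alpha_j(x,p)\,\varphi(\eta_j) \in \hh$. By \cref{lm:char-fstar}, $\fstar$ is the pointwise $\argmin_{z\in\Z}\sum_{p\in\P}\pi(p|x)\langle\psi(z_p,x_p,p),\gstar(x,p)\rangle_\hh$, whereas by \cref{eq:estimator} $\fhat$ is the same $\argmin$ with $\gstar$ replaced by $\ghat$. A standard SELF-style comparison inequality (adapting Thm.~2 of \cite{ciliberto2016} and recalling the definition of $\closs$) then yields
$$\E(\fhat) - \E(\fstar)\ \leq\ 2\,\closs\,\|\ghat - \gstar\|_{\LXPH},$$
so by Jensen it suffices to bound $\Expect\|\ghat-\gstar\|_{\LXPH}^2$.

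\textbf{Stage 2: reduction to KRR on parts via the restriction kernel.} Under \cref{asm:between-locality}, \cref{prop:gstar-as-gbs} gives $\gstar(x,p)=\gbs(x_p)$, and the restriction kernel \cref{eq:restriction-kernel} forces $\ghat$ to depend on $(x,p)$ only through $x_p$. Hence $\ghat$ is exactly the vector-valued KRR estimator of $\gbs$ from the auxiliary sample $(\chi_j,p_j,\varphi(\eta_j))_{j=1}^m$ in $\bar{\gg}=\hh\otimes\bar\ff$. I use the standard operator-theoretic decomposition: let $\Ctilde$ be the population covariance on $\bar\ff$ induced by $\bar k$ under $\pi\rhox$, $\Chat$ its auxiliary empirical counterpart, and $\bar\gg_\lambda=(\Ctilde+\lambda I)^{-1}\Ctilde\,\gbs$. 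Under $\gbs\in\bar{\gg}$ with $\|\gbs\|_{\bar\gg}=\bar{\msf g}$, the bias is classical, $\|\bar\gg_\lambda - \gbs\|_{\LXPH}^2 \leq \lambda\,\bar{\msf g}^2$. The variance contribution splits, via the resolvent identity, into (i) concentration of $\Chat$ around $\Ctilde$ and (ii) concentration of the empirical noise term $\tfrac{1}{m}\sum_j \bar k(\cdot,\chi_{j,p_j})\otimes(\varphi(\eta_j)-\gbs(\chi_{j,p_j}))$ around zero.

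\textbf{Stage 3: sharpening the variance via within-locality --- the main obstacle.} The auxiliary samples are not i.i.d.\ under $\pi\rhox$: many parts can be drawn from the same $x_i$. I therefore introduce the intermediate, partially averaged covariance $\widetilde C_n := \tfrac{1}{n|P|}\sum_{i,p}\bar k(\cdot,x_{i,p})\otimes \bar k(\cdot,x_{i,p})$ and split $\Chat-\Ctilde = (\Chat-\widetilde C_n) + (\widetilde C_n - \Ctilde)$. Conditional on $(x_i)_{i=1}^n$, $\Chat-\widetilde C_n$ is a centered average of $m$ i.i.d.\ bounded Hilbert--Schmidt terms, so a Bernstein inequality in Hilbert spaces gives $\Expect\|\Chat-\widetilde C_n\|_{\HS}^2 \lesssim \mathsf{r}^4/m$. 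For $\widetilde C_n - \Ctilde$ I expand the HS-norm second moment: cross-terms across distinct $i$ vanish by independence, and the remaining per-sample variance is $\tfrac{1}{|P|^2}\sum_{p,q\in P}\scov_{p,q}$ with $\scov_{p,q}$ exactly as in \cref{eq:within-locality-measures}. Invoking \cref{asm:within-locality} with $S=\bar k^2$ bounds the double sum by $\mq/|P|$, so $\Expect\|\widetilde C_n-\Ctilde\|_{\HS}^2 \lesssim \mq/(|P|n)$, plus a lower-order $\mathsf{r}^4/(|P|n)$ term from the diagonal $p=q$. The noise piece admits an identical two-layer treatment. Plugging these estimates into the KRR bias/variance bound yields
$$\Expect\|\ghat-\gbs\|_{\LXPH}^2\ \leq\ c\,\Big(\lambda\,\bar{\msf g}^2 + \tfrac{\mathsf{r}^2}{m\lambda} + \tfrac{\mathsf{r}^2}{|P|n\lambda} + \tfrac{\mq}{|P|n\lambda}\Big).$$
Balancing bias and dominant variance with $\lambda = (\mathsf{r}^2/m + \mq/(|P|n))^{1/2}$, applying Jensen, and combining with the Stage~1 comparison inequality gives the claimed $(\cdot)^{1/4}$ rate with prefactor $12\,\closs\,\bar{\msf g}$. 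The main obstacle throughout is Stage~3: disentangling $n$- and $m$-randomness while correctly matching the empirical-process second moment to the locality quantity $\scov_{p,q}$ (hence to $\mq$), so that the naive $\mathsf{r}^2/n$ variance is converted into the $|P|$-accelerated $\mq/(|P|n)$.
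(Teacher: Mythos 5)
Your proposal is correct in outline and arrives at the right bound, but the internal route differs from the paper's in two places worth comparing. Stage~1 (the comparison inequality) is the same; the paper isolates this as Theorem~\ref{thm:comparison}, bounding $\E(\fhat)-\E(\fstar)\le \closs\,\|\ghat-\gstar\|_{\LXPH}$, and then runs the entire variance analysis through the intermediate quantities $\beta_1=\|\Chat-C\|$, $\beta_2=\|\Bhat-B\|_{\HS}$ and the analytic decomposition of Theorem~\ref{thm:analytic-decomposition}.

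Your Stage~2 pushes the restriction-kernel isometry to the front and treats $\ghat$ as vector-valued KRR directly in $\bar\gg=\hh\otimes\bar\ff$ over the pushforward measure on $\pX$. The paper instead keeps the estimator in $\ff$ (the RKHS of $k$ on $\X\times\P$) and translates the norm assumption $\gbs\in\bar\gg$ into $\gstar\in\hh\otimes\ff$ with equal norm via Lemma~\ref{lm:gstar-in-F-given-gbar-in-F}. These are equivalent; your view is arguably cleaner but requires noting explicitly that $\|\cdot\|_{\LXPH}$ factors through $x_p$ under the restriction kernel, which is essentially that lemma.

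Your Stage~3 is the genuine methodological difference, and it is correct. You condition on the training sample $(x_i)_{i=1}^n$ and split $\Chat-C=(\Chat-\widetilde C_n)+(\widetilde C_n-C)$, using that $\Expect[\Chat\,|\,(x_i)]=\widetilde C_n$, so the cross term vanishes and the decomposition is in fact an exact Pythagorean split (no factor of $2$ needed). The $n$-dependent piece has per-sample variance $\frac{1}{|P|^2}\sum_{p,q}\scov_{p,q}$, bounded via Assumption~\ref{asm:within-locality} by $\mq/|P|$. The paper (Lemma~\ref{lem:expectation-bound-c-chat}) instead computes the second moment of $\Chat-C$ directly by expanding the double sum over $j,h\in\{1,\dots,m\}$ and case-analyzing $j=h$ vs.\ $j\ne h$, then $i_j=i_h$ vs.\ $i_j\ne i_h$; this yields exactly $\tfrac{Q_1}{m}+\tfrac{m-1}{m}\tfrac{\tr(\mathfrak C)}{n}$ with $\tr(\mathfrak C)=\mathsf q$ (Lemma~\ref{lem:characterization-Cfrak}, Corollary~\ref{cor:q-as-sigma-p-q}). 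Both routes produce the same leading terms, and it is worth noting the paper itself uses your conditioning/triangle-split strategy in the universal-consistency section (Lemma~\ref{lemma:bounding-C-Chat}), so your argument is a transplantation of that idea into the rate proof. Two small points to tidy: with the paper's convention $\mathsf r=\sup_{x,p}\bar k(x_p,x_p)$, the conditional variance of $\Chat-\widetilde C_n$ is $\lesssim\mathsf r^2/m$, not $\mathsf r^4/m$; and for the noise term you should also record the analogue of Lemma~\ref{lm:frakB-propto-frakC} ($\tr(\mathfrak B)\le\|\mathsf G\|^2\tr(\mathfrak C)$), which is what lets the $\Bhat$-side variance inherit the $\mq/(|P|n)$ scaling rather than contributing an unimproved $O(1/n)$ term.
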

\noindent The proof of the result above can be found in \cref{sec:proof-rates-locality}. 
We can see that between- and within-locality allow to refine (and potentially improve) the bound of $n^{-1/4}$ from structured prediction without locality \cite{ciliberto2016} (see also \cref{thm:rates} in \cref{sec:app-learning-rates}). In particular, we observe that the adoption of the restriction kernel in \cref{thm:rates-improved-with-parts} allows the structured prediction estimator to leverage the within-locality, gaining a benefit proportional to the magnitude of the parameter $\gamma$. Indeed $\mathsf{r}^2 \leq \mq \leq \mathsf{r}^2 |P|$ by definition. More precisely, if $\gamma = 0$ (e.g., all parts are identical copies) then $\mq = \mathsf{r}^2 |P|$ and we recover the rate of $O(n^{-1/4})$ of \cite{ciliberto2016}, while if $\gamma$ is large (the parts are almost not correlated) then $\mq = \mathsf{r}^2$ and we can take $m\propto n|\P|$ achieving a rate of the order of $O\big( (n|\P|)^{-1/4}\big)$. We clearly see that depending on the amount of within-locality in the learning problem, the proposed estimator is able to gain significantly in terms of finite sample bounds.

\section{Empirical Evaluation}\label{sec:exp}

We evaluate the proposed estimator on simulated as well as real data. We highlight how locality leads to improved generalization performance, in particular when only few training examples are available.



\begin{figure}[t]
    \centering
    \includegraphics[height=12em]{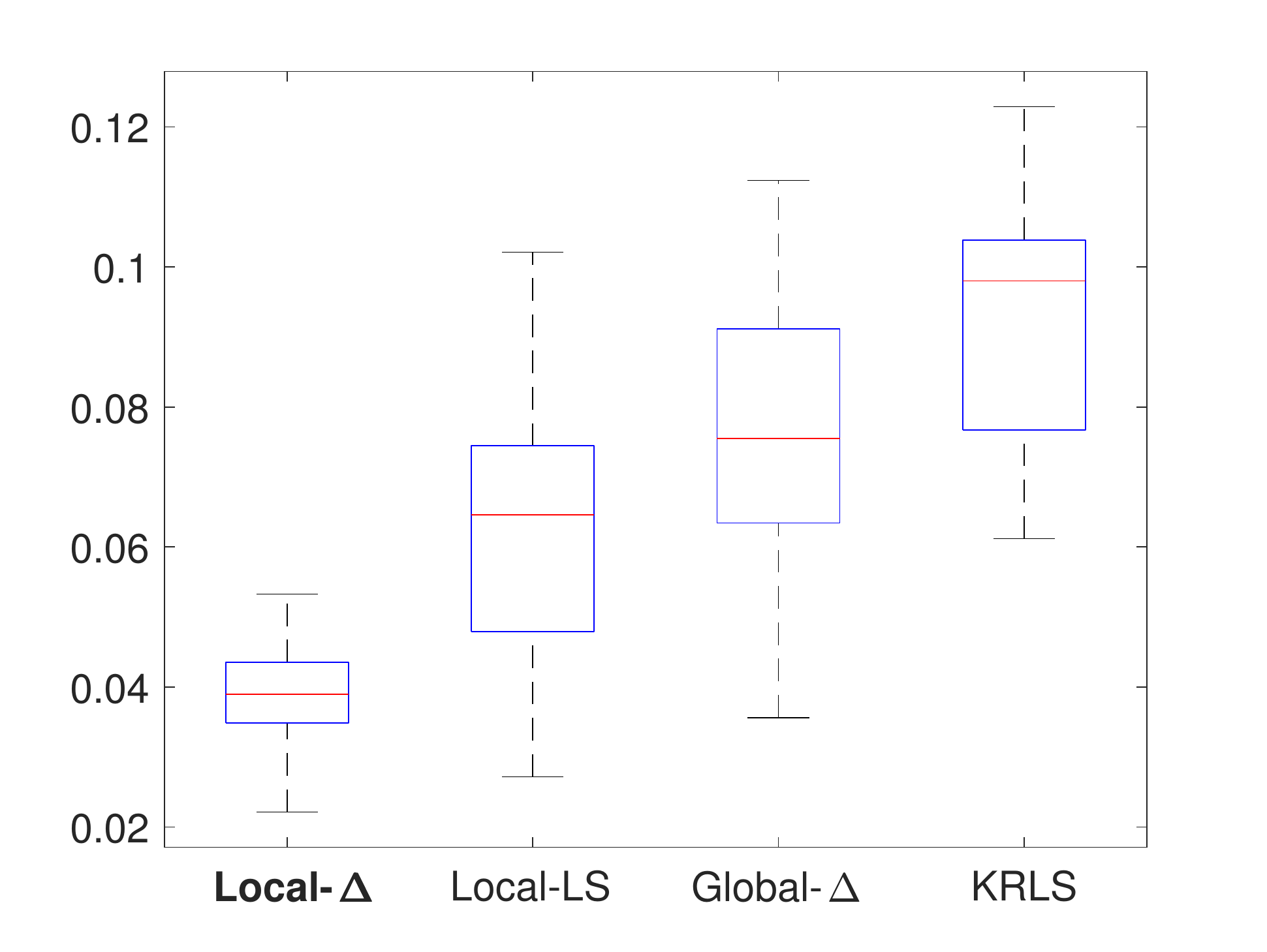}\quad
    \includegraphics[height=14em]{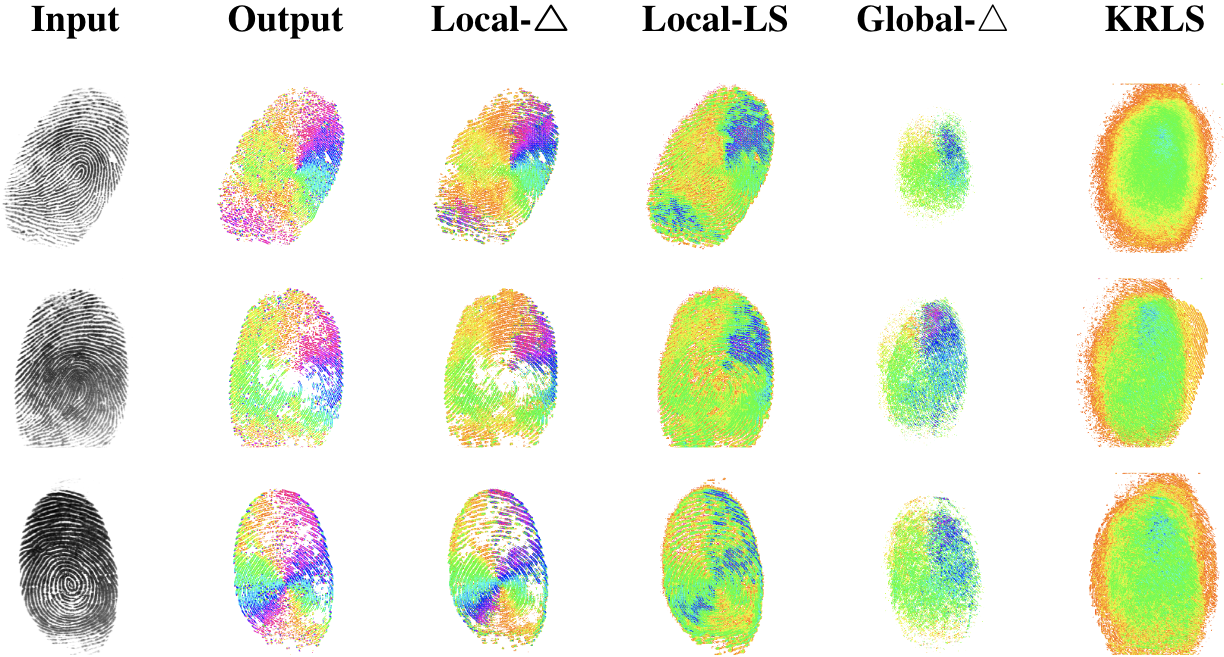}
    \caption{Learning the direction of ridges in fingerprint images. (Left) Examples of ground truths and predictions with pixels' color corresponding to the local direction of ridges. (Right) Test error according to $\loss$ in \cref{eq:fingerprint-loss-parts}. \label{fig:fingerprints}\label{fig:MSE}}
\end{figure}

\paragraph{Learning the Direction of Ridges for Fingerprint}
Similarly to \cite{steinke2010}, we considered the problem of detecting the pointwise direction of ridges in a fingerprint image on the FVC04 dataset\footnote{\url{http://bias.csr.unibo.it/fvc2004}, \texttt{DB1\_B}. The output is obtained by applying $7\times 7$ Sobel filtering.} comprising $80$ grayscale $640\times 480$ input images depicting fingerprints and corresponding output images encoding in each pixel the local direction of the ridges of the input fingerprint as an angle $\theta\in[-\pi,\pi]$. A natural loss function is the average pixel-wise error $\sin(\theta-\theta')^2$ between a ground-truth angle $\theta$ and the predicted $\theta'$ according to the geodesic distance on the sphere. To apply the proposed algorithm, we consider the following representation of the loss in term of parts: let $P$ be the collection of patches of dimension $20\times 20$ and equispaced each $5\times 5$ pixels\footnote{For simplicity we assume ``circular images'', namely $[x]_{i, j} =  [x]_{(i \! \mod 640), (j \! \mod 480)}$.} so that each pixel belongs exactly to $16$ patches. For all $z,y\in\R^{640\times 480}$, the average pixel-wise error is
\eqal{\label{eq:fingerprint-loss-parts}
\loss(z,y) = \frac{16}{|P|} \sum_{p \in P} \L(z_p, y_p),\qquad\text{with}\qquad 
\L(\zeta,\eta)=\frac{1}{20\times 20} \sum_{i,j=1}^{20} \sin([\zeta]_{ij}, [\eta]_{ij})^2,
} 
where $\zeta = z_p,\eta=y_p \in [-\pi,\pi]^{20\times 20}$ are the extracted patches and  $[\cdot]_{ij}$ their value at pixel $(i,j)$.




We compared our approach using $\loss$ ({\em Local-$\loss$}) or least-squares ({\em Local-LS}) with competitors that do not take into account the local structure of the problem, namely standard vector-valued kernel ridge regression ({\em KRLS}) \cite{caponnetto2007} and the structured prediction algorithm in \cite{ciliberto2016} with $\loss$ loss ({\em $\loss$-Global}). We used a Gaussian kernel on the input (for the local estimators the restriction kernel in \cref{eq:restriction-kernel} with $\bar k$ Gaussian). We randomly sampled $50/30$ images for training/testing, performing $5$-fold cross-validation on $\lambda$ in $[10^{-6},10]$ (log spaced) and the kernel bandwidth in $[10^{-3},1]$. For Local-$\loss$ and Local-LS we built an auxiliary set with $m=30 000$ random patches (see \cref{sec:algorithm}), sampled from the $50$ training images.

\paragraph{Results} \cref{fig:fingerprints} (Left) reports the average prediction error across $10$ random train-test splits. We make two observations: first, methods that leverage the locality in the data are consistently superior to their ``global'' counterparts, supporting our theoretical results in \cref{sec:theory} that the proposed estimator can lead to significantly better performance, in particular when few training points are available. Second, the experiment suggests that choosing the right loss is critical, since exploiting locality without the right loss (i.e., Local-LS in the figure) generally leads to worse performance. The three sample predictions in \cref{fig:fingerprints} (Right) provide more qualitative insights on the models tested. In particular while both locality-aware methods are able to recover the correct structure of the fingerprints, only combining this information with the loss $\loss$ leads to accurate recovery of the ridge orientation.

\begin{figure}[t]
    \begin{minipage}[t]{0.45\columnwidth}
        \centering
    	\includegraphics[height=6cm,trim={1cm 0 1em 0},clip]{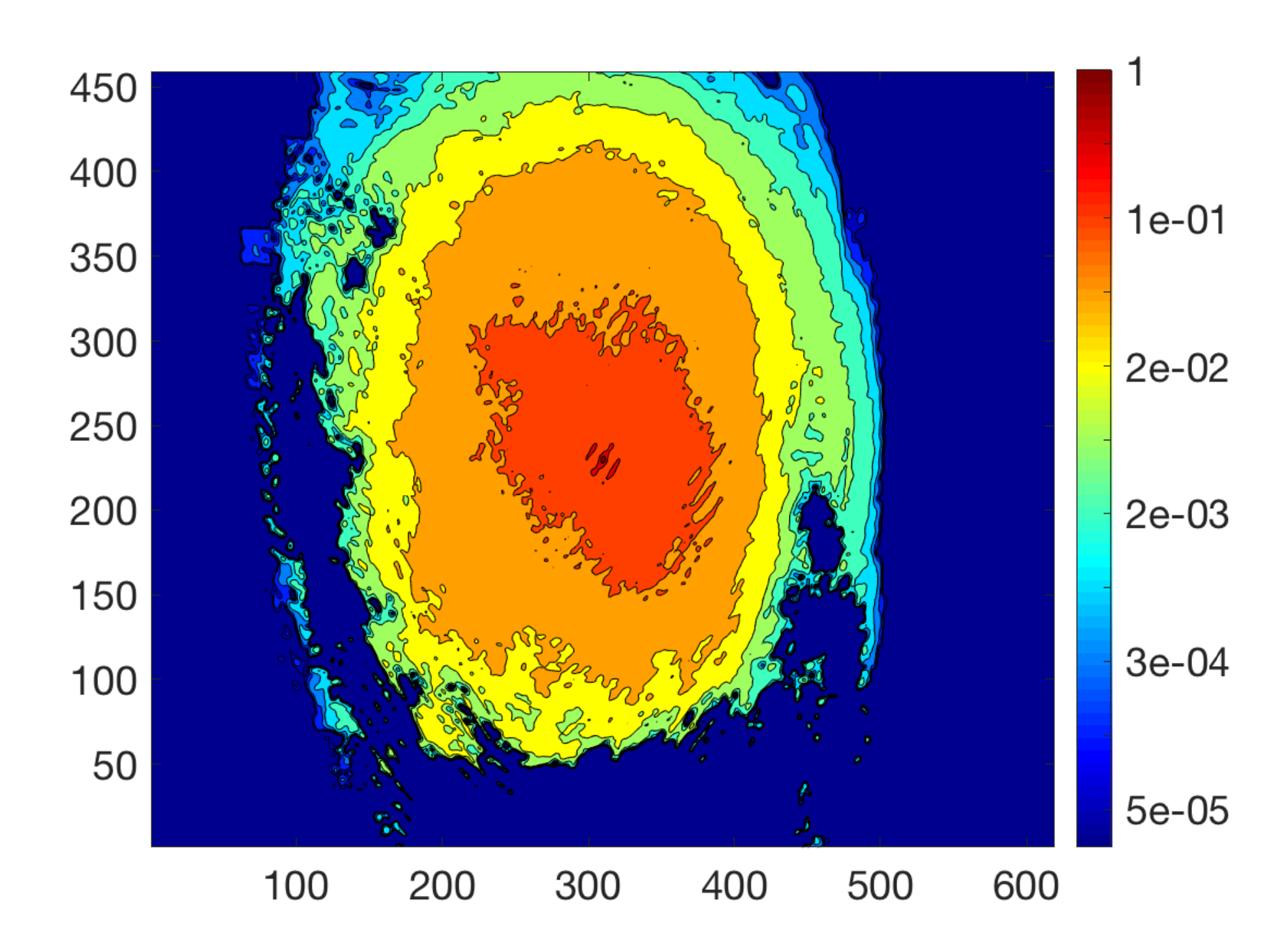}
    	\vspace*{-.2cm}
		\caption{ Empirical estimation of {\em within-locality} for the central patch of the fingerprints dataset.\label{fig:within-fingerprints}}
    \end{minipage}
    ~~~~
    \begin{minipage}[t]{0.5\columnwidth}
        \centering
        \includegraphics[height=5.5cm,trim={1cm 0cm 1.75cm 1cm},clip]{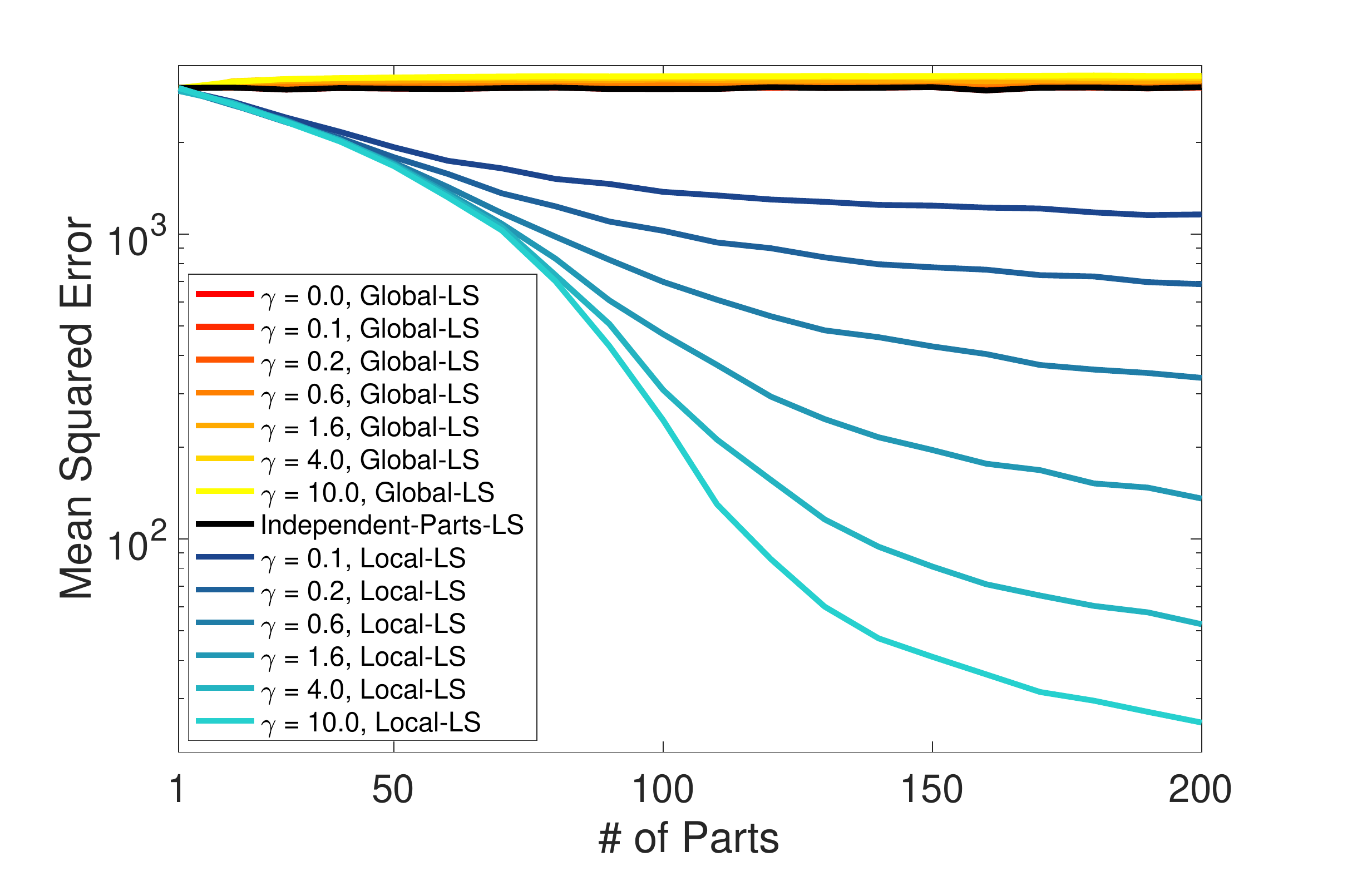}
        	\vspace*{-.2cm}
        \caption{Effect of within-locality w.r.t.~$\gamma$ and $|\P|$: {\em Global-LS} vs.~{\em IndependentParts-LS} vs.~{\em Local-LS} (ours). \label{fig:simulation-varying-parts} }
    \end{minipage}
    \vspace*{-.2cm}
\end{figure}

\paragraph{Within-locality} In \cref{fig:within-fingerprints} we visualize the (empirical) within-locality of the central patch $p$ for the fingerprint dataset. The figure depicts $\scov_{p,q}$ (defined in \cref{eq:within-locality-measures}) for $q \in P$, with the $(i,j)$-th pixel in the image corresponding to $\scov_{p,q}$ with $q$ the $20\times20$ patch centered in $(i,j)$. The fast decay of these values as the distance from the central patch $p$ increase, suggests that within-locality holds for a large value of $\gamma$, possibly justifying the good performance exhibited by ({\em Local-$\loss$}) in light of \cref{thm:rates-improved-with-parts}.

\paragraph{Simulation: Within-Locality}
We complement our analysis with synthetic experiments where we control the ``amount'' of within-locality $\gamma$. We considered a setting where input points are vectors $x\in\R^{k|\P|}$ comprising $|\P|$ parts of dimension $k=1000$. Inputs are sampled according to a normal distribution with zero mean and covariance $\Sigma(\gamma) = M(\gamma) \otimes I$, where $M(\gamma)\in\R^{|\P|\times|\P|}$ has entries $M(\gamma)_{pq} = \textnormal{e}^{-\gamma d(p, q)}$ and $d(p, q) = |p-q|/|P|$. By design, as $\gamma$ grows $\scov$ varies from being rank-one (all parts are identical copies) to diagonal (all parts are  independently sampled). 


To isolate the effect of within-locality on learning, we tested our estimator on a linear multitask (actually vector-valued) regression problem with least-squares loss $\loss$. We generated datasets $(x_i,y_i)_{i=1}^n$ of size $n=100$ for training and $n=1000$ for testing, with $x_i$ sampled as described above and $y_i = w^\top x_i + \epsilon$ with noise $\epsilon\in\R^{k|\P|}$ sampled from an isotropic Gaussian with standard deviation $0.5$. To guarantee {\em between}-locality to hold, we generated the target vector $w = [\bar w,\dots,\bar w]\in\R^{k|\P|}$ by concatenating copies of a $\bar w\in\R^k$ sampled uniformly on the radius-one ball. We performed regression with linear restriction kernel on the parts/subvectors ({\em Local-LS}) on the ``full'' auxiliary dataset $([x_i]_p, [y_i]_p)$ with $1\leq i\leq n$ and $1\leq p \leq|\P|$, and compared it with standard linear regression ({\em Global-LS}) on the original dataset $(x_i, y_i)_{i=1}^n$ and linear regression performed independently for each (local) subdataset $([x_i]_p, [y_i]_p)_{i=1}^n$ ({\em IndependentParts - LS}). The parameter $\lambda$ was chosen by hold-out cross-validation in $[10^{-6},10]$ (log spaced).

\cref{fig:simulation-varying-parts} reports the (log scale) mean square error (MSE) across $100$ runs of the two estimators for increasing values of $\gamma$ and $|\P|$. In line with \cref{thm:rates-improved-with-parts}, when $\gamma$ and $|\P|$ are large, {Local-LS} significantly outperforms both $i)$ Global-LS, which solves one single problem jointly and does not benefit within-locality, and $ii)$ IndependentParts-LS, which is insensitive to the between-locality across parts and solves each local prediction problem in isolation. For a smaller $\gamma$, such advantage becomes less prominent even when the number of parts is large. This is expected since for $\gamma=0$ the input parts are extremely correlated and there is no within locality that can be exploited.

\section{Conclusion}

We proposed a novel approach for structured prediction in presence of locality in the data. Our method builds on \cite{ciliberto2016} by incorporating knowledge of the parts directly within the learning model. We proved the benefits of locality by showing that, under a low-correlation assumption on the parts of the input (within locality), the learning rates of our estimator can improve proportionally to the number of parts in the data. To obtain this result we additionally introduced a natural assumption on the conditional independence between input-output parts (between locality), which provides also a formal justification for adoption of the so-called ``restriction kernel'', previously proposed in the literature, as a mean to lower the sample complexity of the problem. Empirical evaluation on synthetic as well as real data shows that our approach offers significant advantages when few training points are available and leveraging structural information such as locality is crucial to achieve good prediction performance. We identify two main directions for future work: $1)$ consider settings where the parts are unknown (or ``latent'') and need to be discovered/learned from data; $2)$ Consider more general locality assumptions. In particular, we argue that \cref{asm:within-locality} (WL) might be weakened to account for different (but related) local input-output relations across adjacent parts.


\section*{Acknowledgments}
We acknowledge support from the European Research Council (grant SEQUOIA 724063).






\bibliographystyle{plain}
\bibliography{biblio}
%

%

\newpage 








\crefalias{section}{appendix}
\crefalias{subsection}{subappendix}
\crefalias{subsubsection}{subsubappendix}


\appendix

\section*{\huge Appendix}

In this appendix we provide further background to the main discussion and results in the main sections of the current work. In particular: 
\begin{itemize}
    \item \cref{sec:app-generalized-model} introduces a generalization of the proposed framework to account for a larger family of structured prediction problems where locality can be exploited.
    \item \cref{sec:app-notation} introduces the notation and auxiliary results that will be useful to prove the results discussed in this work. 
    \item \cref{sec:app-derivation} discusses the derivation of the structured prediction estimator proposed and studied in this work.
    \item \cref{sec:app-comparison} extends the Comparison inequality for the SELF estimator in \cite{ciliberto2016} to the case where the locality of the problem can be exploited.
    \item \cref{sec:app-analytical} provides an analytical decomposition of a bound for the excess risk of the proposed estimator that is then used to prove the learning rates of the proposed estimator without and with parts (respectively \cref{sec:app-learning-rates,sec:app-learning-rates-parts}) and also the universal consistency (\cref{sec:app-consistency}).
    \item \cref{sec:app-equivalence-self} compares the proposed framework with structured prediction (without parts) in \cite{ciliberto2016}. 
    \item \cref{sec:app-algorithm} provides more details on the problem of learning and evaluating the estimator proposed in this work. 
    \item \cref{sec:example-loss} discusses in more detail loss functions considered in the literature that can be decomposed into ``parts''.
\end{itemize}

\paragraph{An overview of the main result in \cref{thm:rates-improved-with-parts}}
For the sake of clarity, before delving in the discussion below, we discuss here how the main result of this work, namely \cref{thm:rates-improved-with-parts}, is situated within the appendix. While the formal proof is given in \cref{sec:proof-rates-locality}, here we highlight and reference the key results used to this purpose. The main analysis in this sense can be found in \cref{sec:app-analytical,sec:app-learning-rates,sec:app-learning-rates-parts}. In particular, the proof hinges on three main components: 

\begin{enumerate}
    \item  We begin by studing the conditional expectation $\gstar$ introduced in \cref{lm:char-fstar} in terms of an estimator $\ghat$. In \cref{sec:app-derivation,sec:app-comparison} we prove that this estimator is tightly connected to our structured prediction estimator in \cref{eq:estimator} according to the comparison inequality 
    \eqal{\label{eq:comparison-inequality}
        \E(\fhat~) - \E(\fstar) \leq \closs \|\ghat - \gstar\|_{\LXPH}.
    }
    proved in \cref{thm:comparison} in \cref{sec:app-comparison}.

    \item The inequality above suggest to focus on $\|\ghat - \gstar\|_{\LXPH}$. We do this by providing an analytic decomposition for this quantity in \cref{thm:analytic-decomposition} in \cref{sec:app-analytical}. 

    \item Finally, in \cref{sec:app-learning-rates} we consider how each term in such analytical decomposition can be controlled in expectation with respect to a training dataset randomly sampled from the underlying distribution $\rho$. 
\end{enumerate}

Putting together all these results we are able to characterize the excess risk bounds for our estimator $\fhat$ in the {\em general setting where locality does not necessary hold}, which is reported below and proved at the end of \cref{sec:bound-in-expect}. 

\begin{restatable}{theorem}{TRates}\label{thm:rates}
  Let $\fhat$ as in \cref{eq:estimator} with i.i.d. training set and auxiliary dataset sampled according to \cref{alg:self-learning}. Let $\loss$ be SELF, $\Z$ compact, $g^* \in \gg$ and $\la \geq (\mathsf{r}^2/m + \mathsf{q}/n)^{1/2}$. Then
  \eqal{
    \Expect~\left[\E(\fhat~)- \E(\fstar)\right] ~\leq~ 12 ~\closs ~\mathsf{g} ~\left(\frac{\mathsf{r}^2}{\la m} + \frac{\mathsf{q}}{\la n} + \la\right)^{1/2}.
  }
\end{restatable}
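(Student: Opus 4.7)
The plan is to chain together the three ingredients that the appendix has been organizing: the comparison inequality of \cref{sec:app-comparison}, the analytic decomposition of \cref{sec:app-analytical}, and the in-expectation variance estimates of \cref{sec:app-learning-rates}. First I would invoke the comparison inequality \eqref{eq:comparison-inequality} to replace the excess risk of $\fhat$ by an $L^2$-type estimation error for the conditional expectation,
\[
\E(\fhat~) - \E(\fstar) \;\leq\; \closs \, \|\ghat - \gstar\|_{\LXPH},
\]
which is valid because $\loss$ is SELF and $\Z$ is compact so that \cref{lm:char-fstar} characterizes $\fstar$ in terms of $\gstar$. All subsequent work then targets $\|\ghat - \gstar\|_{\LXPH}$, with $\ghat$ the kernel ridge regression estimator implicitly defined by the coefficients $\alpha$ computed in \cref{alg:self-learning}.

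Second, I would apply the analytic decomposition of \cref{thm:analytic-decomposition} to write $\|\ghat - \gstar\|_{\LXPH}$ as the sum of three operator-theoretic terms: (i) an approximation (bias) term of the form $\|(\Cla)^{1/2}(\gstar - \gstar_\la)\|$ which, under the source condition $\gstar \in \gg$ with $\mathsf g = \|\gstar\|_{\gg}$, is controlled by $\sqrt{\la}\,\mathsf g$; (ii) a variance term arising from the $m$ auxiliary samples $(\chi_j,p_j,\eta_j)$ conditioned on the training dataset; (iii) a variance term arising from the $n$ training points $(x_i,y_i)$. The two variance terms are separated because \cref{alg:self-learning} performs two nested samplings (training data, then part-sampling), and the decomposition isolates their effects on the regularized empirical covariance and cross-covariance operators.

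Third, I would take expectations in the order suggested by this nesting. Conditioning on the training set, standard concentration for Hilbert-space valued random variables (applied to the auxiliary empirical operators, with boundedness constant $\mathsf r^2$) yields an in-expectation bound of order $\mathsf r^2/(\la m)$ for term (ii). Releasing the conditioning and using the analogous concentration for the training-level operators (whose relevant second-moment quantity is encoded in $\mathsf q$) yields an in-expectation bound of order $\mathsf q/(\la n)$ for term (iii). Summing the three contributions gives
\[
\Expect\, \|\ghat - \gstar\|_{\LXPH} \;\leq\; C \,\mathsf g \left(\tfrac{\mathsf r^2}{\la m} + \tfrac{\mathsf q}{\la n} + \la\right)^{1/2},
\]
for a small absolute constant $C$; combining with the comparison inequality and the condition $\la \geq (\mathsf r^2/m + \mathsf q/n)^{1/2}$ (which ensures $\la$ dominates the effective-dimension terms so concentration is not spoiled) yields the stated bound with constant $12\,\closs\,\mathsf g$.

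The main obstacle will be the second step: carefully tracking the two independent randomness sources through the operator decomposition, since the auxiliary dataset depends on the training dataset. This requires conditioning on the training sample when concentrating the auxiliary-level operators and then handling the remaining training-level operators under the outer expectation, all while keeping uniform control in terms of $\mathsf r$ and $\mathsf q$ and verifying that the assumption $\la \geq (\mathsf r^2/m + \mathsf q/n)^{1/2}$ indeed places us in the regime where the non-asymptotic bounds collapse to the clean form above.
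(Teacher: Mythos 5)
Your high-level plan --- comparison inequality, then analytic decomposition, then moment bounds --- is the same skeleton the paper uses, and the final bookkeeping matches. But you misdescribe the middle step, and your proposed way of computing the expectations is a genuinely different route. The three terms in \cref{thm:analytic-decomposition} are \emph{not} ``bias + auxiliary variance + training variance'': they are $A_1 = \|S\Chatla^{-1}(\Bhat-B)\|$, $A_2 = \|S(\Chat_\la^{-1}-C_\la^{-1})B\|$, and the pure bias $A_3 = \la\|L_\la^{-1}G\|_\HS$. Both $\beta_1 = \|C-\Chat\|$ and $\beta_2 = \|\Bhat - B\|_\HS$ mix the training and auxiliary randomness, since $\Chat$ and $\Bhat$ are averages over the $m$ indices $i_j$ drawn \emph{from} the training set. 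The $m$-versus-$n$ split is carried out only afterwards, in \cref{lem:expectation-bound-c-chat,lem:expectation-bound-b-bhat}, by expanding $\Expect\|\Chat-C\|^2_\HS = \frac{1}{m^2}\sum_{j,h}\Expect\tr(\zeta_{x_{i_j},p_j}\zeta_{x_{i_h},p_h})$ and doing a case analysis ($j=h$; $j\neq h$ with $i_j = i_h$; $j\neq h$ with $i_j\neq i_h$) that produces exactly $\frac{Q_1}{m} + \frac{m-1}{m}\frac{\tr(\mathfrak{C})}{n}$, with $\tr(\mathfrak{C}) = \mathsf q$ by \cref{lem:characterization-Cfrak}. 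Your alternative --- condition on the training set, concentrate the auxiliary sample around the intermediate operator $\Ctilde$, then control $\|\Ctilde - C\|$ --- is workable and gives the same second moment ($\Expect_{\rm aux}\|\Chat-\Ctilde\|^2 \leq \mathsf r^2/m$ plus $\Expect_{\rm train}\|\Ctilde - C\|^2 = \tr(\mathfrak{C})/n$, cross term vanishing), and it is in fact the strategy the paper uses for the high-probability \emph{consistency} argument in \cref{sec:universality-proof}; for the in-expectation bound the paper prefers the direct double-sum. Beware, however, of the phrase ``standard concentration'': a worst-case Pinelis-type bound on the training-level operators would only give $\mathsf r^2/n$, not $\mathsf q/n$; you must compute the second moment exactly to see the refined $\tr(\mathfrak{C})$ constant. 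Finally, you omit \cref{lm:frakB-propto-frakC}, which uses $g^*\in\gg$ to show $\tr(\mathfrak B)\leq\|\mathsf G\|^2\tr(\mathfrak C)$; without it the $\mathsf q$ constant would not propagate to the $\beta_2$ fluctuation term, and you would lose the clean $\mathsf g(\mathsf r^2/(\la m) + \mathsf q/(\la n) + \la)^{1/2}$ form. Once these are added in, the step from \cref{thm:rates-general} to \cref{thm:rates} is, as you say, immediate.
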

\noindent Here we have introduced the quantity 
\eqalsplit{
  \mathsf{q} = \Expect_{x,x'} \Expect_{p,q|x, r|x'} ~~\scov_{p,q}(x,x') \qquad 
  \scov_{p,q}(x,x') = \left[k((x,p),(x,q))^2 - k((x,p),(x',r))^2\right],
}
where $\Expect_{p,q|x} [\cdot]$ is a shorthand for $\sum_{p,q \in P} \pi(p|x) \pi(q|x) [\cdot]$ (analogously for $\Expect_{r|x}$). It can be seen that this quantity allows to capture and leverage the within-locality assumption. In particular, it will allow us to quantify explicitly the advantages of using our locality-aware estimator.

The result above explicitly shows how the quantities measuring the within locality do affect the constants in the learning rates of the proposed estimator. By combining \cref{thm:rates} with \cref{asm:between-locality,asm:within-locality} and leveragin the locality properties of the restriction kernel introduced in \cref{eq:restriction-kernel}, we are then able to prove \cref{thm:rates-improved-with-parts} as desired. As mentioned, the details of this proof are reported in \cref{sec:proof-rates-locality}.

\section{Generalization of the Model by Parts}\label{sec:app-generalized-model}

In this section we introduce a slight generalization of the model considered in this work and that will be used in the rest of the appendixes. In particular we consider the case where $P$ is not necessarily finite and, possibly, the observed parts of $y$ are not necessarily deterministic. 

\subsection{When the Parts don't correspond exactly}\label{sec:parts-dont-correspond}

In general, $y_p$ (the $p$-th part of $y$) could not be univocally determined given $p\in\P$. For instance, consider a speech recognition problem where the goal is to predict the sentence pronounced by a speaker from an audio signal. In this setting the input space $\X$ is the set of all audio signals and $\Y = \Z$ is the set of all strings that can be produced in the speaker's language. In principle, for any part $x_p$ of an input signal $x\in\X$ it is possible to identify the corresponding part $y_p$ of the target string. In practice, such a procedure would require significant preprocessing (e.g. using hidden markov models) and would however not be guaranteed to be error-free.

In general, given an input $x\in\X$ a label $y\in\Y$ and a part $p\in\P$, observations for the $p$-th part of $y$ can be distributed according to some probability $\mu(w|y,x,p)$ over the set $\pY$ of parts of $\Y$. A possible way to model this situation is to consider a characterization of $\L$ in terms of a further function $\ell:\Z \times \pY \times \X \times \P \to \R$ such that
\eqal{\label{eq:loss-parts-with-mu}
\loss(z,y|x) &= \int_P L(z,y|x,p) d\pi(p|x),~~\textrm{where}~~\\
L(z,y|x,p) &= \int_\pY \ell(z,\eta|x,p) ~ d\mu(\eta|y,x,p). 
}
In this sense, the distribution $\mu$ can be interpreted as characterizing how likely it is for the part $p$ of an input $x$ with associated label $y$ to correspond to $\eta\in\pY$. It is possible to recover the standard characterization by selecting $\mu$ to be the Dirac de
$$\mu(\eta|y,x,p) = \delta(\eta, y_p).$$

\begin{remark}[Connection with standard Structured Prediction]
Note that the loss above generalizes the standard structured prediction framework as in \cite{tsochantaridis2005,nowozin2011,ciliberto2016}. Indeed, it is always possible to formulate a structured prediction loss $\loss$ in the proposed setting, by taking $\ell = \loss$ and $P = \{0\}$, $\pY = \Y$, $\pp(0|x) = 1$ and $\mu(w|y,x,0) = \delta_{y}$. However, if there exists a non-trivial characterization of $\loss$ in terms of these objects, then the algorithm proposed in this work is able to exploit this additional structure to achieve improved generalization performance. 
\end{remark}
Here we give the extended defintion of the SELF assumption, given the definition of loss in \cref{eq:loss-parts-with-mu}.

\begin{definition}[SELF by Parts (Extended)]\label{def:self}
  A function $\loss:\Z\times\Y\times\X\to\R$ is a {\em Structure Encoding Loss Function (SELF) by Parts} if it admits a factorization in the form of \eqref{eq:loss-parts-with-mu} with functions $\ell:Z\times\pY\times \X\times P \to\R$, and there exists a separable Hilbert space $\hh$ and two bounded continuous maps $\psi:\pZ\times\pX\times\P\to\hh$, $\varphi:\pY\to\hh$ such that for any $z \in Z$, $\eta \in \pY$, $x \in \X$, $p \in \P$
  \eqal{\label{eq:self}
    \ell(z,\eta|x,p) = \scal{\psi(z,x,p)}{\varphi(\eta)}_\hh.
  }
\end{definition}

\begin{remark}[\cref{def:self} is more general than \cref{def:self-simple}]\label{rem:defs-SELF}
Given a loss $\loss$ satisfying \cref{def:self-simple} for some $\psi', \phi, \hh'$, then it satisfy \cref{def:self}, with $\psi(z,x,p) = \psi'(z_p,z_p,p)$, with $\phi = \phi'$ with $\hh = \hh'$.
\end{remark}

\section{Notation and Main Definitions}\label{sec:app-notation}\label{sec:notation}
Let $\LXP$ be the Lebesgue function space with norm 
$$
\|\beta\|_{\LXP}^2 = \int_{\X\times\P} \beta(x,p)^2 ~ d\pp(p|x)d\rhox(x)
$$
with $\beta:\X\times\P\to\R$.
Analogously, $\LXPH$ be the Lebesgue function space with norm 
$$
\|\beta\|_{\LXPH}^2 = \int_{\X\times\P} \|\beta(x,p)\|_\hh^2 ~ d\pp(p|x)d\rhox(x)
$$
with $\beta:\X\times\P\to\hh$. Let $\big((x_i,y_i)\big)_{i=1}^n$ be the training set and let $\big((x_{i_j},y_{i_j},p_j,w_j)\big)_{j=1}^m$. Denote with $\rhoxhat$ the probability measure $\frac{1}{n}\sum_{i=1}^n \delta_{x_i}$. We define $\LXPHtilde$ the Lebesgue function space with norm
$$
\|\beta\|_{\LXPHtilde}^2 = \frac{1}{n} \sum_{i=1}^n \int_\P \|\beta(x_i,p)\|_\hh^2 ~ d\pp(p|x_i).
$$
with $\beta:\X\times\P\to\hh$. 

Let $k:(\X\times\P)\times(\X\times\P)\to\R$ be a reproducing kernel with associated reproducing kernel Hilbert space (RKHS) $\ff$. For any $(x,p)\in\X\times\P$ we denote $k_{x,p} = k\big((x,p),\cdot\big)\in\ff$.

We introduce the following objects: 

\begin{itemize}
  \item $S:\ff\to \LXP$~~ the operator such that, for any $f\in\ff$, 
  $$(Sf)(\cdot,\cdot) = \scal{f}{k_{(\cdot,\cdot)}}_\F.$$
  \item $S^*:\LXP\to\ff$~~ the operator such that, for any $\beta\in\LXP$, 
  $${S^*\beta = \int_{\X\times\P} k_{x,p}\beta(x,p)~d\pp(p|x)d\rhox(x)}.$$
  \item $C:\ff\to\ff$ the operator $\displaystyle{C = \int_{\X\times\P}k_{x,p}\otimes k_{x,p}~d\pp(p|x)d\rhox(x)}$.
  \item $\Ctilde:\ff\to\ff$ the operator $\displaystyle{\Ctilde = \frac{1}{n}\sum_{i=1}^n \int_{\P} k_{x_i,p} \otimes k_{x_i,p}~d\pp(p|x_i)}$.
  \item $\Chat:\ff\to\ff$ the operator $\displaystyle{\Chat = \frac{1}{m}\sum_{j=1}^m k_{x_{i_j},p_j} \otimes k_{x_{i_j},p_j}}$.
  
  \item $L:\LXP\to\LXP$ the operator such that for any $\beta\in\LXP$, we have that $(L\beta)(\cdot) = \int_{\X\times\P} k\big((x,p),\cdot\big) \beta(x,p)~d\pp(p|x)d\rhox(x)$. 
  
  \item $B:\hh\to\ff$ the operator $B = \int_{\P\times\X}k_{x,p}\otimes\varphi(w) ~ d\mu(w|y,x,p)d\pp(p|x)d\rho(y,x)$. Note that by definiton $B = \int k_{x,p}\otimes \gstar(x,p) ~d\pp(p|x)d\rho_\X(x)$ with $\gstar$ defined as in \cref{eq:char-fstar}.

  \item $\Bhat:\hh\to\ff$ the operator $\Bhat = \frac{1}{m}\sum_{j=1}^m k_{x_{i_j},p_j}\otimes\varphi(w_j)$.

  \item $G:\hh\to\LXP$ the operator such that, for any $h\in\hh$ is such that $(Gh)(\cdot) = \scal{\gstar(\cdot)}{h}_\hh$ for any $h\in\hh$, with $\gstar$ defined as in \cref{eq:char-fstar}. 
  
\end{itemize}

\paragraph{Further Notation} Let $\hh$ and $\ff$ be two Hilbert spaces and let $h\in\hh$ and $f\in\ff$, we denote with $h\otimes f$ the bounded linear operator from $\ff\to\hh$ such that, for any $g\in\ff$, we have $(h \otimes f) g = h \scal{f}{g}_\ff$. Note that $h \otimes f\in\hh\otimes\ff$, where $\hh \otimes \ff$ is the tensor product between the Hilbert spaces $\hh, \ff$ and is isometric to the  the space of Hilbert-Schmidt operators from $\ff$ to $\hh$, denoted by $\HS(\ff,\hh)$, namely the bounded linear operators $G:\ff\to\hh$ with finite Hilbert-Schmidt norm $\|G\|_\HS = \sqrt{\tr(G^*G)}$.

\subsection{Auxiliary Results}

\begin{lemma}\label{lem:characterization-BLC}
  With the notation introduced above, the following equations hold.
  \bi
  \item $L = SS^*$.
  \item $C = S^*S$.
  \item $SC_\la^{-1} S^* = LL_\la^{-1} = I - \la L_\la^{-1}$.
  \item $C_\la^{-1}S^* = S^*L_\la^{-1}$.
  \item $\|C_\la^{-1/2}S^*\| = \|S^* L_{\la}^{-1/2}\|\leq 1$ for any $\la>0$
  \ei
\end{lemma}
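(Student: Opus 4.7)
Parts (1) and (2) are direct computations from the definitions using the reproducing property $f(x,p) = \langle f, k_{x,p}\rangle_{\ff}$. For (1), I would apply $SS^*$ to an arbitrary $\beta \in \LXP$ and evaluate at $(x',p')$:
\[
(SS^*\beta)(x',p') \;=\; \langle S^*\beta,\, k_{x',p'}\rangle_{\ff} \;=\; \int k((x,p),(x',p'))\,\beta(x,p)\,d\pi(p|x)d\rho_\X(x) \;=\; (L\beta)(x',p').
\]
For (2), I would apply $S^*S$ to an arbitrary $f \in \ff$ and recognize the integrand as $(k_{x,p}\otimes k_{x,p})f$, which integrates to $Cf$ by the definition of $C$. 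Both identities require only that the integrals defining $S^*$ and $C$ exist as Bochner integrals in $\ff$ and $\HS(\ff)$ respectively, which follows from boundedness of $k$ on the support of $\pi \otimes \rho_\X$.

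For parts (3) and (4), the key tool is the standard "push-through" resolvent identity for adjoint pairs of operators: for a bounded $T$ and $\la > 0$, $(T^*T + \la I)^{-1}T^* = T^*(TT^* + \la I)^{-1}$. Applying this with $T = S$ (so that $T^*T = C$ and $TT^* = L$ by (1) and (2)) immediately yields (4): $C_\la^{-1}S^* = S^* L_\la^{-1}$. Then composing with $S$ on the left and using (1) gives $SC_\la^{-1}S^* = SS^*L_\la^{-1} = LL_\la^{-1}$, and the algebraic identity $LL_\la^{-1} = (L_\la - \la I)L_\la^{-1} = I - \la L_\la^{-1}$ completes (3). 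The push-through identity itself is a one-line consequence of expanding $(T^*T+\la I)T^* = T^*(TT^*+\la I)$ and inverting both sides.

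For (5), I would compute each norm via the standard $\|A\|^2 = \|AA^*\|$ identity. Using (1), (2), and (4):
\[
\|C_\la^{-1/2}S^*\|^2 \;=\; \|C_\la^{-1/2}S^*S C_\la^{-1/2}\| \;=\; \|C_\la^{-1/2}C C_\la^{-1/2}\| \;=\; \|CC_\la^{-1}\|,
\]
and analogously
\[
\|S^*L_\la^{-1/2}\|^2 \;=\; \|S^*L_\la^{-1}S\| \;=\; \|S^*SC_\la^{-1}\| \;=\; \|CC_\la^{-1}\|,
\]
where the middle step in the second line uses (4) in the form $L_\la^{-1}S = SC_\la^{-1}$ (take adjoints of (4)). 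Hence both norms equal $\sqrt{\|CC_\la^{-1}\|}$. Since $C$ is positive self-adjoint, the spectral theorem gives $\|CC_\la^{-1}\| = \sup_{\sigma \in \mathrm{spec}(C)} \sigma/(\sigma+\la) \le 1$ for any $\la > 0$, which concludes the proof.

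\textbf{Anticipated obstacle.} There is no real obstacle: all five statements are standard functional-analytic manipulations for kernel operators, and the only care required is in checking that $S^*$ is genuinely the Hilbert adjoint of $S$ with respect to the inner products on $\ff$ and $\LXP$ (which follows from Fubini and the reproducing property). The mildest subtlety is ensuring that the two norms in (5) are equal rather than just both bounded by $1$; this is handled by reducing both to $\|CC_\la^{-1}\|$ through the identity in (4) and its adjoint.
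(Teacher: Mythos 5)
Your proof is correct. Note, however, that the paper does not prove this lemma at all — immediately after the statement it says ``The proof of the result above are well known and we refer to Appendix B in [ciliberto2016]'' — so there is no paper proof to compare against; you have instead supplied the standard argument. Your treatment is complete: (1)--(2) follow from the reproducing property and Fubini, (4) is the push-through resolvent identity $(T^*T+\la I)^{-1}T^* = T^*(TT^*+\la I)^{-1}$ applied with $T=S$, (3) is obtained by left-composing (4) with $S$ and using $LL_\la^{-1} = I - \la L_\la^{-1}$, and (5) is correctly reduced on both sides to $\|CC_\la^{-1}\|^{1/2}$ via $\|A\|^2 = \|AA^*\|$ and the adjoint of (4), with the spectral bound $\sigma/(\sigma+\la)\leq 1$ finishing it. One small remark: rather than reducing both norms in (5) to $\|CC_\la^{-1}\|$, you could also invoke the fractional push-through $C_\la^{-1/2}S^* = S^*L_\la^{-1/2}$ (valid because $g(T^*T)T^* = T^*g(TT^*)$ for any continuous $g$), which gives the equality of the two norms immediately; but your more elementary route avoids needing that functional-calculus extension and is entirely adequate.
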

The proof of the result above are well known and we refer to Appendix B in \cite{ciliberto2016} for a proof with same notation as the one adopted in this paper. Below we show two further results that we will need 

\begin{lemma}
  with the notation introduced above we have
  \eqals{
    B = S^*G.
  }
\end{lemma}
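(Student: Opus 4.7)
The plan is to verify $B = S^*G$ by evaluating both operators on an arbitrary $h \in \hh$ and checking that they produce the same element of $\ff$.

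First, I would rewrite $B$ using the equivalent form noted directly in its definition, namely
$$B \;=\; \int_{\X\times\P} k_{x,p}\otimes \gstar(x,p) \; d\pp(p|x)\,d\rhox(x),$$
which follows from Fubini on the original triple integral defining $B$ and recognizing the inner integral $\int_{\Y}\!\int_{\pY}\varphi(w)\,d\mu(w|y,x,p)\,d\rho(y|x)$ as $\gstar(x,p)$ via \cref{eq:char-fstar}. Applying this expression to $h$ and using the standard identity $(u\otimes v)h = u\,\scal{v}{h}_\hh$ for rank-one operators yields
$$Bh \;=\; \int_{\X\times\P} k_{x,p}\,\scal{\gstar(x,p)}{h}_\hh\; d\pp(p|x)\,d\rhox(x).$$

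Next, I would unpack $S^*(Gh)$ from the definitions in \cref{sec:notation}. By construction, $Gh \in \LXP$ is the scalar function $(x,p)\mapsto \scal{\gstar(x,p)}{h}_\hh$, and then the definition of $S^*$ gives
$$S^*(Gh) \;=\; \int_{\X\times\P} k_{x,p}\,(Gh)(x,p)\; d\pp(p|x)\,d\rhox(x) \;=\; \int_{\X\times\P} k_{x,p}\,\scal{\gstar(x,p)}{h}_\hh\; d\pp(p|x)\,d\rhox(x).$$
This coincides with the expression for $Bh$ obtained above. Since $h$ was arbitrary, we conclude $B = S^*G$.

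The only point requiring verification is that $Gh \in \LXP$ so that $S^*$ is well-defined on $Gh$, and that the Bochner integrals involved are absolutely convergent. Both follow from the boundedness of $\varphi$ assumed in the SELF property (\cref{def:self}), which implies $\|\gstar(x,p)\|_\hh$ is uniformly bounded, together with the boundedness of $k$ (so $\|k_{x,p}\|_\ff$ is uniformly bounded), and the fact that $\pp(\cdot|x)\rhox$ is a probability measure. No substantive obstacle is anticipated: the proof is essentially a direct manipulation of definitions together with a Fubini exchange, and the argument is identical in spirit to the analogous identity relating $C$, $S$, and $S^*$ in \cref{lem:characterization-BLC}.
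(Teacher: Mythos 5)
Your proof is correct and takes essentially the same route as the paper: both evaluate $S^*Gh$ and $Bh$ on an arbitrary $h\in\hh$, use the definitions of $G$ and $S^*$ together with the rank-one identity $(u\otimes v)h = u\,\scal{v}{h}_\hh$, and match the resulting integrals; your extra remarks on well-definedness and convergence are sound but not part of the paper's (shorter) argument.
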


\begin{proof} By applying the definition of the two operators $S$ and $G$ we have that for any $h\in\hh$, 
  \eqals{ 
    S^*G h & = S^*\big((Gh) (\cdot)\big)\\
    & = S^*(\scal{g^*(\cdot)}{h}_\hh) \\
    & = \int k_{x,p}\scal{\gstar(x,p)}{h}_\hh ~d\pp(p|x)d\rhox(x)\\
    & = \int (k_{x,p} \otimes \gstar(x,p)) h ~d\pp(p|x)d\rhox(x) = Bh
  }
  Hence $B = S^*G$ as required. 
\end{proof}

\section{Derivation of the algorithm}\label{sec:app-derivation}\label{app:derivation}

In this section we show how the algorithm naturally derives from the definition of the problem and in particular we prove \cref{lm:char-fstar}. Our analysis starts from the observation that when the loss function is SELF the solution of the learning problem in \cref{eq:base-fstar} is completely characterized in terms of the {\em conditional expectation} of $\varphi(y_p)$ given $x$, denoted by $\gstar:\X\times \P \to \hh$, with
\eqals{\label{eq:char-fstar-general}
  \gstar(x,p) = \int \varphi(\eta) d\mu(\eta|x,y,p) d\rho(y|x).
}
Note that since $\varphi(\cdot)$ is bounded and continuous, we have that $\gstar \in \Ltwo(\X, \pi\rhox, \hh)$. Below we prove \cref{lm:char-fstar}

\LCharacterizationFstar*

\begin{proof}
By Berge maximum theorem\cite{aliprantis2006} (see also \cite{ciliberto2016}), since $Z$ is compact,  we have that the solution of the learning problem in \cref{eq:base-fstar} is characterized by
$$\fstar(x) = \argmin_{z \in Z} \int \loss(z,y|x) d \rho(y|x).$$
The result is obtained by expanding the definition of $\loss$ with respect to SELF (\cref{def:self}) and the linearity of the inner product and the integral
\eqals{
\int \loss(z,y|x) d \rho(y|x) & = \int \ell(z,\eta|x,p)d\mu(\eta|y,x,p)d\pi(p|x)d\rho(y|x) \\
& = \int  \scal{\psi(z,x,p)}{\varphi(\eta)}_\hh d\mu(\eta|y,x,p)d\pi(p|x)d\rho(y|x) \\
& = \int  \scal{\psi(z,x,p)}{\int \varphi(\eta) d\mu(\eta|y,x,p) d\rho(y|x)}_\hh d\pi(p|x) \\
& = \int  \scal{\psi(z,x,p)}{g^*(x,p)}_\hh d\pi(p|x),
}
as desired.
\end{proof}
Since $\gstar$ depends on the unknown distribution $\rho$, we substitute it in \cref{eq:char-fstar} with an approximation $\ghat$. In particular, since $\gstar$ is the conditional expectation induced by $\rho(y|x)$, a viable choice for $\ghat$ is the {\em empirical risk minimizer} of the squared loss, which is a well known estimator for the conditional expectation \cite{caponnetto2007}, namely
\eqal{\label{eq:ghat-problem}
  \widehat{g} = \argmin_{g \in \gg} \frac{1}{m} \sum_{j=1}^m \|\psi(\auxy_j) - g(\auxx_j, p_j)\|_\hh^2 + \la \|g\|^2_{\gg},
}
where $\gg$ is a normed space of functions from $\X \times \P$ to $\hh$. In this work we will consider $\gg = \hh \otimes \F $ where $\F$ is the space of functions associated to a kernel $k$ on $\X\times \P$. In this case $\ghat$ can be obtained in closed form in terms of the auxiliary dataset and, when plugged in \cref{eq:char-fstar}, the resulting estimator corresponds exactly to the one in \cref{eq:estimator}, as shown in next Lemma.
\begin{lemma}\label{lm:fhat-plugin-ghat}
  Let $\loss$ be SELF, $\Z$ a compact set and $k$ be a positive definite kernel on $\X \times \P$ and $\fhat$ defined as in \cref{eq:estimator} with weights as in \cref{eq:estimator-alpha} computed using kernel $k$.
  Then $\fhat$ is characterized by
  \eqal{\label{eq:char-fhat}
    \fhat(x) = \argmin_{z \in Z} \sum_{p \in P} \pi(p|x) \scal{\psi(z_p,x_p,p)}{\ghat(x,p)}_\hh,
  }
  with $\ghat$ the solution of \cref{eq:ghat-problem} computed using kernel $k$.
\end{lemma}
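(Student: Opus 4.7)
The plan is to solve the vector-valued kernel ridge regression problem \cref{eq:ghat-problem} in closed form and then verify that substituting this solution into the characterization \cref{eq:char-fhat} produces exactly the objective appearing in \cref{eq:estimator} with the weights $\alpha_j$ of \cref{eq:estimator-alpha}.

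First, I would apply a representer theorem argument in the vector-valued RKHS $\gg = \hh \otimes \ff$. An element $g = \sum_l h_l \otimes f_l$ evaluates to $g(x,p) = \sum_l h_l f_l(x,p)$, so the empirical risk in \cref{eq:ghat-problem} depends on $g$ only through the finite family of evaluations $g(\auxx_j, p_j) \in \hh$. A Pythagorean decomposition of $g$ along the subspace spanned by $\{h \otimes k_{\auxx_j, p_j} : h \in \hh,\; j=1,\dots,m\}$ shows that any orthogonal component inflates $\|g\|_\gg$ without affecting the data-fitting term, hence the minimizer has the form
\begin{align*}
\ghat(x,p) \;=\; \sum_{j=1}^m c_j \, k\big((\auxx_j,p_j),(x,p)\big), \qquad c_j \in \hh.
\end{align*}
Using $\|\sum_j c_j \otimes k_{\auxx_j,p_j}\|_\gg^2 = \sum_{j,j'} \K_{jj'}\langle c_j,c_{j'}\rangle_\hh$ together with $\ghat(\auxx_j,p_j) = \sum_{j'} \K_{jj'} c_{j'}$, the problem reduces to a strongly convex quadratic in $(c_1,\dots,c_m) \in \hh^m$. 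Setting its gradient to zero yields the normal equations $(\K + m\la I)\,c = \big(\varphi(\auxy_1),\dots,\varphi(\auxy_m)\big)^\top$, from which
\begin{align*}
\ghat(x,p) \;=\; \sum_{j=1}^m \alpha_j(x,p)\, \varphi(\auxy_j), \qquad \big(\alpha_1(x,p),\dots,\alpha_m(x,p)\big)^\top = (\K + m\la I)^{-1} v(x,p),
\end{align*}
where $v(x,p)_j = k((\auxx_j,p_j),(x,p))$, precisely as in \cref{eq:estimator-alpha}.

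Second, I would substitute this expression for $\ghat$ into \cref{eq:char-fhat}. By bilinearity of $\langle\cdot,\cdot\rangle_\hh$,
\begin{align*}
\sum_{p\in P} \pi(p|x)\, \langle \psi(z_p,x_p,p),\, \ghat(x,p)\rangle_\hh \;=\; \sum_{p\in P}\sum_{j=1}^m \pi(p|x)\,\alpha_j(x,p)\, \langle \psi(z_p,x_p,p),\, \varphi(\auxy_j)\rangle_\hh.
\end{align*}
Applying the SELF by Parts identity \cref{eq:self-simple} rewrites each inner product as $L_p(z_p,\auxy_j\,|\,x_p)$, so the resulting expression coincides term-by-term with the objective minimized by $\fhat$ in \cref{eq:estimator}. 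Taking the $\argmin$ over $z \in Z$ on both sides yields the claim.

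The only delicate point is justifying the vector-valued representer theorem when $\hh$ is infinite-dimensional: one either appeals to the standard result for operator-valued kernels (here the scalar-valued kernel $k$ tensored with $I_\hh$) or carries out the Pythagorean argument directly in $\hh \otimes \ff$. Once this is in place, the rest of the argument is a routine pattern-matching against \cref{eq:estimator,eq:estimator-alpha}.
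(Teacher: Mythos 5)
Your proposal is correct and follows essentially the same route as the paper: obtain the closed form $\ghat(x,p) = \sum_j \alpha_j(x,p)\varphi(\auxy_j)$ with $\alpha$ as in \cref{eq:estimator-alpha}, substitute into \cref{eq:char-fhat}, and use bilinearity together with the SELF identity to match the objective in \cref{eq:estimator}. The only difference is that you derive the closed form from scratch (representer theorem plus normal equations), whereas the paper simply cites the standard result for vector-valued kernel ridge regression from \cite{caponnetto2007}; this is a matter of level of detail, not of approach.
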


\begin{proof}
We recall (see \cite{caponnetto2007}) that the least-squares solution of \cref{eq:ghat-problem} can be obtained in close form solution as 
$$
\ghat(x,p) = \sum_{j=1}^m \alpha_j(x,p) \varphi(y_{p_j})
$$
for any $x\in\X$ and $p\in\P$, where the weights $\alpha$ are defined as in \cref{eq:estimator-alpha}. By linearity of the inner product we have 
\eqal{
  \sum_{p \in P} \pi(p|x) \scal{\psi(z_p,x_p,p)}{\ghat(x,p)}_\hh & = \sum_{j=1}^m \sum_{p \in P} \pi(p|x) \alpha_j(x,p)\scal{\psi(z_p,x_p,p)}{\varphi(y_{p_j})}_\hh \\ 
  & = \sum_{j=1}^m \sum_{p \in P} \pi(p|x) \alpha_j(x,p) L_p(z_p,y_p|x_p)
}
where the last step follows from the assumption that the loss is SELF. 
\end{proof}
An interesting consequence of the lemma above is that $\psi, \varphi, \ghat, \gstar, \hh$ are only needed for theoretical purposes -- i.e. to establish the connection between the estimator $\fhat$ and the ideal solution $\fstar$ -- and are not needed for the evaluation of $\fhat$ which is done in terms of known objects, via \cref{eq:estimator}.
\section{Comparison Inequality}\label{sec:app-comparison}
 In this we derive a result, \cref{thm:comparison}, that is crucial to prove the statistical properties of the proposed algorithm. Note that it is analogous to the Comparison Inequality of \cite{ciliberto2016} and of independent interest for the proposed framework.  First we define the following estimator, that is a more general version of the one presented in the paper 
  \eqal{\label{eq:char-fhat-general}
  \fhat(x) = \argmin_{z \in Z} \int_P \scal{\psi(z,x,p)}{\ghat(x,p)}_\hh \pi(p|x).
}
Note that the estimator presented in the main paper which is characterized by \eqref{eq:char-fhat}, \cref{lm:fhat-plugin-ghat} can be written like \eqref{eq:char-fhat-general}, applying Remark~\ref{rem:defs-SELF} in \cref{sec:parts-dont-correspond}.

\begin{theorem}\label{thm:comparison}
  When $Z$ is a compact set and $\loss$ satisfies \cref{def:self}, for any measurable $\ghat:\X\times\P\to\hh$ and $\fhat:\X\to\Z$ defined in terms of $\ghat$ as in \eqref{eq:char-fhat-general}. Then 
  \eqal{\label{eq:comparison-inequality}
    \E(\fhat~) - \E(\fstar) \leq \closs \|\ghat - \gstar\|_{\LXPH}
  }
  and $\closs$ is a constant depending only on $\loss$ and defined at the end of the proof.
\end{theorem}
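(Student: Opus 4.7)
The plan is to reduce the excess risk of $\fhat$ to an $L^2$-type deviation between $\ghat$ and $\gstar$ by exploiting the SELF representation of $\loss$ and the fact that $\fhat(x)$ and $\fstar(x)$ are, pointwise in $x$, minimizers of the same bilinear functional evaluated at two different ``conditional means''. Concretely, define
\[
F(z,x,g) \;=\; \int_{P} \scal{\psi(z,x,p)}{g(x,p)}_{\hh} \, d\pi(p|x).
\]
Using the SELF decomposition and the characterization of $\gstar$ in \cref{lm:char-fstar}, one has $\int \loss(z,y|x)\,d\rho(y|x) = F(z,x,\gstar)$, so that $\E(f) = \int F(f(x),x,\gstar)\,d\rhox(x)$ for any measurable $f:\X\to\Z$, and by construction $\fhat(x)\in\argmin_z F(z,x,\ghat)$, $\fstar(x)\in\argmin_z F(z,x,\gstar)$.

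The key pointwise step is the standard ``add-and-subtract'' trick. For $\rhox$-a.e. $x$, insert $F(\fhat(x),x,\ghat)$ and $F(\fstar(x),x,\ghat)$ into the difference $F(\fhat(x),x,\gstar)-F(\fstar(x),x,\gstar)$. Since $\fhat(x)$ minimizes $F(\cdot,x,\ghat)$, the middle term $F(\fhat(x),x,\ghat)-F(\fstar(x),x,\ghat)$ is $\leq 0$. The remaining two terms combine by linearity into
\[
\int_{P} \scal{\psi(\fhat(x),x,p)-\psi(\fstar(x),x,p)}{\gstar(x,p)-\ghat(x,p)}_{\hh}\,d\pi(p|x),
\]
to which Cauchy--Schwarz in $\hh$ (pointwise in $p$) gives an upper bound by the product of the $\hh$-norms.

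Next, integrating over $x$ and applying Cauchy--Schwarz in $\LXP$ to the two factors $(x,p)\mapsto \|\psi(\fhat(x),x,p)-\psi(\fstar(x),x,p)\|_{\hh}$ and $(x,p)\mapsto\|\ghat(x,p)-\gstar(x,p)\|_{\hh}$ yields
\[
\E(\fhat) - \E(\fstar) \;\leq\; \Big(\!\int\!\!\int \|\psi(\fhat(x),x,p)-\psi(\fstar(x),x,p)\|_{\hh}^{2}\,d\pi(p|x)d\rhox(x)\Big)^{\!1/2} \|\ghat-\gstar\|_{\LXPH}.
\]
The first factor is bounded by the supremum over all measurable $f:\X\to Z$ of the analogous quantity with $\fhat$ replaced by $f$; this is finite because $\psi$ is bounded on the compact set $Z$ (and in the part-based setting is controlled by $2\,\closs$ in the sense of the constant $\closs^{2}=\sup_{z,x}\int\|\psi(z,x,p)\|_{\hh}^{2}d\pi(p|x)$ after a triangle inequality). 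Taking this sup as the definition of $\closs$ closes the argument.

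The only delicate point is ensuring that the minimizer $\fhat(x)$ exists and is measurable in $x$, so that the pointwise comparison $F(\fhat(x),x,\ghat)\leq F(\fstar(x),x,\ghat)$ can be integrated against $\rhox$: this is precisely where the compactness of $Z$ together with continuity of $\psi$ enters, via Berge's maximum theorem (exactly as in \cref{lm:char-fstar}). Everything else is a direct application of Cauchy--Schwarz, first in $\hh$ and then in $\LXP$, so I do not expect any further obstruction.
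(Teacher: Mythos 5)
Your proposal is essentially the paper's proof: the same add-and-subtract decomposition around the empirical score $\widehat A(\cdot|x)=\int_\P\scal{\psi(\cdot,x,p)}{\ghat(x,p)}_\hh\,d\pi(p|x)$, dropping the non-positive middle term because $\fhat(x)$ minimizes $\widehat A(\cdot|x)$, and then two applications of Cauchy--Schwarz (first in $\hh$ pointwise in $p$, then over $\X\times\P$). The only cosmetic difference is that the paper bounds the two outer terms by $2\sup_{z\in\Z}\bigl|\widehat A(z|x)-A(z|x)\bigr|$, whereas you combine them into a single inner product against $\psi(\fhat(x),x,p)-\psi(\fstar(x),x,p)$ and invoke the triangle inequality afterward; both yield $\closs$, and your $\sup_x$-based constant is exactly the looser $\rhox$-free version recorded in Remark~\ref{rem:dependency-closs-rhox}.
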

\begin{proof}
For any $x\in\X$ and $z\in\Z$, let 
\eqal{
  A(z|x) & = \int_\P \scal{\psi(z,x,p)}{\gstar(x,p)}_\hh ~d\pp(p|x), \\ 
  \widehat A(z|x) & = \int_\P \scal{\psi(z,x,p)}{\ghat(x,p)}_\hh ~d\pp(p|x).
}
By the SELF assumption $\ell(z,w|x,p) = \scal{\psi(z,x,p)}{\varphi(w)}_\hh$ and the definition of $\gstar$ as in \eqref{eq:char-fstar} we have the following alternative characterization for $A(z|x)$ as shown in \cref{lm:char-fstar}
\eqal{
  A(z|x) = \int_{\pY\times\Y\times\P} \ell(z,w|x,p) ~d\mu(w|y,x,p)d\rho(y|x)d\pp(p|x).
}
Then, $\E(f) = \int_\X A(f(x)|x) ~d\rho_\X(x)$ for any $f:\X\to\Z$ and we have the following decomposition of the excess risk
\eqal{
  \E(\fhat) - \E(\fstar) & = \int_\X A(\fhat(x)|x) - A(\fstar(x)|x) ~ d\rho_\X(x) \\
  & = \int_\X A(\fhat(x)|x) - \widehat A(\fhat(x)|x) + \underbrace{\widehat A(\fhat(x)|x) - \widehat A(\fstar(x)|x)}_{\leq 0} \\
  & \quad + \int_\X \widehat A(\fstar(x)|x) - A(\fstar(x)|x) ~ d\rho_\X(x)\\
  & \leq 2 \int_\X \sup_{z\in\Z} \Big|\widehat A(z|x) - A(z|x)\Big|~d\rho_\X(x)\label{eq:integral-sup-A-Ahat}
}
where we have used the fact that $\widehat A(\fhat(x)|x) - \widehat A(\fstar(x)|x) \leq 0$ since, by definition, $\fhat(x)$ is the minimizer of $\widehat A(\cdot|x)$ (see Eq.~\eqref{eq:char-fhat-general}).

Now, note that by the linearity of the inner product we have
\eqal{
  \Big|\widehat A(z|x) - A(z|x)\Big| & = \left|\int_\P \scal{\psi(z,x,p)}{\ghat(x,p)-\gstar(x,p)}_\hh~d\pp(p|x)\right| \\
  & \leq \int_\P \|\psi(z,x,p)\|_\hh~ \|\gstar(x,p)-\ghat(x,p)\|_\hh~d\pp(p|x) \\
  & \leq \sqrt{\int_\P \|\psi(z,x,p)\|_\hh^2 ~d\pp(p|x)}\sqrt{ \int_\P \|\gstar(x,p)-\ghat(x,p)\|_\hh^2~d\pp(p|x)}\\
  & = q(x,z) ~ \sqrt{ \int_\P \|\gstar(x,p)-\ghat(x,p)\|_\hh^2~d\pp(p|x)}
}
where we applied Cauchy-Schwartz for each of the two inequalities, with $q(x,z) =\sqrt{\int_\P \|\psi(z,x,p)\|_\hh^2 ~d\pp(p|x)}$.

Denote with $\|\cdot\|_{\LXPH}$ the norm such that 
\eqals{
\|g\|_{\LXPH}^2 = \int_{\X\times\P}\|g(x,p)\|_\hh^2 ~d\pp(p|x)d\rho_\X(x),
} for any $g:\X\times\P\to\hh$. Then, plugging the inequality above in \eqref{eq:integral-sup-A-Ahat}, we obtain
\eqal{
  2 \int_\X & \sup_{z\in\Z} \Big|\widehat A(z|x) - A(z|x)\Big|~d\rho_\X(x) \\
  & \leq 2 \int_\X \sup_{z\in\Z}~\left[q(x,z) ~ \sqrt{ \int_\P \|\gstar(x,p)-\ghat(x,p)\|_\hh^2~d\pp(p|x)}~~\right]~d\rho_\X(x) \\
  & = 2 \int_\X \sup_{z\in\Z}~\Big[q(x,z)\Big]~ \sqrt{ \int_\P \|\gstar(x,p)-\ghat(x,p)\|_\hh^2~d\pp(p|x)}~d\rho_\X(x) \\
  & \leq 2 \sqrt{\int_\X \left(\sup_{z\in\Z}~q(x,z)\right)^2~d\rhox(x)}\sqrt{ \int_{\X\times\P} \|\gstar(x,p)-\ghat(x,p)\|_\hh^2~d\pp(p|x)d\rho_\X(x)} \\
  & = \closs \|\ghat - \gstar\|_{\LXPH}
}
where the last inequality follows from Cauchy-Schwartz and
\eqal{
  \closs & = 2 ~\sqrt{\int_\X \left(\sup_{z\in\Z}q(x,z)\right)^2~d\rhox(x)} \\ 
  & = 2 ~\sqrt{\int_\X~ \sup_{z\in\Z}\left[ \int_\P \|\psi(z,x,p)\|_\hh^2 ~d\pp(p|x) \right] ~d\rhox(x) }
}
\end{proof}

\begin{remark}[Remove the dependency of $\closs$ from $\rhox$]\label{rem:dependency-closs-rhox}
Note that it is always possible to remove the dependency of $\closs$ from $\rhox$ by bounding it with 
\eqal{
  \closs \leq 2~\Bigg(\sup_{\substack{z\in\Z \\ x\in\X} }\int_\P \|\psi(z,x,p)\|_\hh^2 ~d\pp(p|x) \Bigg)^{1/2}
}
\end{remark}

\section{Analytical Decomposition}\label{sec:app-analytical}
According to the comparison inequality \cref{eq:comparison-inequality} it is sufficient to bound the quantity $\|\ghat - \gstar\|_\LXPH$ in order to control the excess risk of the estimator $\fhat$. Equipped with the notation introduced above, we can now focus on studying this quantity. In particular in \cref{thm:analytic-decomposition} we provide an analytical decomposition of $\|\ghat - \gstar\|_\LXPH$ in terms of basic quantities that can be controlled in expectation (or probability, for the universal consistency).

\begin{proposition}\label{prop:l2norm}
Let $\ghat, \gstar$ be defined as in \cref{eq:ghat-problem} and \cref{eq:char-fstar-general}, then the following holds
\eqal{
\|\ghat - \gstar\|_\LXPH = \|S\Chat_\la^{-1} \Bhat - G\|_{\HS(\hh,\LXP)}
}

\end{proposition}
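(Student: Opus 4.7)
The plan is to establish the identity via an operator-theoretic reformulation. The key observation is that $\LXPH$ is isometric to $\HS(\hh,\LXP)$ via the map sending a function $g:\X\times\P\to\hh$ to the operator $T_g:\hh\to\LXP$ defined by $(T_g h)(x,p) = \scal{g(x,p)}{h}_\hh$. By the definition of $G$ in \cref{sec:app-notation}, one has $T_{\gstar} = G$. It therefore suffices to show that $T_{\ghat} = S\Chat_\la^{-1}\Bhat$ and that the map $g\mapsto T_g$ is an isometry.

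First I would derive a closed form for $\ghat$ as an element of $\gg = \hh\otimes\F$, identifying $\gg$ with $\HS(\F,\hh)$ so that evaluation reads $g(x,p) = g\,k_{x,p}$. Setting the Fr\'echet derivative of the regularized objective in \cref{eq:ghat-problem} to zero gives the normal equation
\begin{align*}
\frac{1}{m}\sum_{j=1}^m (g\, k_{\chi_j,p_j} - \varphi(\eta_j))\otimes k_{\chi_j,p_j} + \la\, g \;=\; 0,
\end{align*}
which, after recognizing $\Chat$ and $\Bhat^*$ (following the adjoint convention $\Bhat = \frac{1}{m}\sum_j k_{\chi_j,p_j}\otimes\varphi(\eta_j)$ used in \cref{sec:app-notation}), reduces to $g\,\Chat_\la = \Bhat^*$. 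Since $\Bhat^*$ has finite rank and $\Chat_\la^{-1}$ is bounded, $\ghat = \Bhat^*\Chat_\la^{-1}\in\HS(\F,\hh)$ and $\ghat(x,p) = \Bhat^*\Chat_\la^{-1}k_{x,p}$.

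Second, I would identify the associated operator $T_{\ghat}$. For every $h\in\hh$, using self-adjointness of $\Chat_\la^{-1}$ and the reproducing property,
\begin{align*}
(T_{\ghat}h)(x,p) \;=\; \scal{\ghat(x,p)}{h}_\hh \;=\; \scal{\Bhat^*\Chat_\la^{-1}k_{x,p}}{h}_\hh \;=\; \scal{k_{x,p}}{\Chat_\la^{-1}\Bhat h}_\F \;=\; (S\Chat_\la^{-1}\Bhat h)(x,p),
\end{align*}
so $T_{\ghat} = S\Chat_\la^{-1}\Bhat$. Combined with $T_{\gstar}=G$, linearity gives $T_{\ghat-\gstar} = S\Chat_\la^{-1}\Bhat - G$.

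Finally, I would verify the isometry $\|T_g\|_{\HS(\hh,\LXP)} = \|g\|_\LXPH$. Fixing an orthonormal basis $(e_i)$ of $\hh$ and applying Parseval together with Tonelli (all summands are non-negative),
\begin{align*}
\|T_g\|_{\HS(\hh,\LXP)}^2 \;=\; \sum_i \int \abs{\scal{g(x,p)}{e_i}_\hh}^2 d\pi(p|x)d\rhox(x) \;=\; \int \|g(x,p)\|_\hh^2 d\pi(p|x)d\rhox(x) \;=\; \|g\|_\LXPH^2.
\end{align*}
Applying this to $g=\ghat-\gstar$ yields the claimed identity. The only real subtlety is the derivation of $\ghat$ via the normal equations in $\HS(\F,\hh)$, which requires careful book-keeping of the adjoint conventions used for the elementary tensors $k\otimes\varphi$; once those are fixed, the rest of the argument reduces to standard Hilbert-space manipulations.
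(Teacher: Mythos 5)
Your proof is correct and follows essentially the same route as the paper's: identify $\LXPH$ with $\HS(\hh,\LXP)$, derive the closed form $\ghat = \Bhat^*\Chat_\la^{-1}$ from the first-order optimality conditions for the regularized least-squares objective, and check that $S\Chat_\la^{-1}\Bhat$ is the operator corresponding to $\ghat$ under this identification (with $G$ corresponding to $\gstar$). The only cosmetic difference is that you explicitly verify the isometry $g\mapsto T_g$ via Parseval, whereas the paper simply cites it as standard.
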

\begin{proof}
First of all we recall that the space $\LXPH$ is isometric to $\hh \otimes \LXP$ which is isometric to the space of linear Hilbert-Schmidt operators from $\hh \to \LXP$, denoted by $\HS(\hh,\LXP)$. Now note that $G$ is the operator in $\HS(\hh,\LXP)$, that is isometric to $\gstar \in \LXPH$, indeed $G v = \scal{g^*(\cdot, \cdot)}{v}_\hh$, for any $v \in \hh$.

Now note that is the solution of the problem in \cref{eq:ghat-problem}. Indeed, first note that the functional $\widehat{R}_\la(W)$, defining the problem in \cref{eq:ghat-problem}, is smooth and strongly convex ($W \in \hh \otimes \F$, $\la > 0$). Then we find the solution by equating the derivative of $\widehat{R}_\la(W)$ to $0$. First note that for any $W \in \hh \otimes \F$, the functional $\widehat{R}_\la(W)$, is equivalent to
\eqals{
\widehat{R}_\la(W) & = \frac{1}{m}\sum_{j=1}^m \|\phi(w_j) - Wk_{(x_{i_j}, p_j)}\|^2_\hh + \la \|W\|_{\hh \otimes \F} \\
&=\tr\Big[W\left(\frac{1}{m} \sum_{j =1}^m  k_{(x_{i_j}, p_j)} \otimes k_{(x_{i_j}, p_j)} ~+~ \la I~\right) W^* \\
& \qquad - 2 \left(\frac{1}{m}\sum_{j=1}^m  k_{(x_{i_j}, p_j)} \otimes \phi(w_j) \right)W^* + \frac{1}{m} \sum_{j=1}^m \phi(w_j) \otimes \phi(w_j) \Big] \\
& = \tr\Big[W\left(\Chat + \la I\right) W^*  - 2\Bhat W + \frac{1}{m} \sum_{j=1}^m \phi(w_j) \otimes \phi(w_j) \Big],
}
where for the last step we applied the defintion of $\Chat$ and $\Bhat$. 
By taking the derivative of $\widehat{R}_\la(W)$ in $W$ and equating it to $0$ the following minimizer is obtained $\widehat{W} = \Bhat^*\Chat_\la^{-1}$.

Moreover note that, $S\Chat_\la^{-1} \Bhat$ is the operator in $\HS(\hh,\LXP)$, that is isometric to $\ghat \in \LXPH$, indeed by definition of $S$
$$S\Chat_\la^{-1} \Bhat v = \scal{k_{(\cdot, \cdot)}}{ \widehat{W}^*v}_\F = \scal{\widehat{W}k_{(\cdot, \cdot)}}{v}_\hh = \scal{\widehat{g}(\cdot, \cdot)}{v}_\hh, \quad \forall v \in \hh.$$
\end{proof}

\begin{theorem}\label{thm:analytic-decomposition}
Let $\la>0$. With the definitions in \cref{sec:notation}, we have
\eqal{
\|\ghat - \gstar\|_\LXPH \leq \left(\frac{1}{\sqrt{\la}} + \frac{\beta_1^{1/2}}{\la}\right)\Big(\beta_1 \A_{1/2}(\la) + \beta_2\Big) + \la \A_1(\la).
}
where $\beta_1 = \|C - \Chat\|$, $\beta_2 = \|\Bhat - B\|_\HS$ and $\A_{r}(\la) = \|L_\la^{-r}G\|_\HS$ ~for $r>0$.
\end{theorem}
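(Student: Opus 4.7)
The plan is to begin from \cref{prop:l2norm}, which reduces the quantity of interest to $\|S\Chat_\la^{-1}\Bhat - G\|_{\HS(\hh,\LXP)}$, and then to decompose this operator algebraically into the three pieces that appear in the claimed bound. The key facts I would invoke are the identity $B = S^*G$ (noted in \cref{sec:app-notation}), the relations $SC_\la^{-1}S^* = I - \la L_\la^{-1}$, $C_\la^{-1}S^* = S^*L_\la^{-1}$, $\|SC_\la^{-1/2}\|\leq 1$ and $\|S^*L_\la^{-1/2}\|\leq 1$ from \cref{lem:characterization-BLC}, and the standard resolvent identity $\Chat_\la^{-1}-C_\la^{-1} = \Chat_\la^{-1}(C-\Chat)C_\la^{-1}$.

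Concretely, I would first split
\begin{align*}
S\Chat_\la^{-1}\Bhat - G \;=\; S\Chat_\la^{-1}(\Bhat - B) \;+\; (S\Chat_\la^{-1}S^* - I)G
\end{align*}
using $B = S^*G$. Next, writing $S\Chat_\la^{-1}S^* - I = S(\Chat_\la^{-1}-C_\la^{-1})S^* + (SC_\la^{-1}S^* - I)$ and applying the resolvent identity together with $C_\la^{-1}S^* = S^*L_\la^{-1}$ and $SC_\la^{-1}S^* - I = -\la L_\la^{-1}$ gives the master decomposition
\begin{align*}
S\Chat_\la^{-1}\Bhat - G \;=\; S\Chat_\la^{-1}(\Bhat - B) \;+\; S\Chat_\la^{-1}(C-\Chat)S^*L_\la^{-1}G \;-\; \la L_\la^{-1}G.
\end{align*}
Taking Hilbert--Schmidt norms, using the triangle inequality together with submultiplicativity $\|AB\|_\HS \leq \|A\|\|B\|_\HS$, and estimating $\|S^*L_\la^{-1}G\|_\HS \leq \|S^*L_\la^{-1/2}\|\cdot\|L_\la^{-1/2}G\|_\HS \leq \A_{1/2}(\la)$, I would then obtain
\begin{align*}
\|\ghat - \gstar\|_{\LXPH} \;\leq\; \|S\Chat_\la^{-1}\|\,\bigl(\beta_1\A_{1/2}(\la) + \beta_2\bigr) \;+\; \la\, \A_1(\la),
\end{align*}
so the structure of the target bound is already visible.

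The step I expect to be most delicate is the operator bound $\|S\Chat_\la^{-1}\| \leq \tfrac{1}{\sqrt\la} + \tfrac{\beta_1^{1/2}}{\la}$, since a naive estimate like $\|\Chat_\la^{-1}\|\leq 1/\la$ loses a square root. My plan is to first write $\|S\Chat_\la^{-1}\| \leq \|S\Chat_\la^{-1/2}\|\cdot \|\Chat_\la^{-1/2}\| \leq \|S\Chat_\la^{-1/2}\|/\sqrt\la$, then insert the identity $C_\la^{1/2}C_\la^{-1/2}$ to get $\|S\Chat_\la^{-1/2}\|\leq \|SC_\la^{-1/2}\|\cdot\|C_\la^{1/2}\Chat_\la^{-1/2}\|$, and finally control the second factor via
\begin{align*}
\|C_\la^{1/2}\Chat_\la^{-1/2}\|^2 \;=\; \|\Chat_\la^{-1/2}C_\la\Chat_\la^{-1/2}\| \;=\; \|I + \Chat_\la^{-1/2}(C-\Chat)\Chat_\la^{-1/2}\| \;\leq\; 1 + \beta_1/\la.
\end{align*}
Combining $\sqrt{1+x}\leq 1+\sqrt{x}$ with $\|SC_\la^{-1/2}\|\leq 1$ then yields $\|S\Chat_\la^{-1/2}\| \leq 1 + \beta_1^{1/2}/\sqrt\la$, and dividing by $\sqrt\la$ gives the required inequality. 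Plugging this into the previous display completes the proof.
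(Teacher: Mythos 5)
Your proposal is correct and follows essentially the same route as the paper: the master decomposition you derive is exactly the paper's three-term split $A_1+A_2+A_3$ (obtained there by adding and subtracting $S\Chat_\la^{-1}B$ and $SC_\la^{-1}B$), and the remaining steps invoke the same identities from \cref{lem:characterization-BLC}. The only minor difference is in controlling $\|S\Chat_\la^{-1}\|$: the paper computes $\|S\Chat_\la^{-1}\|^2 = \|\Chat_\la^{-1}C\Chat_\la^{-1}\|$ and splits $C = (C-\Chat)+\Chat$, while you insert $C_\la^{1/2}C_\la^{-1/2}$ and bound $\|C_\la^{1/2}\Chat_\la^{-1/2}\|$; both yield the identical estimate $\tfrac{1}{\sqrt\la} + \tfrac{\beta_1^{1/2}}{\la}$, so this is a presentational rather than substantive distinction.
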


\begin{proof}
By \cref{prop:l2norm} and by adding and subtracting $S\Chat_\la^{-1}B$ and $SC_\la^{-1}B$ we have
\eqal{
  \|\ghat - \gstar\|_\LXPH = \|S\Chat_\la^{-1} \Bhat - G\|_{\HS(\hh,\Ltwo)} \leq A_1 + A_2 + A_3
}
with
\eqal{
  A_1 & = \|S\Chat_\la^{-1} \Bhat - S\Chat_\la^{-1}B\|_{\HS(\hh,\Ltwo)}\\
  A_2 & =  \|S\Chat_\la^{-1}B - S C_\la^{-1}B\|_{\HS(\hh,\Ltwo)}\\
  A_3 & =  \|SC_\la^{-1}B - G\|_{\HS(\hh,\Ltwo)}.
}

\paragraph{Bounding $A_1$} Now, by dividing and multiplying by $C_\la^{1/2}$, we have
\eqal{
  A_1 = \|S\Chatla^{-1}(\Bhat - B)\|_{\HS(\hh,\Ltwo)} \leq \|S\Chatla^{-1}\|\|\Bhat - B\|_{\HS(\hh,\F)}
}

\paragraph{Bounding $A_2$} By using the identity $R^{-1} - T^{-1} = R^{-1}(T - R)T^{-1}$ holding for any invertible operators $R,T:\ff\to\ff$, we have

\eqals{
  A_2 & = \|S(\Chat_\la^{-1} - C_\la^{-1})B\|_{\HS(\hh,\Ltwo)} \\
  &  = \|S\Chat_\la^{-1}(C_\la - \Chat_\la)C_\la^{-1}B\|_{\HS(\hh,\Ltwo)}\\
  & = \|S\Chat_\la^{-1}(C - \Chat)C_\la^{-1}B\|_{\HS(\hh,\Ltwo)}\\
  & \leq \|S\Chatla^{-1}\|\|C - \Chat\|\|\Cla^{-1}B\|_{\HS(\hh,\F)}.\\
}
We further apply \cref{lem:characterization-BLC} to have $\|C_\la^{-1/2} S^*\|  = \|S^*L_\la^{-1/2}\| \leq 1$ and $C_\la^{-1}S = S^*L_\la^{-1}$. Then, 
\eqal{
  \|C_\la^{-1}B\|_{\HS(\hh,\F)} & = \|C_\la^{-1}S^*G\|_{\HS(\hh,\F)} = \|S^*L_{\la}^{-1} G\|_{\HS(\hh,\F)}\\
  & \leq \|S^*L_\la^{-1/2}\|\|L_\la^{-1/2} G\|_{\HS(\hh,\Ltwo)} \leq \|L_\la^{-1/2} G\|_{\HS(\hh,\Ltwo)}.
} 

\paragraph{Bounding $A_3$} From \cref{lem:characterization-BLC} we have $B = S^*G$ and $SC_\la^{-1} S^* = LL_\la^{-1} = I - \la L_\la^{-1}$. Then, 
\eqal{
  A_3 = \|SC_\la^{-1}S^*G - G\|_{\HS(\hh,\Ltwo)} = \|(I - \la L_\la^{-1})G - G\|_{\HS(\hh,\Ltwo)} = \la\|L_\la^{-1}G\|_{\HS(\hh,\Ltwo)}.
}

To conclude, we control the term $\|S\Chatla^{-1}\|$ by
\eqal{
  \|S\Chatla^{-1}\|^2 = \|\Chatla^{-1}C\Chatla^{-1}\| & \leq \|\Chatla^{-1}(C - \Chat)\Chatla^{-1}\| +  \|\Chatla^{-1}\Chat\Chatla^{-1}\| \\
  & \leq \|\Chatla^{-1}\|^2 \|C - \Chat\| + \frac{1}{\la}\\
  & \leq \frac{1}{\la^2} \|C - \Chat\| + \frac{1}{\la}
}
Therefore
\eqal{
  \|S\Chatla^{-1}\| \leq \sqrt{\frac{\|C - \Chat\|}{\la^2} + \frac{1}{\la}} \leq \frac{1}{\sqrt{\la}} + \frac{\sqrt{\|C - \Chat\|}}{\la}
}

Combining the bounds for $A_1,A_2$ and $A_3$ we obtain the desired result. 
\end{proof}

\section{Learning Rates}\label{sec:app-learning-rates}

Building on the analytic decomposition of \cref{thm:analytic-decomposition} we observe that the key quantities to study in this setting are the $\Expect \|\Chat - C\|^2$ and $\Expect \|\Bhat - B\|_\HS^2$ as discussed below. 
In particular the following theorem further decomposes the quantities from \cref{thm:analytic-decomposition}, and $\Expect \|\Chat - C\|^2$ and $\Expect \|\Bhat - B\|_\HS^2$, are bounded in \cref{sec:controlling-beta1,sec:controlling-beta2}. Finally \cref{thm:rates-general} is given in \cref{sec:bound-in-expect}.

\begin{theorem}\label{thm:bound-expectation-depends-betas}
Let $\la>0$. With the definitions in \cref{sec:notation} and \cref{thm:analytic-decomposition}, we have
\eqal{
  \Expect \|\ghat - \gstar\|_\LXPH \leq 2 \left(1 + \frac{\sqrt{\Expect\beta_1^2}}{\la}\right)^{1/2}\left(\frac{\A_{1/2}(\la)^2\Expect\beta_1^2 }{\la}  + \frac{\Expect\beta_2^2}{\la}\right)^{1/2} + \la\A_1(\la).
}
\end{theorem}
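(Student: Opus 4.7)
The plan is to take the expectation of the pointwise bound given by \cref{thm:analytic-decomposition}, namely
$$\|\ghat - \gstar\|_\LXPH \leq \left(\frac{1}{\sqrt{\la}} + \frac{\beta_1^{1/2}}{\la}\right)\Big(\beta_1 \A_{1/2}(\la) + \beta_2\Big) + \la \A_1(\la),$$
and convert it into a bound on second moments of $\beta_1$ and $\beta_2$. The term $\la\A_1(\la)$ is deterministic, so it carries over unchanged; the work is on the stochastic product term.

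The main tool will be the Cauchy--Schwarz inequality: writing the product as $\Expect[X \cdot Y]$ with $X = \la^{-1/2} + \beta_1^{1/2}/\la$ and $Y = \A_{1/2}(\la)\beta_1 + \beta_2$, we have $\Expect[XY] \leq \sqrt{\Expect[X^2]\,\Expect[Y^2]}$. The point of this decomposition is that both $X^2$ and $Y^2$ can be expanded via the elementary inequality $(a+b)^2 \leq 2a^2 + 2b^2$, reducing everything to $\Expect \beta_1$, $\Expect \beta_1^2$ and $\Expect \beta_2^2$. Explicitly,
$$\Expect[X^2] \leq \frac{2}{\la} + \frac{2\,\Expect \beta_1}{\la^2}, \qquad \Expect[Y^2] \leq 2\,\A_{1/2}(\la)^2\,\Expect\beta_1^2 + 2\,\Expect\beta_2^2.$$
To match the form of the target bound, which features $\sqrt{\Expect \beta_1^2}$ rather than $\Expect \beta_1$, I invoke Jensen's inequality: $\Expect \beta_1 \leq \sqrt{\Expect \beta_1^2}$, giving $\Expect[X^2] \leq (2/\la)\big(1 + \sqrt{\Expect\beta_1^2}/\la\big)$.

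Multiplying the two bounds inside the square root and pulling the constant $4$ outside as a $2$ yields exactly
$$\sqrt{\Expect[X^2]\,\Expect[Y^2]} \leq 2\left(1 + \frac{\sqrt{\Expect\beta_1^2}}{\la}\right)^{\!1/2}\!\left(\frac{\A_{1/2}(\la)^2\,\Expect\beta_1^2}{\la} + \frac{\Expect\beta_2^2}{\la}\right)^{\!1/2}.$$
Adding back the deterministic tail $\la\A_1(\la)$ completes the bound.

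There is no real obstacle here: the whole argument is a routine moment-inequality manipulation. The only choice with any substance is to apply Cauchy--Schwarz on the product $XY$ rather than trying to bound each factor separately in $L^\infty$; this is what makes the contributions of $\beta_1$ and $\beta_2$ appear only through their second moments (which can then be controlled by concentration), and the additional $\sqrt{\Expect \beta_1^2}/\la$ factor comes out cleanly from Jensen applied to the cross term in $X^2$.
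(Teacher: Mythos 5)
Your proof is correct and follows exactly the same route as the paper's: take expectations of the pointwise bound from \cref{thm:analytic-decomposition}, apply Cauchy--Schwarz to the product of the two random factors, expand each square via $(a+b)^2 \leq 2a^2 + 2b^2$, and finish with Jensen's inequality $\Expect\beta_1 \leq \sqrt{\Expect\beta_1^2}$ to get the stated form. There is nothing to flag.
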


\begin{proof}

Let $a = \frac{1}{\sqrt{\la}}$, $b = \frac{1}{\la}$, $c = \|L_\la^{-1/2}G\|_\HS$ and $d = \la\|L_\la^{-1}G\|_\HS$. Then,
\eqal{
  \Expect \|\ghat - \gstar\|_\LXPH & \leq \Expect (a + b\beta_1^{1/2})(c\beta_1 + \beta_2) + d \\
  & \leq \sqrt{\Expect (a + b\beta_1^{1/2})^2 \Expect(c\beta_1 + \beta_2)^2} + d\\
  & \leq \sqrt{4 (a^2 + b^2\Expect\beta_1)(c^2\Expect\beta_1^2 + \Expect\beta_2^2)} + d \\
  & \leq 2 \sqrt{(a^2 + b^2\sqrt{\Expect\beta_1^2})(c^2\Expect\beta_1^2 + \Expect\beta_2^2)} + d,
}
as desired.
\end{proof}
The rest of this section will be devoted to characterizing the behavior of $\Expect \beta_1^2$ and $\Expect \beta_2^2$ in order to obtain a more interpretable learning rates for the estimator proposed in this work.

\subsection[alt-text]{Bounding $\Expect \beta_1^2$}\label{sec:controlling-beta1}

Denote $\zeta_{x_{i_j},p_j} = k_{x_{i_j},p_j}\otimes k_{x_{i_j},p_j} - C$. First, we show that $\Expect \zeta_{x_{i_j},p_j} = 0$. 

\begin{lemma}\label{lem:zeta-zero-mean}
With the definition above, when $x_1,\dots,x_n$ are identically distributed, we have 
$$
\Expect ~ \zeta_{x_{i_j},p_j} = 0
$$
\end{lemma}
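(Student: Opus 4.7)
The plan is to unwind the two-stage sampling protocol from \cref{alg:self-learning} and apply the tower property of conditional expectation. First I would identify the sources of randomness: the training points $x_1, \dots, x_n$ are i.i.d.\ from $\rho_\X$; the index $i_j$ is drawn uniformly from $\{1,\dots,n\}$; and, conditional on $\chi_j = x_{i_j}$, the part $p_j$ is drawn from $\pi(\cdot\,|\,x_{i_j})$. Since $\zeta_{x_{i_j},p_j}$ depends on the training set only through the single point $x_{i_j}$, it suffices to compute the marginal law of $(x_{i_j}, p_j)$.

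Next I would condition on $i_j = i$ for arbitrary $i \in \{1,\dots,n\}$. Because the $x_i$ are identically distributed with law $\rho_\X$, $x_{i_j}$ conditional on $i_j = i$ also has law $\rho_\X$, independently of $i$. Averaging over $i_j$ therefore contributes nothing, and the joint law of $(x_{i_j}, p_j)$ coincides with $\pi(p|x)\, d\rho_\X(x)$, which is precisely the measure defining $C$ in \cref{sec:notation}.

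Plugging this in gives
\[
\Expect\bigl[k_{x_{i_j},p_j}\otimes k_{x_{i_j},p_j}\bigr]
= \int_{\X\times\P} k_{x,p}\otimes k_{x,p}\, d\pi(p|x)\,d\rho_\X(x) = C,
\]
so by linearity $\Expect\, \zeta_{x_{i_j},p_j} = C - C = 0$. This is a short verification with essentially no obstacle: the only subtlety is to observe that the joint random experiment ``pick a training point, then sample a part'' reproduces the population measure $\pi(p|x)\,d\rho_\X(x)$ that defines $C$, so the auxiliary-set statistic $k_{\chi_j,p_j}\otimes k_{\chi_j,p_j}$ is an unbiased estimator of $C$.
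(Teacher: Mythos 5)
Your proof is correct and follows essentially the same route as the paper: average over the uniformly drawn index $i_j$, use that the $x_i$ are identically distributed so each term has the common law $\rho_\X$, and conclude that the joint law of $(x_{i_j},p_j)$ is $\pi(p|x)\,d\rho_\X(x)$, hence $\Expect[k_{x_{i_j},p_j}\otimes k_{x_{i_j},p_j}]=C$. Nothing further needed.
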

\begin{proof}
Since $x_1,\dots,x_n$ are identically distributed, for any $j=1,\dots,m$, we have
\eqals{
  \Expect~ k_{x_{i_j},p_j} \otimes k_{x_{i_j},p_j} & = \frac{1}{n}\sum_{i_j=1}^n \int_{\P\times\X} k_{x_{i_j},p_j} \otimes k_{x_{i_j},p_j} ~d\pp(p_j|x_{i_j})d\rhox(x_{i_j}) \\
  & = \int_{\P\times\X} k_{x,p} \otimes k_{x,p} ~d\pp(p|x)d\rhox(x)\\
  & = C,
}
as desired
\end{proof}

\begin{lemma}\label{lem:expectation-bound-c-chat}
With the definitions of Section~\ref{sec:notation}
let  $Q_1 = \Expect\|\zeta_{x,p}\|_\HS^2$ and
\eqal{
  \mathfrak{C} = \int_{\P\times\X} \zeta_{x,p} \zeta_{x,p'} ~d\pp(p|x)d\pp(p'|x)d\rhox(x)
} 
\eqals{
  \Expect\|\Chat - C\|_\HS^2 = \frac{Q_1}{m} + \frac{ (m-1)}{m} \frac{\tr(\mathfrak{C})}{n}.
}
\end{lemma}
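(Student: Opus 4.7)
The plan is to expand the squared Hilbert--Schmidt norm of the sum $\Chat - C = \tfrac{1}{m}\sum_{j=1}^m \zeta_{x_{i_j},p_j}$ into a double sum over $j,j'\in\{1,\dots,m\}$, then classify the $m^2$ terms according to whether the diagonal ($j=j'$), off-diagonal with repeated training index ($j\neq j'$ but $i_j=i_{j'}$), or off-diagonal with distinct training indices ($j\neq j'$ and $i_j\neq i_{j'}$) case occurs. Each class contributes a clean expectation, and summing with the right multiplicities gives the stated formula.

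\textbf{Step 1: expansion.} First I would write
\[
\|\Chat - C\|_\HS^2 \;=\; \frac{1}{m^2}\sum_{j,j'=1}^m \langle \zeta_{x_{i_j},p_j},\, \zeta_{x_{i_{j'}},p_{j'}}\rangle_\HS,
\]
so that $\Expect \|\Chat-C\|_\HS^2$ reduces to computing $\Expect\langle\zeta_{x_{i_j},p_j},\zeta_{x_{i_{j'}},p_{j'}}\rangle_\HS$ for each pair $(j,j')$.

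\textbf{Step 2: case analysis.} For $j=j'$, by definition $\Expect\|\zeta_{x_{i_j},p_j}\|_\HS^2 = Q_1$, contributing $m\,Q_1$ to the double sum. For $j\neq j'$, the sampling rule from \cref{alg:self-learning} gives $\P(i_j=i_{j'})=1/n$ and $\P(i_j\neq i_{j'})=(n-1)/n$, and conditional on $i_j$ and $i_{j'}$ the parts $p_j,p_{j'}$ are drawn independently from $\pi(\cdot|x_{i_j})$ and $\pi(\cdot|x_{i_{j'}})$ respectively. In the case $i_j\neq i_{j'}$, the training points $x_{i_j},x_{i_{j'}}$ are two distinct i.i.d.\ samples from $\rhox$, hence $\zeta_{x_{i_j},p_j}$ and $\zeta_{x_{i_{j'}},p_{j'}}$ are independent; together with \cref{lem:zeta-zero-mean} ($\Expect\zeta=0$) this forces the cross expectation to vanish. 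In the case $i_j=i_{j'}$, writing $x=x_{i_j}=x_{i_{j'}}$ and using the identity $\langle A,B\rangle_\HS=\tr(AB)$ for self-adjoint operators yields
\[
\Expect\big[\langle \zeta_{x,p_j},\zeta_{x,p_{j'}}\rangle_\HS \,\big|\, i_j=i_{j'}\big] \;=\; \int \tr\big(\zeta_{x,p}\,\zeta_{x,p'}\big)\, d\pp(p|x)\,d\pp(p'|x)\,d\rhox(x) \;=\; \tr(\mathfrak{C}).
\]

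\textbf{Step 3: assemble.} Combining, for each of the $m(m-1)$ off-diagonal pairs,
\[
\Expect\langle\zeta_{x_{i_j},p_j},\zeta_{x_{i_{j'}},p_{j'}}\rangle_\HS = \frac{1}{n}\tr(\mathfrak{C}) + \frac{n-1}{n}\cdot 0 = \frac{\tr(\mathfrak{C})}{n},
\]
so that
\[
\Expect\|\Chat-C\|_\HS^2 \;=\; \frac{1}{m^2}\Bigl(m\,Q_1 + m(m-1)\,\frac{\tr(\mathfrak{C})}{n}\Bigr) \;=\; \frac{Q_1}{m} + \frac{m-1}{m}\,\frac{\tr(\mathfrak{C})}{n},
\]
which is the claim.

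\textbf{Main obstacle.} The only subtle point is keeping track of the two-level sampling scheme: the outer i.i.d.\ training set and the inner conditional sampling of parts. Specifically one must verify that $i_j=i_{j'}$ is the only mechanism by which $\zeta_{x_{i_j},p_j}$ and $\zeta_{x_{i_{j'}},p_{j'}}$ become dependent, which is precisely why the factor $(m-1)/n$ (rather than, say, $1/n$ or $(m-1)/(mn)$) appears in front of $\tr(\mathfrak{C})$ and distinguishes this from the purely i.i.d.\ case.
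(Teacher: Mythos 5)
Your proof is correct and follows essentially the same route as the paper's: expand the Hilbert--Schmidt norm into a double sum, split on $j=j'$ versus $j\neq j'$, then further split the off-diagonal case on whether the resampled training indices $i_j, i_{j'}$ coincide (probability $1/n$) or not, using \cref{lem:zeta-zero-mean} to kill the latter contribution. The only cosmetic difference is that you phrase the inner case split probabilistically whereas the paper writes out the explicit normalized sum $\tfrac{1}{n^2}\sum_{i_j,i_h}R^{j,h}_{i_j,i_h}$; the arithmetic is identical.
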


\begin{proof}


From the definition of $\Chat$, we have

\eqal{\label{eq:expectation-bound-c-chat}
  \Expect\|\Chat - C\|_\HS^2 = \Expect\|\frac{1}{m} \sum_{j=1}^m \zeta_{x_{i_j},p_j}\|_\HS^2 = \frac{1}{m^2} \sum_{j,h=1}^m \Expect ~\tr \left( ~ \zeta_{x_{i_j},p_j} \zeta_{x_{i_h},p_h}\right)
}
We consider separately the elements in the sum that correspond to the case $j=h$ and $j\neq h$. 

\paragraph{Case $j = h$} We have
\eqals{
  \Expect~\tr\left(\zeta_{x_{i_j},p_j}\zeta_{x_{i_h},p_h}\right) = \Expect\|\zeta_{x_{i_j},p_j}\|_\HS^2 = Q_1
}

\paragraph{2. Case $j\neq h$} We have $\Expect~\tr\left( \zeta_{x_{i_j},p_j} \zeta_{x_{i_h},p_h}\right) = \frac{1}{n^2}\sum_{i_j,i_h = 1}^n R_{i_j,i_h}^{j,h}$ where
\eqals{
  R_{u,v}^{j,h} = \int_{\P\times\X} \tr(\zeta_{x_{u},p_j}\zeta_{x_{v},p_h}) ~d\pp(p_j|x_{u})d\pp(p_h|x_{v})d\rhox(x_{1})\cdots d\rhox(x_{n}).
}
We consider separately the case $i_j=i_h$ and $i_j\neq i_h$.

\paragraph{Case $j\neq h$ and $i_j=i_h$} We have that 
\eqals{
  R_{i_j,i_j}^{j,h} & = \int_{\P\times\X} \tr\left( \zeta_{x_{i_j},p_j} \zeta_{x_{i_j},p_h} \right) ~d\pp(p_j|x_{i_j})d\pp(p_h|x_{i_j})d\rhox(x_{i_j}) \\
  & = \int_{\P\times\X} \tr\left( \zeta_{x,p} \zeta_{x,p'} \right) ~d\pp(p|x)d\pp(p'|x)d\rhox(x) = \tr(\mathfrak{C}).
}

\paragraph{2.2 Case $j\neq h$ and $i_j\neq i_h$} We have that 
\eqals{
  R_{i_j,i_h}^{j,h} & = \int \tr\left( \zeta_{x_{i_j},p_j} \zeta_{x_{i_h},p_h} \right) ~d\pp(p_j|x_{i_j})d\pp(p_h|x_{i_h})d\rhox(x_{i_j})d\rhox(x_{i_h}) \\
  & = \int \tr\left( \zeta_{x,p} \zeta_{x',p'} \right) ~d\pp(p|x)d\pp(p'|x')d\rhox(x)d\rhox(x') \\
  & = \tr\left( \int  \zeta_{x,p} ~d\pp(p|x)d\rhox(x)~ \int \zeta_{x',p'} ~d\pp(p'|x')d\rhox(x') \right) \\
  & = \|\Expect~ \zeta_{x,p}\|_\HS^2 = 0
}
where the last equality follows from the fact that the $\zeta_{x,p}$ have zero mean according to \cref{lem:zeta-zero-mean}.

\paragraph{Combining the above cases} Note that in \eqref{eq:expectation-bound-c-chat}, Case $1$ occurs $m$ times and Case $2$ occurs the remaining $m(m-1)$ times. Therefore, we have
\eqals{
  \Expect\|\Chat - C\|_\HS^2 = \frac{Q_1}{m} + \frac{m-1}{m} \frac{1}{n^2}\sum_{i_j,i_h=1}^n R_{i_j,i_h}^{j,h}
}
Now, for the second term on the right hand side, Case $2.1$ occurs $n$ times while Case $2.2$ occurs the remaining $n(n-1)$ times, leading to the desired result.
\end{proof}

\begin{lemma}\label{lem:characterization-Cfrak}
With the notation of \cref{lem:expectation-bound-c-chat} and the definition of $\mathsf{q}$ in \eqref{eq:def-constants-q}, we have
\eqals{
\tr(\mathfrak{C}) = \mathfrak{c}_1 - \mathfrak{c}_2 = \mathsf{q},
}
where
\eqals{
  \mathfrak{c}_1 & = \int k\big((x,p),(x,p')\big)^2 ~ d\pp(p|x)d\pp(p'|x)d\rhox(x) \\
  \mathfrak{c}_2 & = \int k\big((x,p),(x',p')\big)^2 ~d\pp(p|x)d\pp(p'|x')d\rhox(x)d\rhox(x').
}
\end{lemma}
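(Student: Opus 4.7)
My approach is purely computational: expand the product $\zeta_{x,p}\zeta_{x,p'} = (k_{x,p}\otimes k_{x,p} - C)(k_{x,p'}\otimes k_{x,p'} - C)$ into four terms, take traces, and integrate, relying on three elementary facts. First, for rank-one operators on a Hilbert space, $\tr\big((a\otimes b)(c\otimes d)\big) = \scal{a}{c}\scal{b}{d}$, so in particular $\tr\big((k_{x,p}\otimes k_{x,p})(k_{x,p'}\otimes k_{x,p'})\big) = \scal{k_{x,p}}{k_{x,p'}}^2 = k\big((x,p),(x,p')\big)^2$ by the reproducing property. Second, $\int k_{x,p}\otimes k_{x,p}\, d\pi(p|x)d\rhox(x) = C$ by the definition of $C$ given in \cref{sec:notation}. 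Third, $\pi(\cdot|x)$ is a probability measure, so any variable that does not appear in the integrand integrates out to $1$.

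Applying these three facts term by term yields: the diagonal term $(k_{x,p}\otimes k_{x,p})(k_{x,p'}\otimes k_{x,p'})$ integrates to $\mathfrak{c}_1$; the cross term $-(k_{x,p}\otimes k_{x,p})C$ has trace $-\scal{k_{x,p}}{Ck_{x,p}}$, and after integrating $p'$ trivially and then $p,x$ against $C$, it contributes $-\tr(C^2)$; by symmetry the other cross term $-C(k_{x,p'}\otimes k_{x,p'})$ also contributes $-\tr(C^2)$; finally the constant term $C^2$ contributes $+\tr(C^2)$. Summing gives $\tr(\mathfrak{C}) = \mathfrak{c}_1 - \tr(C^2)$.

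To identify $\tr(C^2)$ with $\mathfrak{c}_2$, I would compute
\[
\tr(C^2) \;=\; \tr\!\left(\int (k_{x,p}\otimes k_{x,p})(k_{x',p'}\otimes k_{x',p'})\, d\pi(p|x)d\pi(p'|x')d\rhox(x)d\rhox(x')\right)
\]
and apply the rank-one trace identity together with the reproducing property once more to recognize the inner integrand as $k\big((x,p),(x',p')\big)^2$. This gives exactly $\mathfrak{c}_2$, so $\tr(\mathfrak{C}) = \mathfrak{c}_1 - \mathfrak{c}_2$.

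The final step is to unpack the definition of $\mathsf q$ (equation displayed in the appendix overview): since the variable $r$ in the first summand of $\scov_{p,q}(x,x')$ and $x'$ integrate trivially against probability measures, the first term collapses to $\mathfrak{c}_1$; and since $q$ does not appear in the second summand, it integrates out to give $\mathfrak{c}_2$ (after relabelling $r\leftrightarrow p'$). Hence $\mathsf q = \mathfrak{c}_1 - \mathfrak{c}_2$, matching $\tr(\mathfrak{C})$. I do not anticipate a real obstacle here — the whole proof is bookkeeping of rank-one tensor traces and Fubini — but care is needed to track which $\pi(\cdot|x)$ factors integrate trivially in each of the four expanded terms, since the symmetry between the roles of $p$ and $p'$ (and of $x,x'$ in $\tr(C^2)$) is what produces the two cancelling copies of $\tr(C^2)$.
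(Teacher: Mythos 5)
Your proof is correct and follows essentially the same route as the paper's: expand $\zeta_{x,p}\zeta_{x,p'}$ into four terms, use the rank-one trace identity together with the reproducing property to evaluate each, recognize the diagonal term as $\mathfrak{c}_1$ and the three remaining terms as combining to $-\tr(C^2)=-\mathfrak{c}_2$, and finally match $\mathfrak{c}_1-\mathfrak{c}_2$ against the definition of $\mathsf{q}$ by noting which of $p,q,r,x'$ integrate out trivially. The only cosmetic difference is that the paper first records the pointwise identity for $\tr(\zeta_{x,p}\zeta_{x',p'})$ with distinct $x,x'$ before specializing to $x'=x$, whereas you work with $x'=x$ throughout.
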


\begin{proof}
Note that by definition of $\zeta$ and the reproducing property of the kernel $k$, for any $x,x'\in\X$ and $p,p'\in\P$ the following holds
\eqals{
\tr(\zeta_{x,p} \zeta_{x',p'}) & = k\big((x,p),(x',p')\big)^2 - \tr\left(C ~ \Big(k_{x,p}\otimes k_{x,p}\Big) \right) \\ 
& \quad - \tr\left(C ~ \Big(k_{x',p'}\otimes k_{x',p'}\Big) \right) + \tr(C^2).
}
Then, by definition of $C = \Expect ~ k_{x,p}\otimes k_{x,p}$, we have
\eqals{
  \tr(\mathfrak{C}) & = \int \tr\left( \zeta_{x,p} \zeta_{x,p'} \right) ~d\pp(p|x)d\pp(p'|x)d\rhox(x) \\
  & = - \tr(C^2) +  \int k\big((x,p),(x,p')\big)^2 ~d\pp(p|x)d\pp(p'|x)d\rhox(x) \\
  & = - \tr(C^2) + \int k\big((x,p),(x,p')\big)^2 ~ d\pp(p|x)d\pp(p'|x)d\rhox(x) \\
  & = \mathfrak{c}_1 - \tr(C^2).
}
To conclude, 
\eqals{
  \tr(C^2) & = \tr\left(\left(\int k_{x,p}\otimes k_{x,p} ~d\pp(p|x)d\rhox(x)\right)\left(\int k_{x',p'}\otimes k_{x',p'} ~d\pp(p'|x')d\rhox(x')\right)\right)\\
  & =  \int k\big((x,p),(x',p')\big)^2 ~d\pp(p|x)d\pp(p'|x')d\rhox(x)d\rhox(x')\\
  & = \mathfrak{c}_2.
}
The last step consists in noting that $\mathfrak{c}_1 - \mathfrak{c}_2$ is exactly the definition of $\msf{q}$ in \eqref{eq:def-constants-q}.
\end{proof}

\subsection[alt-text]{Bounding $\Expect \beta_2^2$}\label{sec:controlling-beta2}

The analysis for $\Expect \beta_2^2$ is analogous to that of $\Expect \beta_1^2$. For completeness we report it below. Denote $\eta_{x_{i_j},p_j,w_j} = k_{x_{i_j},p_j}\otimes \varphi(w_j) - B$. We show that $\Expect ~\eta_{x_{i_j},p_j,w_j} = 0$. 

\begin{lemma}\label{lem:eta-zero-mean}
With the definition above, when $x_1,\dots,x_n$ are identically distributed, we have 
$$
\Expect ~ \eta_{x_{i_j},p_j,w_j} = 0
$$
\end{lemma}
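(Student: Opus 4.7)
The plan is to mirror the proof of \cref{lem:zeta-zero-mean} that established the analogous zero-mean property for $\zeta_{x_{i_j},p_j}$. Since $B$ is a deterministic operator and $\eta_{x_{i_j},p_j,w_j} = k_{x_{i_j},p_j}\otimes\varphi(w_j) - B$ by definition, it suffices to show that $\Expect\,[k_{x_{i_j},p_j}\otimes\varphi(w_j)] = B$.

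First I would unwind the auxiliary-sampling procedure from \cref{sec:parts-dont-correspond}: conditional on the training set $(x_i,y_i)_{i=1}^n$, the index $i_j$ is drawn uniformly from $\{1,\dots,n\}$, the part $p_j$ is drawn from $\pp(\cdot|x_{i_j})$, and $w_j$ is drawn from $\mu(\cdot|y_{i_j},x_{i_j},p_j)$. Applying the tower property and the linearity of the tensor product then yields
\[
\Expect\bigl[k_{x_{i_j},p_j}\otimes\varphi(w_j) \,\bigm|\, (x_i,y_i)_{i=1}^n\bigr] \;=\; \frac{1}{n}\sum_{i=1}^n \int k_{x_i,p}\otimes\varphi(w)\,d\mu(w|y_i,x_i,p)\,d\pp(p|x_i).
\]

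Next I would take the outer expectation over the i.i.d. training sample, using that each $(x_i,y_i)\sim\rho$. All $n$ summands then contribute identically, so the empirical average collapses to a single integral
\[
\int k_{x,p}\otimes\varphi(w)\,d\mu(w|y,x,p)\,d\pp(p|x)\,d\rho(y,x),
\]
which is precisely the definition of $B$ recalled in \cref{sec:notation}. Subtracting $B$ then gives $\Expect\,\eta_{x_{i_j},p_j,w_j} = 0$.

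There is no substantive obstacle: the argument is a straightforward Fubini/tower-property computation. The only bit of care needed relative to the $\zeta$ case is keeping the $y$-dependence of the sampling distribution $\mu(\cdot|y,x,p)$ explicit when unrolling the conditional expectation, since here the summand depends on $y_i$ through the inner integral against $\mu$, whereas $\zeta_{x,p}$ involved only the marginal over $x$.
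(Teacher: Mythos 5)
Your proof is correct and follows the same route as the paper's: both write $\Expect[k_{x_{i_j},p_j}\otimes\varphi(w_j)]$ as the uniform average over $i_j\in\{1,\dots,n\}$ of the integral against $\mu(\cdot|y_{i_j},x_{i_j},p_j)\,\pp(\cdot|x_{i_j})\,\rho$, collapse it to a single integral using that the $(x_i,y_i)$ are identically distributed, and identify the result with the definition of $B$. You merely spell out the tower-property step that the paper leaves implicit.
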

\begin{proof}
Since $x_1,\dots,x_n$ are identically distributed, for any $j=1,\dots,m$, we have
\eqals{
  \Expect~ k_{x_{i_j},p_j} \otimes \varphi(w_j) & = \frac{1}{n}\sum_{i_j=1}^n \int k_{x_{i_j},p_j} \otimes \varphi(w_j) ~d\mu(w_j|y_{i_j},x_{i_j},p_j)d\pp(p_j|x_{i_j})d\rho(y_{i_j},x_{i_j}) \\
  & = \int k_{x,p} \otimes \varphi(w) ~d\mu(w|y,x,p)d\pp(p|x)d\rho(y,x) \\
  & = B,
}
as desired.
\end{proof}

\begin{lemma}\label{lem:expectation-bound-b-bhat}
Let  $Q_2 = \Expect\|\eta_{x,p,w}\|_\HS^2$ and
\eqal{
  \mathfrak{B} = \int\eta_{x,p,w}^* \eta_{x,p',w'} ~d\mu(w|y,x,p)d\mu(w'|y,x,p')d\pp(p|x)d\pp(p'|x)d\rho(y,x) 
} 
\eqals{
  \Expect\|\Bhat - B\|_\HS^2 = \frac{Q_2}{m} + \frac{ (m-1)}{m} \frac{\tr(\mathfrak{B})}{n}.
}
\end{lemma}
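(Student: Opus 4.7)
The plan is to mirror closely the proof of \cref{lem:expectation-bound-c-chat} for $\Chat - C$, since $\Bhat - B$ has the same additive structure as a sum of i.i.d.\ terms from the auxiliary sampling procedure. Writing $\Bhat - B = \frac{1}{m}\sum_{j=1}^m \eta_{x_{i_j},p_j,w_j}$, I would first expand
\eqals{
\Expect\|\Bhat - B\|_\HS^2 \;=\; \frac{1}{m^2} \sum_{j,h=1}^m \Expect \tr\bigl(\eta_{x_{i_j},p_j,w_j}^* \eta_{x_{i_h},p_h,w_h}\bigr),
}
and split the double sum into the diagonal ($j=h$) and off-diagonal ($j\neq h$) contributions.

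For the diagonal terms, by identical distribution each summand equals $\Expect\|\eta_{x,p,w}\|_\HS^2 = Q_2$, contributing a total of $Q_2/m$. For the off-diagonal terms I would further split according to whether the two training indices $i_j$ and $i_h$ coincide. When $i_j \neq i_h$, the two factors involve independent training samples, so the expectation factorizes and yields $\|\Expect \eta_{x,p,w}\|_\HS^2$, which vanishes by \cref{lem:eta-zero-mean}. When $i_j = i_h$, the two factors share the same training example $(x_{i_j}, y_{i_j})$, but the parts $p_j, p_h$ are drawn independently from $\pi(\cdot|x_{i_j})$ and the observations $w_j, w_h$ are drawn independently from $\mu(\cdot|y_{i_j},x_{i_j},p_j)$ and $\mu(\cdot|y_{i_j},x_{i_j},p_h)$ respectively, so integrating out all the inner randomness exactly reproduces the definition of $\tr(\mathfrak{B})$.

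To finish, I would count occurrences: there are $m$ terms with $j=h$, and among the $m(m-1)$ off-diagonal terms a fraction $1/n$ has $i_j=i_h$ while the remaining fraction $1-1/n$ has $i_j\neq i_h$. Combining these counts gives
\eqals{
\Expect\|\Bhat - B\|_\HS^2 \;=\; \frac{Q_2}{m} + \frac{m-1}{m}\cdot\frac{\tr(\mathfrak{B})}{n},
}
as claimed.

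The only point requiring care, and the main (mild) obstacle, is the bookkeeping of independence between $p_j, p_h$ and $w_j, w_h$ when $i_j = i_h$: one must verify that conditionally on the shared training point $x_{i_j}$, the pairs $(p_j,w_j)$ and $(p_h,w_h)$ are independent, so the double expectation over the inner randomness is precisely the one appearing in the definition of $\mathfrak{B}$. This is immediate from the description of {\sc Generate} in \cref{alg:self-learning}, where $p_j$ and $w_j$ are sampled afresh for each index $j$, even when the base index $i_j$ is repeated.
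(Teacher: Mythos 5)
Your proof is correct and follows exactly the same strategy as the paper's: expand the Hilbert--Schmidt norm of the average of $m$ zero-mean terms, split the double sum into $j=h$ and $j\neq h$, further split the latter on whether $i_j = i_h$, invoke \cref{lem:eta-zero-mean} for the independent case, and count occurrences. The paper itself frames this argument as a verbatim analogue of \cref{lem:expectation-bound-c-chat}, which is precisely the parallel you draw, and your explicit remark on conditional independence of $(p_j,w_j)$ and $(p_h,w_h)$ given the shared training point matches how the paper's $Z^{j,h}_{i_j,i_j}$ is evaluated.
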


\begin{proof}


From the definition of $\Bhat$, we have

\eqal{\label{eq:expectation-bound-b-bhat}
  \Expect\|\Bhat - B\|_\HS^2 = \Expect\|\frac{1}{m} \sum_{j=1}^m \eta_{x_{i_j},p_j,w_j}\|_\HS^2 = \frac{1}{m^2} \sum_{j,h=1}^m \Expect ~\tr \left( ~ \eta_{x_{i_j},p_j,w_j}^* \eta_{x_{i_h},p_h,w_h}\right)
}
We consider separately the elements in the sum that correspond to the case $j=h$ and $j\neq h$.

\paragraph{1. Case $j = h$} We have
\eqals{
  \Expect~\tr\left(\eta_{x_{i_j},p_j,w_j}^*\eta_{x_{i_h},p_h,w_h}\right) = \Expect\|\eta_{x_{i_j},p_j,w_j}\|_\HS^2 = Q_2.
}

\paragraph{2. Case $j\neq h$} We have $\Expect~\tr\left( \eta_{x_{i_j},p_j,w_j}^* \eta_{x_{i_h},p_h,w_h}\right) = \frac{1}{n^2}\sum_{i_j,i_h = 1}^n Z_{i_j,i_h}^{j,h}$ where
\eqals{
  Z_{u,v}^{j,h} = \int \tr(\eta_{x_{u},p_j,w_j}^* & \eta_{x_{v},p_h,w_h})  ~d\mu(w_j|y_{i_j},x_{i_j},p_j)d\mu(w_h|y_{i_h},x_{i_h},p_h) \times \\
  & \times d\pp(p_j|x_{u})d\pp(p_h|x_{v})d\rho(y_1,x_{1})\cdots d\rho(y_n,x_{n}).
}
We consider separately the case $i_j=i_h$ and $i_j\neq i_h$.\\

\paragraph{2.1 Case $j\neq h$ and $i_j=i_h$} We have that 
\eqals{
  Z_{i_j,i_j}^{j,h} & = \int \tr\left( \eta_{x_{i_j},p_j,w_j}^* \eta_{x_{i_j},p_h,w_h} \right) ~d\mu(w_j|y_{i_j},x_{i_j},p_j)d\mu(w_h|y_{i_j},x_{i_j},p_h) \times \\
  & \qquad\qquad\qquad\qquad\qquad\qquad\times d\pp(p_j|x_{i_j})d\pp(p_h|x_{i_j})d\rho(y_{i_j},x_{i_j}) \\
  & = \int \tr\left( \eta_{x,p,w}^* \eta_{x,p',w'} \right) ~d\mu(w|y,x,p)d\mu(w'|y,x,p')d\pp(p|x)d\pp(p'|x)d\rho(y,x) \\
  & = \tr(\mathfrak{B}).
}

\paragraph{2.2 Case $j\neq h$ and $i_j\neq i_h$} We have that 
\eqals{
  Z_{i_j,i_h}^{j,h} & = \int \tr\left( \eta_{x_{i_j},p_j,w_j}^* \eta_{x_{i_h},p_h,w_h} \right) ~d\mu(w_j|y_{i_j},x_{i_j},p_j)d\mu(w_h|y_{i_h},x_{i_h},p_h) \times \\
  & \qquad\qquad\qquad\qquad\qquad\qquad\times d\pp(p_j|x_{i_j})d\pp(p_h|x_{i_h})d\rho(y_{i_j},x_{i_j})d\rho(y_{i_h},x_{i_h}) \\
  & = \int \tr\left( \eta_{x,p,w}^* \eta_{x',p',w'} \right) ~d\mu(w|y,x,p)d\mu(w'|y',x',p') \times \\
  & \qquad\qquad\qquad\qquad\qquad\qquad\times d\pp(p|x)d\pp(p'|x')d\rho(y,x)d\rho(y',x') \\
  & = \tr\left( \int \eta_{x,p,w}^* ~d\mu(w|y,x,p)d\pp(p|x)d\rho(y,x)\times \right. \\
  &  \qquad\qquad\qquad\left. \times\int \eta_{x',p',w'}  ~d\mu(w'|y',x',p') d\pp(p'|x')d\rho(y',x') \right) \\
  & = \|\Expect ~\eta_{x,p,w}\|_\HS^2 = 0,
}
where the last equality follows from the fact that the $\eta_{x,p,w}$ have zero mean according to \cref{lem:eta-zero-mean}.

\paragraph{Combining the above cases} Note that in \eqref{eq:expectation-bound-b-bhat}, Case $1$ occurs $m$ times and Case $2$ occurs the remaining $m(m-1)$ times. Therefore, we have
\eqals{
  \Expect\|\Bhat - B\|_\HS^2 = \frac{Q_2}{m} + \frac{m-1}{m} \frac{1}{n^2}\sum_{i_j,i_h=1}^n Z_{i_j,i_h}^{j,h}
}
Now, for the second term on the right hand side, Case $2.1$ occurs $n$ times while Case $2.2$ occurs the remaining $n(n-1)$ times, leading to the desired result.
\end{proof}

\begin{lemma}\label{lem:characterization-Bfrak}
With the notation of \cref{lem:expectation-bound-b-bhat}, we have
\eqals{
\tr(\mathfrak{B}) = \mathfrak{b}_1 - \mathfrak{b}_2
}
where
\eqals{
  \mathfrak{b}_1 & = \int \scal{\gstar(x,p)}{\gstar(x,p')}_\hh~ k\big((x,p),(x,p')\big) ~ d\pp(p|x)d\pp(p'|x)d\rhox(x) \\
  \mathfrak{b}_2 & = \int \scal{\gstar(x,p)}{\gstar(x',p')}_\hh~ k\big((x,p),(x',p')\big) ~d\pp(p|x)d\pp(p'|x')d\rhox(x)d\rhox(x').
}
\end{lemma}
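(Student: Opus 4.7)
The plan mirrors the proof of \cref{lem:characterization-Cfrak}, but now the two factors of $\eta$ live in different spaces, so the bookkeeping is asymmetric. First I would expand $\eta_{x,p,w}^{*}\,\eta_{x,p',w'}$ using $\eta_{x,p,w}=k_{x,p}\otimes\varphi(w)-B$ into four bilinear pieces, take traces, and then integrate each against the measure appearing in $\mathfrak{B}$.

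Explicitly,
\eqals{
\eta_{x,p,w}^{*}\eta_{x,p',w'} &= (\varphi(w)\otimes k_{x,p})(k_{x,p'}\otimes\varphi(w')) - (\varphi(w)\otimes k_{x,p})B \\
&\quad - B^{*}(k_{x,p'}\otimes\varphi(w')) + B^{*}B.
}
Using the reproducing property, the first summand has trace $k((x,p),(x,p'))\scal{\varphi(w)}{\varphi(w')}_\hh$; using $(\varphi(w)\otimes k_{x,p})B = \varphi(w)\otimes(B^{*}k_{x,p})$ (and its analogue for the third term), the middle two rank-one summands have traces $\scal{B^{*}k_{x,p}}{\varphi(w)}_\hh$ and $\scal{B^{*}k_{x,p'}}{\varphi(w')}_\hh$; the fourth trivially contributes $\tr(B^{*}B)$.

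Next I would integrate against $d\mu(w|y,x,p)\,d\mu(w'|y,x,p')\,d\pp(p|x)\,d\pp(p'|x)\,d\rho(y,x)$. In the middle terms the unused primed (resp.~unprimed) variables drop out, and the identification $\int\varphi(w)\,d\mu(w|y,x,p)\,d\rho(y|x)=\gstar(x,p)$ from \cref{lm:char-fstar} converts each one into $\int\scal{k_{x,p}}{B\gstar(x,p)}_\ff\,d\pp(p|x)\,d\rhox(x)$; plugging in $B=\int k_{x',p'}\otimes\gstar(x',p')\,d\pp(p'|x')d\rhox(x')$ recognizes this quantity, as well as $\tr(B^{*}B)$, as $\mathfrak{b}_2$. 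With signs, the three contributions collapse to $-2\mathfrak{b}_2+\mathfrak{b}_2=-\mathfrak{b}_2$, exactly as the $\tr(C^2)$ cancellation in \cref{lem:characterization-Cfrak}. For the first term, Fubini together with the product structure $\mu(w|y,x,p)\mu(w'|y,x,p')$ at fixed $(y,x,p,p')$ factors the $w,w'$ integrals into an inner product of the corresponding $\gstar$ factors, yielding $\int k((x,p),(x,p'))\scal{\gstar(x,p)}{\gstar(x,p')}_\hh\,d\pp(p|x)\,d\pp(p'|x)\,d\rhox(x)=\mathfrak{b}_1$.

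The main obstacle I expect is the asymmetric tensor-product bookkeeping: since one factor of $\eta$ lives in $\ff$ and the other in $\hh$, the adjoints and trace identities (in particular $(\varphi(w)\otimes k_{x,p})B=\varphi(w)\otimes B^{*}k_{x,p}$ and $\tr(h_1\otimes h_2)=\scal{h_2}{h_1}_\hh$) must be applied with the convention $(h\otimes f)g=h\scal{f}{g}_\ff$ held consistently throughout. A secondary subtlety is legitimizing the interchange of inner product and the conditional $\mu$-integrations in the first term; this relies on the product form of $\mu(w|y,x,p)\mu(w'|y,x,p')$ at fixed $(y,x,p,p')$ and on the identification of the inner expectation with $\gstar$ via \cref{eq:char-fstar-general}.
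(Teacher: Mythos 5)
Your proof follows the paper's argument essentially verbatim: both expand $\eta_{x,p,w}^{*}\eta_{x,p',w'}$ into four pieces via $\eta=k_{\cdot}\otimes\varphi(\cdot)-B$, observe that each of the two rank-one cross terms integrates to $\tr(B^{*}B)$ so that the last three pieces collapse to $-\tr(B^{*}B)=-\mathfrak{b}_2$, and identify the integral of the leading term with $\mathfrak{b}_1$ by pulling the $\mu$- and $\rho(y|x)$-integrals inside the inner product to produce $\scal{\gstar(x,p)}{\gstar(x,p')}_\hh$. The tensor-product bookkeeping you flag (adjoints and trace identities under the paper's $(h\otimes f)g=h\scal{f}{g}_\ff$ convention) is handled correctly and matches the paper's computation.
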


\begin{proof}
Note that by definition of $\eta$ and the reproducing property of the kernel $k$, for any $x,x'\in\X$, $p,p'\in\P$ and $w,w'\in\pY$ the following holds
\eqals{
\tr(\eta_{x,p,w}^* \eta_{x',p',w'}) & = \scal{\varphi(w)}{\varphi(w')}_\hh k\big((x,p),(x',p')\big) - \tr\left(B^* ~ \Big(k_{x,p}\otimes\varphi(w)\Big) \right) \\ 
& \quad - \tr\left(B^* ~ \Big(k_{x',p'}\otimes\varphi(w')\Big) \right) + \tr(B^* B).
}
Then, by definition of $B = \Expect ~ k_{x,p}\otimes\varphi(w)$, we have
\eqals{
  \tr(\mathfrak{B}) & = \int \tr\left( \eta_{x,p,w}^* \eta_{x,p',w'} \right) ~d\mu(w|y,x,p)d\mu(w'|y,x,p')d\pp(p|x)d\pp(p'|x)d\rho(y,x) \\
  & = - \tr(B^* B) +  \int \scal{\varphi(w)}{\varphi(w')}_\hh k\big((x,p),(x',p')\big) ~d\mu(w|y,x,p)d\mu(w'|y,x,p') \times \\
  & \qquad\qquad\qquad\qquad\qquad\qquad\qquad\qquad\qquad\qquad \times d\pp(p|x)d\pp(p'|x)d\rho(y,x) \\
  & = - \tr(B^* B) + \int \scal{\gstar(x,p)}{\gstar(x,p')}_\hh~ k\big((x,p),(x,p')\big) ~ d\pp(p|x)d\pp(p'|x)d\rhox(x) \\
  & = \mathfrak{b}_1 - \tr(B^* B),
}
where in the third equality we used the definition of $\gstar(x,p) = \int \varphi(w)~d\mu(w|y,x,p)d\rho(y|x)$. Moreover, since $B$ can be written in terms of $\gstar$ as 
\eqals{
  B = \int k_{x,p}\otimes \gstar(x,p) ~d\pp(p|x)d\rhox(x)
}
we have 
\eqals{
  \tr(B^*B) & =  \int \scal{\gstar(x,p)}{\gstar(x',p')}_\hh~ k\big((x,p),(x',p')\big) ~d\pp(p|x)d\pp(p'|x')d\rhox(x)d\rhox(x')\\
  & = \mathfrak{b}_2,
}
as desired. 
\end{proof}

\subsection{Learning bound in expectation}\label{sec:bound-in-expect}

We introduce here the assumption that the target function $\gstar$ of the learning problem belongs to the RKHS where we are performing the optimization. 

\begin{assumption}\label{asm:gstar-in-G}
There exists a $\mathsf{G} \in \hh \otimes \ff$, such that almost everywhere on $X \times P$,
$$\mathsf{G}k_{x,p} = g^*(x,p).$$
\end{assumption}
\noindent The following results will leverage the assumption above. 
\begin{lemma}\label{lm:frakB-propto-frakC}
Under \cref{asm:gstar-in-G}, 
\eqals{
  \tr(\mathfrak{B}) ~~\leq~~ \|\mathsf{G}\|^2 \tr(\mathfrak{C}),
}
\end{lemma}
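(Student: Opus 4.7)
By the two characterization lemmas (\cref{lem:characterization-Cfrak} and \cref{lem:characterization-Bfrak}), the claim reduces to showing $\mathfrak{b}_1-\mathfrak{b}_2 \leq \|\mathsf{G}\|^2 (\mathfrak{c}_1-\mathfrak{c}_2)$. Under \cref{asm:gstar-in-G} we substitute $\gstar(x,p) = \mathsf{G} k_{x,p}$ so that the inner products appearing in $\mathfrak{b}_1,\mathfrak{b}_2$ become $\langle \mathsf{G} k_{x,p}, \mathsf{G} k_{x',p'}\rangle_\hh$. The first idea is to exploit the product structure of the integrands by passing to the tensor-product Hilbert space: using $\langle a,c\rangle_\hh\langle b,d\rangle_\ff = \langle a\otimes b,\,c\otimes d\rangle_{\hh\otimes\ff}$, set
\[
\Phi(x,p) \;:=\; \mathsf{G} k_{x,p}\otimes k_{x,p} \;\in\; \hh\otimes\ff,\qquad \Psi(x,p) \;:=\; k_{x,p}\otimes k_{x,p} \;\in\; \ff\otimes\ff,
\]
so that every scalar integrand in $\mathfrak{b}_1,\mathfrak{b}_2,\mathfrak{c}_1,\mathfrak{c}_2$ becomes an inner product between values of $\Phi$ (resp.\ $\Psi$).

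The second key observation is a variance identity. Writing $\bar\Phi(x)=\int \Phi(x,p)\,d\pi(p|x)$ and $\mu_\Phi=\int \bar\Phi(x)\,d\rhox(x)$, one has $\mathfrak{b}_1 = \int\|\bar\Phi(x)\|^2\,d\rhox(x)$ and $\mathfrak{b}_2 = \|\mu_\Phi\|^2$, hence
\[
\mathfrak{b}_1-\mathfrak{b}_2 \;=\; \int \|\bar\Phi(x)-\mu_\Phi\|_{\hh\otimes\ff}^2\, d\rhox(x),
\]
by the usual $\mathbb{E}\|Y\|^2 - \|\mathbb{E}Y\|^2 = \mathbb{E}\|Y-\mathbb{E}Y\|^2$ decomposition applied to the $\hh\otimes\ff$-valued random variable $\bar\Phi$. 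The same identity gives the analogous expression for $\mathfrak{c}_1-\mathfrak{c}_2$ in terms of $\bar\Psi$ and $\mu_\Psi$.

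The final step is to relate $\Phi$ to $\Psi$ through the tensor operator $T := \mathsf{G}\otimes I_\ff : \ff\otimes\ff \to \hh\otimes\ff$, which by construction satisfies $T\Psi(x,p) = \Phi(x,p)$ pointwise. By linearity of the integral this propagates to $\bar\Phi(x) = T\bar\Psi(x)$ and $\mu_\Phi = T\mu_\Psi$, so
\[
\|\bar\Phi(x)-\mu_\Phi\|^2 \;=\; \|T(\bar\Psi(x)-\mu_\Psi)\|^2 \;\leq\; \|T\|^2\,\|\bar\Psi(x)-\mu_\Psi\|^2.
\]
Integrating in $x$ and using the standard identity $\|\mathsf{G}\otimes I_\ff\|_{\mathrm{op}} = \|\mathsf{G}\|_{\mathrm{op}} \leq \|\mathsf{G}\|$ yields $\mathfrak{b}_1-\mathfrak{b}_2 \leq \|\mathsf{G}\|^2(\mathfrak{c}_1-\mathfrak{c}_2)$, which is the desired inequality.

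\textbf{Main obstacle.} No step is genuinely hard; the only subtle point is making sure the identifications between tensor-product Hilbert spaces and Hilbert--Schmidt operators are consistent with the operator-tensor convention used throughout the paper (recalling from \cref{sec:notation} that $h\otimes f$ denotes the operator $g\mapsto h\langle f,g\rangle$). Once this bookkeeping is fixed, the bound follows from operator-norm monotonicity applied to $T=\mathsf{G}\otimes I_\ff$ combined with the variance identity.
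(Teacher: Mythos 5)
Your proof is correct, and it is a mild but worthwhile variant of the paper's argument. The paper stays at the operator level: it shows $\tr(\mathfrak{B}) = \tr(\mathsf{G}^*\mathsf{G}\,\mathfrak{C})$ and then invokes the trace inequality $\tr(AB)\leq\|A\|\tr(B)$ for positive semidefinite $A,B$. To justify that step it must first establish that $\mathfrak{C}\geq 0$, which it does via $\mathfrak{C}=\Expect_x\,\zeta_x\zeta_x$ (incidentally, the paper's parenthetical remark that $\zeta_x$ itself is positive semidefinite is not needed and not quite right --- the correct observation is that $\zeta_x$ is self-adjoint, hence $\zeta_x\zeta_x=\zeta_x^2\geq 0$). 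You instead move one level down, to vectors in the tensor-product space $\hh\otimes\ff$ (resp.\ $\ff\otimes\ff$), identify $\tr(\mathfrak{B})$ and $\tr(\mathfrak{C})$ as variances via $\Expect\|\cdot\|^2-\|\Expect\cdot\|^2=\Expect\|\cdot-\Expect\cdot\|^2$, and then bound pointwise with $\|(\mathsf{G}\otimes I)v\|\leq\|\mathsf{G}\|\,\|v\|$. The two are essentially the same computation --- indeed $\tr(\mathsf{G}^*\mathsf{G}\,\zeta_x\zeta_x)=\|(\mathsf{G}\otimes I)\zeta_x\|^2_{\hh\otimes\ff}$, so the trace inequality applied to $\mathfrak{C}=\Expect\zeta_x\zeta_x$ is precisely your pointwise operator-norm bound integrated over $x$ --- but your route buys two small advantages: you never need to prove $\mathfrak{C}\geq 0$ (the variance identity makes nonnegativity automatic), and you rely only on the elementary fact $\|Tv\|\leq\|T\|\|v\|$ rather than a trace inequality for PSD operators. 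Your bookkeeping caveat about the $h\otimes f$ operator convention is well placed but does not cause any trouble here.
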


\begin{proof}
We begin first observing that $\mathfrak{C}$ is positive semidefinite since 
\eqals{
\mathfrak{C} & = \int \zeta_{x,p} \zeta_{x,p'} ~d\pp(p|x)d\pp(p'|x)d\rhox(x)  = \Expect ~\zeta_x \zeta_x
}
is the expectation of the random variable $\zeta_x \zeta_x$, where $\zeta_x = \int \zeta_{x,p}~d\pp(p|x)$ is positive semidefinite. Moreover, by the definition of $\mathfrak{C}$ in terms of $\zeta_{x,p} = k_{x,p}\otimes k_{x,p} - C$, we have
\eqals{
\mathfrak{C} & = \int \Big(k_{x,p}\otimes k_{x,p'}\Big) ~  \big((x,p),(x,p')\big) - \Big(k_{x,p}\otimes k_{x,p}\Big) C ~ d\pp(p|x)\pp(p'|x)\rhox(x)\\
& \quad + \int C^2 - C \Big(k_{x,p'}\otimes k_{x,p'}\Big) ~d\pp(p|x)\pp(p'|x)\rhox(x)\\
& = - C^2 + \int \Big(k_{x,p}\otimes k_{x,p'}\Big) ~  \big((x,p),(x,p')\big) ~d\pp(p|x)\pp(p'|x)\rhox(x) 
}
where we have used the definition of $C=\Expect ~ k_{x,p}\otimes k_{x,p}$. 

Now note that under \cref{asm:gstar-in-G}, for any $x,x'\in\X$ and $p,p'\in\P$
\eqals{
  \scal{\gstar(x,p)}{\gstar(x',p')}_\hh = \scal{G k_{x,p}}{G k_{x',p'}}_\hh = \tr\left(G^* G ~ \Big(k_{x,p} \otimes k_{x',p'}\Big)\right).
}
Therefore, substituting the above equation in $\mathfrak b_1$ and $\mathfrak b_2$ defined in \cref{lem:characterization-Bfrak}, we have
\eqals{
  \tr(\mathfrak{B}) & = \mathfrak{b}_1 - \mathfrak{b}_2 \\ 
  & = \tr\left(G^*G \left[ \int \Big(k_{x,p} \otimes k_{x,p'}\Big) ((x,p),(x,p')) ~d\pp(p|x)\pp(p'|x)\rhox(x) -  C^2\right] \right) \\
  & = \tr(G^*G ~ \mathfrak{C})\\
  & \leq \|G\|^2 \tr(\mathfrak{C})
}
where the last inequality follows from the fact that both $G^*G$ and $\mathfrak{C}$ are positive semidefinite.
\end{proof}

\begin{theorem}\label{thm:rates-general}
  $$
  \mathbb{E}~{\cal E}(\fhat~)- {\cal E}(\fstar)  ~~\leq~~ 2~\closs \mathsf{g} ~\left[\la^{1/2} + 2\sqrt{2}\left(1 +  \left(\frac{\mathsf{r^2}}{\la^2 m} + \frac{\mathsf{q}}{\la^2 n}\right)^{1/2}\right)^{1/2} \left(\frac{\mathsf{r^2}}{\la m} + \frac{\mathsf{q}}{\la n}\right)^{1/2}\right].
  $$
  In particular when $\la \geq \sqrt{\frac{\mathsf{r^2}}{m} + \frac{\mathsf{q}}{n}}$, then
  $$\mathbb{E}~{\cal E}(\fhat~)- {\cal E}(\fstar)  ~~\leq~~ 12 ~\closs \mathsf{g} ~\left(\frac{\mathsf{r^2}}{\la m} + \frac{\mathsf{q}}{\la n} + \la\right)^{1/2}.$$
\end{theorem}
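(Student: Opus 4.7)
The plan is to chain the results already established in the appendix: start from the comparison inequality of \cref{thm:comparison} to reduce the excess risk to a quantity involving $\ghat - \gstar$, use the analytical decomposition of \cref{thm:bound-expectation-depends-betas} to express it in terms of $\Expect\beta_1^2$, $\Expect\beta_2^2$ and the approximation terms $\A_{1/2}(\la)$, $\A_1(\la)$, and finally plug in the concentration-type bounds of \cref{lem:expectation-bound-c-chat,lem:expectation-bound-b-bhat} together with the characterizations of $\tr(\mathfrak{C})$ and $\tr(\mathfrak{B})$ coming from \cref{lem:characterization-Cfrak,lem:characterization-Bfrak,lm:frakB-propto-frakC}.

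First, taking expectation of \cref{thm:comparison} gives $\Expect\,[\E(\fhat)-\E(\fstar)] \leq \closs\,\Expect\|\ghat-\gstar\|_{\LXPH}$, so it suffices to control the right-hand side. Next, under \cref{asm:gstar-in-G}, the operator $G:\hh\to\LXP$ factors as $G = S\mathsf{G}^*$ with $\|\mathsf{G}\|_\HS=\mathsf{g}$, and since $L=SS^*$ implies $\|L_\la^{-1/2}S\|\leq 1$, I would deduce
\[
\A_{1/2}(\la) \,=\, \|L_\la^{-1/2}G\|_\HS \,\leq\, \mathsf{g}, \qquad \la\,\A_1(\la)\,=\,\la\|L_\la^{-1}G\|_\HS \,\leq\, \la^{1/2}\mathsf{g}.
\]
These are the approximation-type terms contributing the $\la^{1/2}$ and factor $\mathsf{g}$ in the final bound.

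For the stochastic terms, I would apply \cref{lem:expectation-bound-c-chat} together with \cref{lem:characterization-Cfrak}: using $Q_1\leq \mathsf{r}^2$ (because $\|\zeta_{x,p}\|_\HS \leq \|k_{x,p}\otimes k_{x,p}\|_\HS = k((x,p),(x,p))\leq \mathsf{r}^2$) and $\tr(\mathfrak{C})=\mathsf{q}$, this yields $\Expect\beta_1^2 \leq \Expect\|\Chat-C\|_\HS^2 \leq \mathsf{r}^2/m + \mathsf{q}/n$. The analogous bound for $\Expect\beta_2^2$ follows from \cref{lem:expectation-bound-b-bhat}: using $Q_2 \leq \mathsf{g}^2\mathsf{r}^2$ (via $\|\eta_{x,p,w}\|_\HS \leq \|k_{x,p}\|\,\|\varphi(w)\|$ combined with \cref{asm:gstar-in-G}) and $\tr(\mathfrak{B})\leq \mathsf{g}^2\tr(\mathfrak{C}) = \mathsf{g}^2\mathsf{q}$ from \cref{lm:frakB-propto-frakC}, I obtain $\Expect\beta_2^2 \leq \mathsf{g}^2(\mathsf{r}^2/m+\mathsf{q}/n)$.

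Putting these into the inequality of \cref{thm:bound-expectation-depends-betas} and setting $\kappa:=\mathsf{r}^2/m+\mathsf{q}/n$ gives
\[
\Expect\|\ghat-\gstar\|_\LXPH \,\leq\, 2\,\mathsf{g}\!\left(1+\tfrac{\sqrt{\kappa}}{\la}\right)^{\!1/2}\!\left(\tfrac{\kappa}{\la}+\tfrac{\kappa}{\la}\right)^{\!1/2}\!+\,\la^{1/2}\mathsf{g},
\]
which after multiplication by $\closs$ produces the first displayed inequality with the explicit constants $2$ and $2\sqrt{2}$. For the second, cleaner inequality I would restrict to $\la\geq\sqrt{\kappa}$, so that $\sqrt{\kappa}/\la\leq 1$, hence $(1+\sqrt{\kappa}/\la)^{1/2}\leq\sqrt{2}$, and by bounding $\sqrt{a}+\sqrt{b}\leq \sqrt{2(a+b)}$ I collect the $\la^{1/2}$ and $(\kappa/\la)^{1/2}$ terms into $(\mathsf{r}^2/(\la m)+\mathsf{q}/(\la n)+\la)^{1/2}$, with a combined constant of $12$. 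The main obstacle is not conceptual but book-keeping: carefully verifying that the $Q_1$, $Q_2$ bounds indeed give the constants $\mathsf{r}^2$ and $\mathsf{g}^2\mathsf{r}^2$ used above, and that the algebraic simplification in the last step faithfully produces the stated constant $12$.
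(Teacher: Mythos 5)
Your plan matches the paper's own proof step for step: apply the comparison inequality of \cref{thm:comparison}, then the analytic bound of \cref{thm:bound-expectation-depends-betas}, then plug in the bounds on $\Expect\beta_1^2$, $\Expect\beta_2^2$ coming from \cref{lem:expectation-bound-c-chat,lem:expectation-bound-b-bhat,lem:characterization-Cfrak,lm:frakB-propto-frakC}, and on $\A_{1/2}(\la)$, $\A_1(\la)$ via \cref{asm:gstar-in-G} together with $\|L_\la^{-1/2}S\|\leq 1$, and finally simplify under $\la\geq \sqrt{V}$. (Your $G=S\mathsf{G}^*$ is a small correction of the paper's $G=S\mathsf{G}$, and the observation that $\Expect\beta_2^2$ should be controlled by $\mathsf{g}^2 V$ rather than the paper's stated $\|\mathsf{G}\|V$ is what the subsequent algebra actually requires.)

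There is, however, one claim that does not follow from the facts you cite: $Q_2 \leq \mathsf{g}^2\mathsf{r}^2$. The chain $\|\eta_{x,p,w}\|_\HS \leq \|k_{x,p}\|\,\|\varphi(w)\|$ only gives $Q_2 \leq \mathsf{r}\,\sup_w\|\varphi(w)\|^2$, and $\sup_w\|\varphi(w)\|$ is a property of the SELF embedding $\varphi$ that has no a priori relationship to $\mathsf{g}=\|\mathsf{G}\|_\HS$: \cref{asm:gstar-in-G} controls only the conditional mean $g^*(x,p)=\mathsf{G}k_{x,p}$, not the individual features $\varphi(w)$, so it cannot yield $\|\varphi(w)\|\lesssim \mathsf{g}\sqrt{\mathsf{r}}$. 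To be fair, the paper's own proof shares this difficulty — it asserts the bound on $\Expect\beta_2^2$ without a matching lemma for the $Q_2/m$ term — but the explicit justification you give is not valid and should not be presented as a consequence of \cref{asm:gstar-in-G}. Separately, you drop the stray factor of $2$ that the paper inserts in front of $\closs$ when invoking \cref{thm:comparison}; since that factor is not in the comparison inequality itself, your reading is the correct one and still implies the stated bound (with a smaller constant), so the "combined constant of $12$" you cite at the end is what the theorem claims rather than what your own bookkeeping produces.
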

\begin{proof}
  By the comparison inequality in \cref{thm:comparison}, we have that
  $$\mathbb{E}~{\cal E}(\fhat~)- {\cal E}(\fstar) \leq 2\closs~\Expect \|\ghat - \gstar\|_\LXPH.$$
  To bound $\Expect \|\ghat - \gstar\|_\LXPH$ we need to control some auxiliary quantities. With the notation of \cref{thm:analytic-decomposition} and \cref{lem:expectation-bound-c-chat,lem:expectation-bound-b-bhat,lm:frakB-propto-frakC}, we have
  $$\Expect{\beta_1^2} \leq \frac{Q_1}{m} + \frac{\tr(\mathfrak{C})}{n} =: V, \quad \Expect{\beta_2^2} \leq \|\mathsf{G}\|V.$$
  In particular note that $\tr(\mathfrak{C}) = \msf{q}$, by \cref{lem:characterization-Cfrak} and that by definition of $Q_1$, $\msf{r}$ and $C$ we have
  \eqals{
  Q_1  & := \Expect{k_{x,p}\otimes k_{x,p} - C}_\HS^2 \\
 &  = \tr\left(\Expect~{((x,p),(x,p))(k_{x,p} \otimes k_{x,p}) - 2 C (k_{x,p} \otimes k_{x,p}) + C^2}\right) \\
  & =\tr\left(\Expect~{((x,p),(x,p))(k_{x,p} \otimes k_{x,p}) -  C^2}\right) \leq \msf{r}\tr\left(\Expect~{(k_{x,p} \otimes k_{x,p})}\right) \leq \msf{r}^2.
}
  Moreover, by \cref{asm:gstar-in-G} we have that $G = S\mathsf{G}$ and so
  $${\cal A}_{1/2}(\la) = \|L_\la^{-1/2} G\|_{\HS(\hh,\Ltwo)} = \|L_\la^{-1/2} S \mathsf{G}\|_{\HS(\hh,\Ltwo)} \leq  \|L_\la^{-1/2} S\| \| \mathsf{G}\|_{\HS(\hh,\F)} \leq \|\mathsf{G}\|_{\HS(\hh,\F)}.$$
  Analogously 
  $${\cal A}_{1}(\la) = \|L_\la^{-1} G\|_{\HS(\hh,\Ltwo)} \leq \|L_\la^{-1/2}\| \|L_\la^{-1/2}G\|_{\HS(\hh,\Ltwo)} = \la^{-1/2} {\cal A}_{1/2}(\la) \leq \la^{-1/2}\| \mathsf{G}\|_{\HS(\hh,\F)}.$$
  By plugging the bounds above in the result of \cref{thm:bound-expectation-depends-betas}, we have
  $$ \Expect \|\ghat - \gstar\|_\LXPH  \leq 2\sqrt{2}\|\mathsf{G}\|_{\HS(\hh,\F)}\sqrt{1 +  \frac{V^{1/2}}{\la} }\sqrt{\frac{V}{\la}} + \|\mathsf{G}\|_{\HS(\hh,\F)} \la^{1/2}.$$
  By selecting $\la \geq V^{1/2}$, we have
  \eqals{
  \Expect \|\ghat - \gstar\|_\LXPH  &\leq 4\|\mathsf{G}\|_{\HS(\hh,\F)}\sqrt{\frac{V}{\la}} + \|\mathsf{G}\|_{\HS(\hh,\F)} \la^{1/2} \\
  & \leq 4 \|\mathsf{G}\|_{\HS(\hh,\F)} \left(\sqrt{\frac{V}{\la}} + \la^{1/2}\right)  \\
  &\leq 4\sqrt{2} \|\mathsf{G}\|_{\HS(\hh,\F)} \left(\frac{V}{\la} + \la\right)^{1/2},
  }
  since $a^{1/2} + b^{1/2} \leq \sqrt{2(a + b)}$ for any $a,b > 0$. 
\end{proof} 
We conclude with a corollary of \cref{thm:rates-general} that frames the result within the notation and setting of the main paper and which will be useful to prove \cref{thm:rates-improved-with-parts}. 

In particular, in the following we will consider the standard assumption in the context of non-parametric estimation \cite{caponnetto2007} that $g^* \in \gg = \hh \otimes \F $, where $\F$ is the reproducing kernel Hilbert space \cite{aronszajn1950theory} associated to the kernel in \cref{eq:estimator-alpha}. The learning rates of $\fhat$ will depend on the following four constants $\mathsf{g}, \mathsf{r}, \mathsf{c}_\loss, \mathsf{q}$, where
\eqalsplit{\label{eq:constants-g-r-cDelta}
  \mathsf{g} = \|\gstar\|_\gg, \qquad \mathsf{r} = \sup_{x \in\X, p \in \P} k((x,p),(x,p)), \qquad
  \mathsf{c}_\loss^2 = \sup_{z\in\Z,x\in\X} \Expect_{p|x} \|\psi(z,x,p)\|_\hh^2.
}
Note that the quantities above are rather natural: $\mathsf{r}$ is an upper bound on the kernel $k$, $\mathsf{c}_\loss$ measures the ``complexity'' of the loss $\loss$ and $\msf{g}$ quantifies the regularity of $\rho$ in terms of the hypothesis space $\F$ associated to $k$. We will see in \cref{prop:gstar-as-gbs} that the latter is related to between-locality. Finally,
\eqalsplit{\label{eq:def-constants-q}
  \mathsf{q} = \Expect_{x,x'} \Expect_{p,q|x, r|x'} ~~\scov_{p,q}(x,x') \qquad 
  \scov_{p,q}(x,x') = \left[k((x,p),(x,q))^2 - k((x,p),(x',r))^2\right]
}
where $\Expect_{p,q|x} [\cdot]$ is a shorthand for $\sum_{p,q \in P} \pi(p|x) \pi(q|x) [\cdot]$ (analogously for $\Expect_{r|x}$). This quantity will be key to capture and leverage the within-locality assumption. In particular, it will allow us to quantify explicitly the advantages of using our locality-aware estimator.

\TRates*

\begin{proof}
    The desired result corresponds to the second statement of Theorem~\ref{thm:rates-general}.
\end{proof}
\Cref{thm:rates} characterizes the learning rates of $\fhat$ under standard regularity assumption on the problem without relying on locality assumptions. In particular, we note that when $m \propto n$ and $\la \propto n^{-1/2}$, the bound recovers the excess risk bounds of structure prediction {\em without parts} \cite{ciliberto2016,ciliberto2017consistent} of order $O(n^{-1/4})$.

\section{Learning Rates with the effect of parts}\label{sec:app-learning-rates-parts}

In this section we prove Thm.~\ref{thm:rates-improved-with-parts}, studying the effect of between-locality and within-locality on the learning problem. In particular, we consider here the natural generalization of between-locality \cref{asm:between-locality} to the case where the parts of $y$ are sampled non-deterministically from $\mu$. 

\begin{assumption}\label{asm:interaction-mu-rho}
  There exist two spaces $\pX$ and $\pY$ of parts on $\X$ and $\Y$ respectively and a conditional probability distribution  $\mubar$ on $\pY$ with respect to $\pX$, such that 
  \eqals{\label{eq:interaction-mu-rho}
    \mubar(w|x_p) = \int \mu(w|y,x,p)d\rho(y|x)
  }
\end{assumption}

\noindent Clearly, \cref{asm:interaction-mu-rho} formalizes the concept of between-locality and recovers it when $\mu$ corresponds to 
\eqals{
  \mu(\cdot|y,x,p) = \delta_{y_p}(\cdot)
}
where $\delta$ denotes the Dirac's delta on the point $y_p\in\pY$. Indeed, in this case we are requiring $w = y_p$ to depend exclusively on $x_p$ for any $p\in\P$, hence to be conditionally independent with respect to $x$. Moreover, we are requiring such distribution $\bar\mu$ to be the same for any $p\in\P$, hence recovering \cref{asm:between-locality}. The following result is therefore a generalization of \cref{prop:gstar-as-gbs}, which is recovered as a corollary.

\begin{lemma}\label{lem:general-gstar-as-gbs}
  Under \cref{asm:interaction-mu-rho}, $\gstar$ is such that $\gstar(x,p) = \gtstar(x_p)$ for any $x\in\X$ and $p\in\P$, where $\gtstar:\pX\to\hh$ is such that
  \eqals{
    \gtstar(\xi) = \int \varphi(w)~d\mubar(w|\xi)
  }
  almost surely on $\pX$.
\end{lemma}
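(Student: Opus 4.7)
The plan is to compute $\gstar(x,p)$ directly from its definition in the general framework of \cref{sec:parts-dont-correspond} and rewrite it using \cref{asm:interaction-mu-rho} so that the dependence on $x$ is reduced to a dependence on $x_p$. Concretely, the extended definition of $\gstar$ consistent with \cref{def:self} and the loss factorization in \cref{eq:loss-parts-with-mu} is
\[
\gstar(x,p) \;=\; \int_{\Y}\int_{\pY} \varphi(w)\, d\mu(w\mid y,x,p)\, d\rho(y\mid x),
\]
so the target $\gtstar(x_p) = \int \varphi(w)\, d\mubar(w\mid x_p)$ differs from $\gstar(x,p)$ only by which conditional measure on $\pY$ the test function $\varphi$ is integrated against.

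The key step is then to recognise the marginal conditional on $\pY$ induced by $\mu(\cdot\mid y,x,p)$ and $\rho(\cdot \mid x)$ as precisely the object named $\mubar(\cdot\mid x_p)$ in \cref{asm:interaction-mu-rho}. Since $\varphi$ is bounded (as required by the SELF definition, \cref{def:self}) and the kernels $\mu(\cdot\mid y,x,p)$ and $\rho(\cdot\mid x)$ are probability measures, Fubini's theorem justifies swapping the order of integration:
\[
\gstar(x,p) \;=\; \int_{\pY} \varphi(w) \left[\int_{\Y} d\mu(w\mid y,x,p)\, d\rho(y\mid x)\right] \;=\; \int_{\pY} \varphi(w)\, d\mubar(w\mid x_p),
\]
where the last equality is the content of \cref{asm:interaction-mu-rho}. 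The right-hand side is exactly $\gtstar(x_p)$, which yields the claim almost surely on $\X \times \P$.

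There is no real obstacle here: the only mild subtlety is the measurability/Fubini justification, which is routine given the boundedness and continuity of $\varphi$ assumed in \cref{def:self}. As a sanity check, specialising to $\mu(\cdot\mid y,x,p) = \delta_{y_p}(\cdot)$ recovers \cref{prop:gstar-as-gbs}: the marginal over $\Y$ collapses to the pushforward of $\rho(\cdot\mid x)$ under $y \mapsto y_p$, which by between-locality (\cref{asm:between-locality}) is a function of $x_p$ alone, giving $\gstar(x,p) = \gtstar(x_p)$ as stated.
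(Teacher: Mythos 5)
Your proposal is correct and matches the paper's own argument essentially verbatim: unfold the general definition $\gstar(x,p)=\int\varphi(w)\,d\mu(w|y,x,p)\,d\rho(y|x)$, marginalize over $y$ to obtain the measure $\mubar(\cdot|x_p)$ supplied by \cref{asm:interaction-mu-rho}, and read off $\gtstar(x_p)$. The paper states this in one line without explicitly invoking Fubini; you simply make that (routine, given the boundedness of $\varphi$) interchange explicit, and add a useful sanity check with $\mu=\delta_{y_p}$ recovering \cref{prop:gstar-as-gbs} — no substantive difference.
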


\begin{proof}
  The result follows directly from \cref{asm:interaction-mu-rho} and the definition of $\gstar$
  \eqals{
    \gstar(x,p) = \int \varphi(w) ~d\mu(w|y,x,p)d\rho(y|x) = \int \varphi(w) ~d\mubar(w|x_p) = \gtstar(x_p),
  }
  as desired. 
\end{proof}

\begin{assumption}\label{asm:kernel-kbar}
  Denote by $\kbar:\pX\times\pX\to\R$ the reproducing kernel on $\pX$ with associated rkhs denoted by $\ggbar$, defined as for all $x,x'\in\X$ and $p,p'\in\P$
  \eqal{
    \big((x,p),(x',p')\big) = \kbar(x_p,x'_{p'})
  } 
\end{assumption}

\begin{assumption}\label{asm:k-and-g-in-Gbar}
  There exists $A_0 \in \hh \otimes \ggbar$ such that the function $\gtstar: \pX \to \hh$ can be written as
  $$\gtstar(\eta) = A_0 \kbar_{\eta}.$$
\end{assumption}

\begin{lemma}\label{lm:gstar-in-F-given-gbar-in-F}
  Under \cref{asm:kernel-kbar}, we have that ${\cal F} = \{g \circ \ix ~|~ g \in \ggbar\}$, with inner product
  $\scal{g \circ \ix }{g' \circ \ix}_{\cal F}  = \scal{g}{g'}_{\ggbar}$ is a reproducing kernel Hilbert space on $\X \times P$, with kernel $k((x,p),(x',p')) = \kbar(x_p, x'_{p'})$. Moreover there exists a linear unitary operator $U: {\cal \ggbar} \to {\cal F}$ such that $Ug = g \circ \ix \in {\cal F}$ for any $g \in \ggbar$.
  
  In particular under \cref{asm:interaction-mu-rho,asm:kernel-kbar,asm:k-and-g-in-Gbar}, we have that \cref{asm:gstar-in-G} is satisfied for $\mathsf{G} = A_0 U^*$, and
  $$\|g^*\|_{\hh \otimes {\cal F}} := \|\mathsf{G}\|_{\HS({\cal F}, \hh)} = \|A_0\|_{\HS(\bar{\gg}, \hh)} = \|\gbs\|_{\hh \otimes \bar{\gg}}.$$
\end{lemma}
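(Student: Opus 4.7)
The plan is to build a unitary operator $U : \bar{\gg} \to \mathcal{F}$ by $Ug = g \circ \ix$ and then transport Assumption~\ref{asm:k-and-g-in-Gbar} through $U$ to produce the operator $\mathsf{G}$ required by Assumption~\ref{asm:gstar-in-G}.

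First, I would set $Ug = g \circ \ix$ so that $\mathcal{F} = \mathrm{Ran}(U)$ by construction. The prescribed inner product $\langle g\circ\ix, g'\circ\ix\rangle_{\mathcal F} = \langle g, g'\rangle_{\bar{\gg}}$ makes $U$ a surjective linear isometry, and hence a unitary once $\mathcal{F}$ carries the pulled-back Hilbert structure. The only subtlety is well-definedness: if $g \circ \ix = g' \circ \ix$ as functions, one needs $g = g'$ in $\bar{\gg}$. This is exactly the implicit content of Assumption~\ref{asm:kernel-kbar}: the kernel $k$ on $\X \times \P$ is fully determined by $\bar k$ pulled back by $\ix$, so the lemma is naturally read as identifying $\mathcal F$ intrinsically with the RKHS of $k$ and $U$ as the canonical isometry onto it.

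Next, for the RKHS claim I would compute, for $f = Ug \in \mathcal{F}$,
\[
f(x,p) = g(x_p) = \langle g, \bar k_{x_p}\rangle_{\bar{\gg}} = \langle Ug, U\bar k_{x_p}\rangle_{\mathcal F},
\]
which by Cauchy--Schwarz is bounded by $\sqrt{\bar k(x_p,x_p)}\,\|f\|_{\mathcal F}$. Hence point evaluation is continuous on $\mathcal F$, its Riesz representer is $k_{(x,p)} = U\bar k_{x_p}$, and the reproducing kernel is $k((x,p),(x',p')) = \langle U\bar k_{x_p}, U\bar k_{x'_{p'}}\rangle_{\mathcal F} = \langle \bar k_{x_p}, \bar k_{x'_{p'}}\rangle_{\bar{\gg}} = \bar k(x_p,x'_{p'})$, as required.

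Finally, to verify Assumption~\ref{asm:gstar-in-G}, I would set $\mathsf{G} = A_0 U^* \in \HS(\mathcal F,\hh)$. From $k_{x,p} = U\bar k_{x_p}$ and the unitarity of $U$, we obtain $U^* k_{x,p} = \bar k_{x_p}$, so by Assumption~\ref{asm:k-and-g-in-Gbar} and Lemma~\ref{lem:general-gstar-as-gbs},
\[
\mathsf{G} k_{x,p} = A_0 U^* k_{x,p} = A_0 \bar k_{x_p} = \bar g^*(x_p) = g^*(x,p).
\]
Since right-composition by a unitary preserves the Hilbert--Schmidt norm, $\|\mathsf{G}\|_{\HS(\mathcal F,\hh)} = \|A_0 U^*\|_{\HS(\mathcal F,\hh)} = \|A_0\|_{\HS(\bar{\gg},\hh)}$, which in turn equals $\|\bar g^*\|_{\hh\otimes\bar{\gg}}$ via the standard isometry $\HS(\bar{\gg},\hh) \cong \hh\otimes\bar{\gg}$. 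The principal obstacle is the well-definedness of $U$ discussed in the first paragraph; once that identification is granted, the rest of the argument is bookkeeping using the unitary equivalence between $\bar{\gg}$ and $\mathcal F$ induced by $\ix$.
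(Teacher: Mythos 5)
Your argument is correct and follows the paper's own strategy: construct $U\colon g\mapsto g\circ\ix$, show it is a surjective isometry with $k_{x,p}=U\kbar_{x_p}$, and take $\mathsf G = A_0 U^*$ so that $\mathsf G k_{x,p}=A_0\kbar_{x_p}=\gbs(x_p)=g^*(x,p)$ and $\|\mathsf G\|_\HS=\|A_0\|_\HS$. You certify the RKHS structure of $\F$ somewhat more compactly (continuity of point evaluation and direct identification of the Riesz representer) than the paper does, which builds the space from the pre-Hilbert span $\F_0=\Span\{k_{x,p}\}$, checks the inner-product identity on finite linear combinations of kernel sections, and then extends by Cauchy sequences; these are the same computation repackaged, and both identifications rely on the uniqueness of the RKHS with a given kernel. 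One small imprecision worth noting: the injectivity (hence unitarity) of $U$ and the well-definedness of the pulled-back inner product that you flag do \emph{not} follow from \cref{asm:kernel-kbar}, which only declares $k$ to be the pullback of $\kbar$ by $\ix$; they require $\ix\colon\X\times\P\to\pX$ to be surjective. The paper uses this silently in its own proof (``for any $\eta,\zeta\in\pX$ there exist $(x,p),(x',p')$ such that $\eta=\ix(x,p)$, $\zeta=\ix(x',p')$''). Under that standing surjectivity assumption, your proof and the paper's coincide in substance; without it $U$ would only be a coisometry and the equality $\|\mathsf G\|_\HS=\|A_0\|_\HS$ would weaken to an inequality, which would still suffice for the downstream rate bound but not for the lemma as stated.
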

\begin{proof}
  By definition $\ggbar$ is the RKHS associated to the kernel $\kbar$ on $\pX$, where the scalar product $\scal{\cdot}{\cdot}_\ggbar$ is defined such that
  $\scal{\kbar_{\eta}}{\kbar_\zeta}_\ggbar = \kbar(\eta,\zeta)$, for any $\eta, \zeta \in \pX$ and $\ggbar$ is the closure of $\ggbar_0 = \Span\{\kbar(\eta,\cdot)~|~ \eta \in \pX\}$ w.r.t. $\scal{\cdot}{\cdot}_\ggbar$. Similarly ${\cal F}$ is the RKHS associated to the kernel $k$ such that the scalar product $\scal{\cdot}{\cdot}_\F$ is defined as $\scal{k_{x,p}}{k_{x',p'}}_\F = \kbar(\ix(x,p),\ix(x',p'))$, for all $(x,p),(x',p') \in X\times P$.
  Note that by definition of $\F$, we have that $\F$ is the closure of $\F_0$ w.r.t. $\scal{\cdot}{\cdot}_\F$, with
  \eqals{
    \F_0 & = \Span\{k((x,p), (\cdot, \cdot))~|~(x,p) \in  \X \times P\} \\
    & = \Span\{\kbar(\ix(x,p), \ix(\cdot, \cdot))~|~(x,p) \in  \X \times P\} \\
    & = \Span\{\kbar(\eta, \ix(\cdot, \cdot))~|~ \eta \in  \pX\} \\
    & = \ggbar_0 \circ \ix.
  } 
  Now, since for any $\eta, \zeta \in \pX$ there exist $(x,p), (x',p') \in \pX$ such that $\eta = \ix(x,p), \zeta = \ix(x',p')$, we have that,
  \eqals{
    \scal{\kbar(\eta, \ix(\cdot,\cdot))}{\kbar(\zeta, \ix(\cdot,\cdot))}_\F & = \scal{\kbar(\ix(x,p), \ix(\cdot,\cdot))}{\kbar(\ix(x',p'), \ix(\cdot,\cdot))}_\F \\
    & = \kbar(\ix(x,p), \ix(x', p')) = \kbar(\eta, \zeta) = \scal{\kbar_\eta}{\kbar_\zeta}_\ggbar.
  }
  So, let $f, f' \in \F_0$, by definition we have $f = g \circ \ix$ and $f' = g' \circ \ix$ with $g,g' \in \ggbar_0$. Moreover by definition of $g,g'$ there exist
  $n,m \in \N$ and $\eta_1,\dots, \eta_n, \zeta_1,\dots, \zeta_{m} \in \pX$ and $\alpha_1, \dots, \alpha_n, \beta_1,\dots, \beta_m \in \R$  such that
  $g(\cdot) = \sum_{i=1}^n \alpha_i\kbar(\eta_i, \cdot)$ and analogously $g'(\cdot) = \sum_{j=1}^m \beta_j\kbar(\zeta_j, , \cdot)$.
  
  Now we show that $\scal{g \circ \ix}{g' \circ \ix}_\F = \scal{g}{g'}_\ggbar$ for $g,g' \in \ggbar_0$ and then we extend it to $\ggbar$. First we recall that the composition on the right is linear, indeed $$(\alpha f + \beta g) \circ h = \alpha (f \circ h) + \beta (g \circ h),$$
  for any $\alpha, \beta \in \R$, any function $f,g: A \to \R$ and $h: B \to A$, and $A, B$ two sets. Then we have
  \eqals{
    \scal{f}{f'}_\F &= \scal{g \circ \ix }{g' \circ \ix}_\F = \scal{\left(\sum_{i=1}^n \alpha_i\kbar(\eta_i, \cdot)\right) \circ \ix }{\left(\sum_{j=1}^m \beta_j\kbar(\zeta_j, \cdot)\right) \circ \ix} \\
    & = \scal{\sum_{i=1}^n \alpha_i \kbar(\eta_i, \ix(\cdot, \cdot))}{\sum_{j=1}^m \beta_j \kbar(\zeta_j, \ix(\cdot, \cdot))} \\
    & = \sum_{i=1}^n\sum_{j=1}^m \alpha_i \beta_j \scal{\kbar(\eta_i, \ix(\cdot,\cdot))}{\kbar(\zeta_j, \ix(\cdot,\cdot))}_\F \\
    & = \sum_{i=1}^n\sum_{j=1}^m \alpha_i \beta_j \scal{\kbar_{\eta_i}}{\kbar_{\zeta_j}}_\ggbar  = \scal{\sum_{i=1}^n \alpha_i \kbar_{\eta_i}}{ \sum_{j=1}^m \beta_j \kbar_{\zeta_j}}_\ggbar \\
    & = \scal{g}{g'}_\ggbar.
  }
  By noting that
  $$\|g_n \circ \ix - g_m \circ \ix\|_\F = \|(g_n - g_m) \circ \ix \|_\F = \|g_n - g_m \|_\ggbar$$
  for any Cauchy sequence $(g_n)_{n \in \N}$ in $\ggbar_0$, and the fact that $\F_0 = \ggbar \circ \ix$ and that $\scal{g \circ \ix}{g \circ \ix}_\F = \scal{g}{g'}_\ggbar$, for $g,g' \in \ggbar_0$, then we have that $\F = \ggbar \circ \ix$, and that $\scal{g \circ \ix}{g \circ \ix}_\F = \scal{g}{g'}_\ggbar$, for $g,g' \in \ggbar$. 
  
  Now denote by $U: \ggbar \to \F$ the operator such that $U g = g \circ \ix$. First note that $U$ is linear, indeed 
  $$U (\alpha g + \beta h) = (\alpha g + \beta h) \circ \ix = \alpha (g \circ \ix) + \beta (h \circ \ix) = \alpha U g + \beta U h,$$
  for any $g, h \in \ggbar$ and $\alpha, \beta \in \R$. Moreover we show that $U$ is a partial isometry, indeed
  $$\|U g\|_\F^2 = \|g \circ \ix\|_G^2 = \scal{g \circ \ix}{g \circ \ix}_\F = \scal{g}{g}_\ggbar = \|g\|^2_\ggbar.$$ 
  
  Finally by applying the result above to $g^*$ and $\gbs$, under \cref{asm:interaction-mu-rho,asm:kernel-kbar,asm:k-and-g-in-Gbar}, we have that $\mathsf{G} = A_0 U^*$ and so, by using the isomorphism between $\hh \otimes \ff$ and $\HS(\ff,\hh)$, we have
  $$\|g^*\|_{\hh \otimes {\cal F}} := \|\mathsf{G}\|_{\HS({\cal F}, \hh)} = \|A_0\|_{\HS(\bar{\gg}, \hh)} = \|\gbs\|_{\hh \otimes \bar{\gg}},$$
as desired. 
\end{proof}

\begin{assumption}\label{asm:p-independent}
  The distribution $\pp(\cdot|x) = \pp(\cdot|x')$ for any $x,x'\in\X$. For the sake of simplicity we will denote it by $\pp(\cdot)$.
\end{assumption}

\begin{lemma}\label{lem:q-as-p-q-appendix}
  Under \cref{asm:p-independent}, the following hold
  \eqals{
    \msf{q} ~~=~~ \Expect_{p q} ~\mathfrak{c}_{p q}, 
  }
  where, for $p,q \in P$
  \eqals{
    \mathfrak{c}_{pq} ~~=~~ \Expect_{x,x'}~ \big[k((x,p), (x,q))^2 - k((x,p),(x',q))^2\big].
  }
\end{lemma}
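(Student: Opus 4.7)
The plan is to unwind the definition of $\msf{q}$ from \eqref{eq:def-constants-q} under \cref{asm:p-independent} and reorganize the integration order using Fubini. Under this assumption, $\pi(\cdot|x) = \pi(\cdot|x') = \pi(\cdot)$ does not depend on the input, so the joint weighting $\pi(p|x)\pi(q|x)\pi(r|x')$ appearing inside the nested expectation $\Expect_{x,x'}\Expect_{p,q|x,\,r|x'}$ factorizes as the product measure $\pi(p)\pi(q)\pi(r)$, and the $(x,x')$- and $(p,q,r)$-integrations become independent. This lets me swap the order freely and write
\eqals{
\msf{q} \;=\; \Expect_{p,q,r}\,\Expect_{x,x'}\Big[k((x,p),(x,q))^2 \;-\; k((x,p),(x',r))^2\Big].
}

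Next I would split this into two terms. In the first term $\Expect_{x,x'}[k((x,p),(x,q))^2]$ the integrand does not depend on $r$, so integrating $r$ against $\pi$ trivially contributes a factor $1$, leaving $\Expect_{p,q}\Expect_{x,x'}[k((x,p),(x,q))^2]$. In the second term $\Expect_{x,x'}[k((x,p),(x',r))^2]$ the integrand does not depend on $q$, so integrating $q$ against $\pi$ gives $\Expect_{p,r}\Expect_{x,x'}[k((x,p),(x',r))^2]$. Because $r$ is now just a dummy variable distributed according to the same $\pi$ as $q$, I rename $r\mapsto q$ to obtain $\Expect_{p,q}\Expect_{x,x'}[k((x,p),(x',q))^2]$.

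Combining the two pieces gives
\eqals{
\msf{q} \;=\; \Expect_{p,q}\,\Expect_{x,x'}\Big[k((x,p),(x,q))^2 - k((x,p),(x',q))^2\Big] \;=\; \Expect_{p,q}\,\mathfrak{c}_{p,q},
}
by the very definition of $\mathfrak{c}_{p,q}$ in the statement of the lemma. There is no real obstacle here: the entire content of the lemma is that \cref{asm:p-independent} lets one decouple the sampling of parts from the sampling of inputs, after which the dummy variable $r$ can be identified with $q$. The only care needed is to verify the integrability so that Fubini applies, which is immediate since $k$ is bounded (by $\mathsf{r}$, used elsewhere in the analysis).
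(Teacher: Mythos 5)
Your proof is correct and follows essentially the same route as the paper: the paper first identifies $\msf{q}$ with $\mathfrak{c}_1 - \mathfrak{c}_2$ via \cref{lem:characterization-Cfrak} and then uses \cref{asm:p-independent} to pull $\Expect_{p,q}$ outside, while you work directly from \cref{eq:def-constants-q}, marginalize out the spectator variable ($r$ in the first term, $q$ in the second) and rename $r \mapsto q$, which is the same manipulation written out in slightly more detail.
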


\begin{proof}
First note that with the definitions of \cref{lem:characterization-Cfrak},  we have 
$$\msf{q} = \mathfrak{c}_1 - \mathfrak{c}_2$$
by \cref{lem:characterization-Cfrak} .
Under \cref{asm:p-independent} we can denote $\pp(\cdot|x) = \pp(\cdot)$ without ambiguity. Then with the notation of \cref{lem:characterization-Cfrak}, we have 
\eqals{
  \mathfrak{c}_1 & = \int k\big((x,p),(x,q)\big)^2 ~ d\pp(p)d\pp(q)d\rhox(x) \\
  & = \Expect_{p,q} ~ \int k\big((x,p),(x,q)\big)^2 ~d\rhox(x)\\
  & = \Expect_{p,q} \Expect_x k\big((x,p),(x,q)\big)^2
}
Analogously for $\mathfrak{c}_2$
\eqals{
    \mathfrak{c}_2 & = \int k\big((x,p),(x',q)\big)^2 ~ d\pp(p)d\pp(q) ~d\rhox(x)\rhox(x') \\
  & = \Expect_{p,q} ~ \int k\big((x,p),(x,q)\big)^2 ~d\rhox(x)\rhox(x')\\
  & = \Expect_{p,q} \Expect_{x,x'} k\big((x,p),(x,q)\big)^2 
}
\end{proof}
As an immediate corollary in the case where $\P$ has finite cardinality, we have

\begin{corollary}\label{cor:q-as-sigma-p-q}
  Under the same assumptions of \cref{thm:rates}, let $k$ denote the restriction kernel defined in \cref{eq:restriction-kernel} in terms of $\bar k:\pX\times\pX\to\R$. Let $\pi(p|x) = \frac{1}{|P|}$ for any $x\in\X$ and $p\in\P$. Then, the constant $\msf q$ in \cref{eq:def-constants-q} can be factorized as
  \eqalsplit{\label{eq:intra-locality-measures}
    & \msf q = \frac{1}{|P|^2} \sum_{p,q \in P}  \scov_{p,q}, \\
    & \scov_{p,q} =  \Expect_{x,x'} \left[ ~\bar k(x_p,x_q)^2 - \bar k(x_p,x'_q)^2 ~ \right].
  }
\end{corollary}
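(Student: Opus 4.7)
The plan is to obtain the corollary as a direct specialization of \cref{lem:q-as-p-q-appendix}. The uniform choice $\pi(p|x) = 1/|P|$ does not depend on $x$, so \cref{asm:p-independent} is immediately satisfied, and we may write without ambiguity $\pi(p) = 1/|P|$. Consequently \cref{lem:q-as-p-q-appendix} applies and yields
\[
  \msf q \;=\; \Expect_{p,q}\, \mathfrak{c}_{p,q}, \qquad \mathfrak{c}_{p,q} \;=\; \Expect_{x,x'}\!\bigl[\,k((x,p),(x,q))^2 - k((x,p),(x',q))^2\,\bigr].
\]

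Next I would substitute the restriction kernel identity $k((x,p),(x',q)) = \bar k(x_p, x'_q)$ from \cref{eq:restriction-kernel} into the expression for $\mathfrak{c}_{p,q}$. This gives
\[
  \mathfrak{c}_{p,q} \;=\; \Expect_{x,x'}\!\bigl[\,\bar k(x_p,x_q)^2 - \bar k(x_p, x'_q)^2\,\bigr] \;=\; \scov_{p,q},
\]
which is exactly the definition of $\scov_{p,q}$ given in the statement.

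Finally, since $P$ is finite and $\pi$ is uniform, the expectation $\Expect_{p,q}$ is simply the normalized double sum: $\Expect_{p,q}[\cdot] = |P|^{-2} \sum_{p,q\in P}[\cdot]$. Combining this with the two previous displays produces
\[
  \msf q \;=\; \frac{1}{|P|^2} \sum_{p,q\in P} \scov_{p,q},
\]
which is the claimed factorization. There is no substantive obstacle here; the corollary is essentially bookkeeping — verify the hypothesis of the parent lemma, insert the restriction-kernel identity into $\mathfrak{c}_{p,q}$, and rewrite the finite expectation as a normalized sum.
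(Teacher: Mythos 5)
Your proposal is correct and follows exactly the route the paper intends: the paper presents the corollary as an "immediate" consequence of \cref{lem:q-as-p-q-appendix}, and your three steps (check \cref{asm:p-independent} holds for the uniform $\pi$, substitute the restriction-kernel identity \cref{eq:restriction-kernel} into $\mathfrak{c}_{p,q}$, rewrite the finite expectation as a normalized double sum) are precisely the bookkeeping that makes this immediate.
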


\subsection{Proof of Theorem~\ref{thm:rates-improved-with-parts}}\label{sec:proof-rates-locality}

\begin{proof}
  This proof consists in applying Theorem~\ref{thm:rates} with $\la = \sqrt{{\msf r}^2/m + {\msf q}/n}$, and taking into account between-locality and within-locality.

  First, under the between-locality condition formalized in our measure theoretic setting as Assumption~\ref{asm:interaction-mu-rho}, there exists a $\bar{g}^*:\pX \to \hh$ such that $g^*(x,p) = \bar{g}^*(x_p)$ for any $x \in \X$ and $p \in P$ as proven by \cref{lem:general-gstar-as-gbs}. 
  So the restriction kernel can learn $\bar{g}^*$ if it is rich enough, that is $\bar{g}^* \in \hh \otimes \bar{\F}$ (here formalized as Assumption~\ref{asm:k-and-g-in-Gbar}, with $\bar{\ff}$ denoted by $\bar{\gg}$). Then we can apply \cref{lm:gstar-in-F-given-gbar-in-F}, that guarantees the applicability of Theorem~\ref{thm:rates}.
  
  Second, by the assumption on the fact that $\pi(p|x) = 1/|P|$, we can apply \cref{cor:q-as-sigma-p-q} and then the within-locality condition of Assumption~\ref{asm:within-locality}, obtaining the desired result.
\end{proof}

\section{Universal Consistency}\label{sec:app-consistency}
A natural question is how to design a structured prediction estimator that is both able to leverage the locality assumptions, when they hold, and be universally consistent even when there is no locality. The following remark addresses this questions and concludes our theoretical analysis.
\begin{theorem}[Universal Consistency]\label{thm:universal}
  Let $\loss$ be SELF and $Z$ a compact set. Let $k$ be a bounded continuous universal kernel on $\X\times\P$. Let $\fhat_n$ as in \cref{eq:estimator} with i.i.d.~training set and auxiliary dataset sampled according to \cref{sec:algorithm}, with $m \propto n$. Then
  \eqal{
    \lim_{n\to\infty} \E(\fhat_n~) = \inf_{f:\X \to \Z} \E(f) \quad \textrm{with probability }1.
  }
\end{theorem}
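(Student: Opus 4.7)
The plan is to combine the comparison inequality with the analytic decomposition from the earlier sections, and use universality of $k$ only to kill the bias term. By the comparison inequality in \cref{thm:comparison}, it suffices to show $\|\ghat_n - \gstar\|_{\LXPH} \to 0$ almost surely. The analytic decomposition in \cref{thm:analytic-decomposition} bounds this by
\eqal{
\|\ghat_n - \gstar\|_{\LXPH} \leq \left(\frac{1}{\sqrt{\la_n}} + \frac{\beta_1^{1/2}}{\la_n}\right)\bigl(\beta_1 \A_{1/2}(\la_n) + \beta_2\bigr) + \la_n \A_1(\la_n),
}
for any regularization schedule $\la_n>0$, with $\beta_1=\|\Chat-C\|$, $\beta_2=\|\Bhat-B\|_{\HS}$ and $\A_r(\la)=\|L_\la^{-r}G\|_{\HS}$. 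The strategy is to choose $\la_n\to 0$ slowly enough that the variance terms vanish, and then argue separately that the bias $\la_n\A_1(\la_n)$ vanishes.

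For the variance terms, since $k$ is bounded and $\varphi$ is bounded (by the SELF hypothesis together with compactness of $Z$), the random operators $k_{x,p}\otimes k_{x,p}$ and $k_{x,p}\otimes \varphi(w)$ live in a bounded ball of Hilbert--Schmidt space, so a Bernstein-type inequality for Hilbert-space-valued random variables (as used e.g.\ in \cite{caponnetto2007}) yields $\beta_1,\beta_2 = O(\sqrt{\log m/m})$ almost surely. Using the crude bound $\A_{1/2}(\la)\leq \|G\|_{\HS}/\sqrt{\la}$ and $m\propto n$, the first summand is dominated by $\beta_1/\la_n + \beta_2/\sqrt{\la_n} + \beta_1^{3/2}/\la_n^{3/2} + \beta_1^{1/2}\beta_2/\la_n$, which tends to zero almost surely as long as $\la_n\sqrt{n/\log n}\to\infty$; any schedule $\la_n = n^{-a}$ with $0<a<1/2$ works, and a Borel--Cantelli argument converts convergence in probability into the a.s.\ statement.

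For the bias, the universality of $k$ is the crucial input. Since $k$ is a continuous bounded universal kernel on $\X\times\P$, the associated integral operator $L:\LXP\to\LXP$ has dense range, equivalently trivial kernel as a positive self-adjoint operator. By spectral calculus the family $\la L_\la^{-1}$ has operator norm $\leq 1$ and converges strongly to the orthogonal projection onto $\ker L=\{0\}$ as $\la\to 0$. For a fixed orthonormal basis $(e_i)_i$ of $\hh$ one has $\|\la L_\la^{-1}G\|_{\HS}^2=\sum_i\|\la L_\la^{-1}Ge_i\|_{\LXP}^2$, each summand tends to $0$, and is dominated by $\|Ge_i\|^2$, whose sum is $\|G\|_{\HS}^2<\infty$; dominated convergence then yields $\la_n\A_1(\la_n)\to 0$.

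The main obstacle is that we cannot rely on any regularity assumption such as \cref{asm:gstar-in-G}: $\gstar$ need not lie in $\gg$, so $\A_{1/2}(\la_n)$ and $\A_1(\la_n)$ may blow up as $\la_n\to 0$, and the quantitative analysis behind \cref{thm:rates} breaks down. Universality is therefore used purely qualitatively, to guarantee density of $\gg$ in $\LXPH$ and hence the strong-operator limit $\la L_\la^{-1}\to 0$ on $L$-invariant subspaces; balancing this against the concentration rate for $\beta_1,\beta_2$ via a slowly decaying schedule $\la_n$ is the delicate point that ties the argument together.
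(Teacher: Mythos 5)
Your proof follows essentially the same approach as the paper. The paper reduces the claim, via the comparison inequality (\cref{thm:comparison}) and the analytic decomposition (\cref{thm:analytic-decomposition}), to the high-probability bound of \cref{thm:last-result-universality} (proved with Pinelis' concentration inequality in a Hilbert space, rather than a Bernstein-type bound, but these play the identical role), and then delegates the remaining Borel--Cantelli plus bias-vanishing argument to Theorem~4, Section~B.3 of \cite{ciliberto2016}. You reconstruct that delegated step correctly and in the intended way: universality of $k$ makes $L$ injective, so $\la L_\la^{-1}\to 0$ strongly by spectral calculus, and dominated convergence over a Hilbert--Schmidt expansion of $G$ gives $\la_n\A_1(\la_n)\to 0$; meanwhile the slowly decaying schedule $\la_n = n^{-a}$, $0<a<1/2$, together with Borel--Cantelli drives the variance terms to zero a.s. Your crude bound $\A_{1/2}(\la)\le\|G\|_{\HS}/\sqrt{\la}$ suffices (in fact the same dominated-convergence argument applied to $\la^{1/2}L_\la^{-1/2}$ gives $\A_{1/2}(\la)=o(\la^{-1/2})$, which is what the paper's \cref{thm:last-result-universality} implicitly leaves to balance). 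One small imprecision: boundedness of $\varphi$ is part of \cref{def:self-simple} itself, not a consequence of compactness of $\Z$; compactness of $\Z$ is instead what guarantees existence of the argmin defining $\fhat$ via Berge's theorem (as in \cref{lm:char-fstar}).
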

\begin{proof}
\cref{sec:universality-proof} is devoted to the proof.
\end{proof}
The requirement of universality for the kernel is a standard assumption (see \cite{steinwart2008}). An example of continuous universal kernel on $\X \times P$ is $k((x,p),(x',p')) = k_0(x,x') ~ \delta_{p,p'}$ where $k_0$ is any unversal kernel on $\X$, e.g. the Gaussian $k_0(x,x') = \exp(-\|x-x'\|^2)$.

While the proposed estimator is consistent with the kernel described above, it is not able to benefit from the effect of locality. In the following we comment on how to obtain a kernel that guarantees both universal consistency while leveraging locality at the same time. 

\begin{remark}[Universal and Local Kernels]\label{rem:universal-and-local-kernels}
By construction, the restriction kernel allows to learn only functions $\gstar:\X\times\P\to\hh$ such that $\gstar(x,p) = \gbs(x_p)$. Consequently, the corresponding structured prediction estimator is not universal. However, in \cref{thm:rates-improved-with-parts} we have observed that under the locality assumptions, the restriction kernel achieves significantly faster rates with respect to universal kernels that are not tailored to account for the part structure on the input.

Interestingly, it is possible to design a kernel able to take the best of both worlds, leading to an estimator that is universal but also able to leverage the parts-based structure of a learning problem when possible. We obtain this kernel as the sum $k_B = k_U + k_L$ of a universal kernel $k_U$ on $\X\times\P$ and a restriction (or ``local'') kernel $k_L$. Indeed, as shown in \cref{sec:app-sum-kernel}, the kernel $k_B$ is universal, hence \cref{thm:universal} applies to the corresponding estimator $\fhat$. Moreover, under the locality assumptions, a result identical to \cref{thm:rates-improved-with-parts} holds for the estimator trained with $k_B$.
\end{remark}

\subsection{Proof of \cref{thm:universal}}\label{sec:universality-proof}

  The proof is exactly the same as in Theorem 4 Section B.3 of the supplementary materials of \cite{ciliberto2016}, where instead of using their comparison inequality (their Thm.~2) we use our \cref{thm:comparison} and instead using their Lemma~18 we use our Theorem~\ref{thm:last-result-universality} that is proven at the end of this section. First we introduce some concentration inequalities for separable Hilbert spaces.
  
\begin{proposition}\label{prop:pinelis}
  Let $\delta\in(0,1]$ and $m\in\N$. Let $\hh$ be a separable Hilbert space. Let $\zeta_1,\dots,\zeta_m$ be independently distributed $\hh$-valued random variables. Let $R>0$ be such that  $\esssup \|\zeta_j\|_\hh\leq R$ for every $j=1,\dots,m$. Then,
  \eqal{
    \nor{\frac{1}{m}\sum_{j=1}^m \Big[\zeta_j - \Expect~\zeta_j\Big]}_\hh \leq \frac{4R\log\frac{3}{\delta}}{\sqrt{m}}
  }
  with probability at least $1-\delta$.
\end{proposition}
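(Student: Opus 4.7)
The plan is to apply the standard concentration inequality of Pinelis for independent sums in a separable Hilbert space, and then convert the resulting exponential tail bound into a high-probability statement matching the stated constants.

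First, I would introduce the centered random variables $\xi_j = \zeta_j - \mathbb{E}\zeta_j$ for $j=1,\dots,m$. These are independent, zero-mean, $\hh$-valued, and by the triangle inequality together with Jensen satisfy $\|\xi_j\|_\hh \leq \|\zeta_j\|_\hh + \mathbb{E}\|\zeta_j\|_\hh \leq 2R$ almost surely. Since $\hh$ is a separable Hilbert space (hence $2$-smooth with constant $1$), Pinelis' inequality applies and yields, for every $t>0$,
\[
P\Bigl(\,\bigl\|\textstyle\sum_{j=1}^m \xi_j\bigr\|_\hh \geq t \,\Bigr) ~\leq~ 2\exp\!\left(-\frac{t^2}{2\sum_{j=1}^m \|\xi_j\|_{\infty}^2}\right) ~\leq~ 2\exp\!\left(-\frac{t^2}{8 m R^2}\right).
\]

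Second, I would invert this bound by setting the right-hand side equal to $\delta$: rescaling by $1/m$ and solving for the deviation threshold gives that, with probability at least $1-\delta$,
\[
\Bigl\|\tfrac{1}{m}\textstyle\sum_{j=1}^m \xi_j\Bigr\|_\hh ~\leq~ \frac{2\sqrt{2}\,R\,\sqrt{\log(2/\delta)}}{\sqrt{m}}.
\]

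Finally, I would massage the constants to obtain the form stated in the proposition. For all $\delta \in (0,1]$ we have $\log(3/\delta) \geq \log 3 > 1$, so
\[
2\sqrt{2}\sqrt{\log(2/\delta)} ~\leq~ 4\log(3/\delta),
\]
which follows from the elementary comparison $2\log(3/\delta)^2 \geq \log(2/\delta)$ (verified by noting that at $\delta=1$ we have $2(\log 3)^2 > \log 2$, and the left-hand side grows quadratically while the right-hand side grows linearly in $\log(1/\delta)$). Combining with the previous display gives the claimed bound.

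The argument is essentially a direct quotation of Pinelis' inequality; the only subtlety is the conversion of the natural $\sqrt{\log(1/\delta)}$ rate produced by the sub-Gaussian tail into the (weaker but cleaner) $\log(3/\delta)$ form stated in the proposition. I would expect the main potential obstacle to be simply bookkeeping of constants, in particular keeping track of the factor $2R$ from the centering step and verifying that the elementary comparison $\sqrt{\log(2/\delta)} \leq \sqrt{2}\log(3/\delta)$ holds on the whole range $\delta \in (0,1]$ rather than only on the tail $\delta \leq 1/e$.
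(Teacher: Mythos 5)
Your proof is correct, but it takes a genuinely different route from the paper's. The paper invokes Lemma~2 of Smale and Zhou (2007), a Bernstein-type concentration bound for sums of independent Hilbert-space-valued variables, with $\widetilde M = R$ and $\sigma^2 \leq R^2$; this yields the intermediate estimate $\frac{2R\log(2/\delta)}{m}+\sqrt{\frac{2R^2\log(2/\delta)}{m}}$, which is then absorbed into $\frac{4R\log(3/\delta)}{\sqrt m}$ using $\log(3/\delta)\geq 1$ and $\tfrac{1}{m}\leq\tfrac{1}{\sqrt m}$, since $2+\sqrt 2<4$. You instead center so that $\|\xi_j\|\leq 2R$, apply the classical sub-Gaussian Pinelis tail $2\exp(-t^2/(8mR^2))$ for a Hilbert space, invert at level $\delta$ to get $\frac{2\sqrt 2\,R\sqrt{\log(2/\delta)}}{\sqrt m}$, and pass to $\frac{4R\log(3/\delta)}{\sqrt m}$ via $\log(2/\delta)\leq 2\big(\log(3/\delta)\big)^2$, an inequality that does hold on all of $(0,1]$ (the quadratic $2u^2+(4\log 3-1)u+2(\log 3)^2-\log 2$ in $u=\log(1/\delta)$ has all positive coefficients). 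Both arguments are elementary and both land on the same constant; the Bernstein route would have an edge if a variance bound $\sigma^2\ll R^2$ were exploited, but since the proposition assumes only a sup bound the two proofs are essentially equivalent, with your sub-Gaussian intermediate bound marginally sharper before the final constant loosening.
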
 

\begin{proof}
  By applying Lemma $2$ of \cite{smale2007learning} with constants $\widetilde M = R$ and $\sigma^2 =\sup_j \Expect \|\zeta_j\|^2 \leq R^2$, we obtain 
  \eqal{
    \nor{\frac{1}{m}\sum_{j=1} \big[\zeta_j - \Expect\zeta_j \big]}_\HS \leq \frac{2R\log\frac{2}{\delta}}{m} + \sqrt{\frac{2R^2\log\frac{2}{\delta}}{m}}
  }
  with probability at least $1-\delta$. Now, $\log\frac{2}{\delta}\leq\log\frac{3}{\delta}$ and $\log{3}{\delta}\geq1$ for any $\delta\in(0,1]$. Then, we can bound the above inequality by 
  \eqal{
    \frac{2R\log\frac{2}{\delta}}{m} + \sqrt{\frac{2R^2\log\frac{2}{\delta}}{m}} \leq \frac{4R\log\frac{3}{\delta}}{\sqrt{m}},
  }
  as desired. 
\end{proof}

\begin{remark}[Pinelis Inequality for Hilbert-Schmidt Operators]\label{rem:pinelis-operators}
  We recall that the space of Hilbert-Schmidt operators between two separable Hilbert spaces is itself a separable Hilbert space with the Hilbert-Schmidt norm. Therefore, Pinelis inequality in \cref{prop:pinelis} is directly applicable.
\end{remark}

\begin{lemma}\label{lemma:bounding-C-Chat}
Let $C$ and $\Chat$ and $\kappa = \sup_{x,p} \|k_{x,p}\|_\ff$ defined as \cref{sec:notation}. Let $\delta\in(0,1]$. Then 
\eqals{
  \|\Chat - C\| \leq 4\kappa^2 \left(\frac{1}{\sqrt m} + \frac{1}{\sqrt n}\right)\log\frac{6}{\delta}
}
with probability at least $1 - \delta$.
\end{lemma}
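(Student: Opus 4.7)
The plan is to write $\Chat - C = (\Chat - \Ctilde) + (\Ctilde - C)$, bound each piece separately with the Hilbert-space Pinelis inequality of \cref{prop:pinelis} (applied to Hilbert--Schmidt-valued random variables, see \cref{rem:pinelis-operators}), and combine with a union bound, using at the end that the operator norm is dominated by the Hilbert--Schmidt norm.

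For the first piece, I would condition on the training sample $(x_1,\dots,x_n)$. Then the $m$ rank-one operators $\zeta_j = k_{x_{i_j},p_j}\otimes k_{x_{i_j},p_j}$ are i.i.d.\ in $\HS(\ff,\ff)$, their conditional mean is exactly $\Ctilde$ by the definitions of the {\sc Generate} routine and of $\Ctilde$ in \cref{sec:notation}, and $\|\zeta_j\|_{\HS}=\|k_{x_{i_j},p_j}\|_\ff^2 \leq \kappa^2$ almost surely by definition of $\kappa$. Pinelis with confidence $\delta/2$ therefore gives
\[
\|\Chat - \Ctilde\|_{\HS} \;\leq\; \frac{4\kappa^2 \log(6/\delta)}{\sqrt m}
\]
with conditional probability at least $1-\delta/2$, and hence with unconditional probability at least $1-\delta/2$ after integrating.

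For the second piece I would view $\eta_i = \int_P k_{x_i,p}\otimes k_{x_i,p}\,d\pi(p|x_i)$ as $n$ i.i.d.\ Hilbert--Schmidt-valued random variables with mean $C$; by Jensen $\|\eta_i\|_{\HS} \leq \int_P \|k_{x_i,p}\|_\ff^2\,d\pi(p|x_i) \leq \kappa^2$, and a second application of \cref{prop:pinelis} with confidence $\delta/2$ yields
\[
\|\Ctilde - C\|_{\HS} \;\leq\; \frac{4\kappa^2 \log(6/\delta)}{\sqrt n}.
\]

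A union bound, the triangle inequality, and the standard inequality $\|\cdot\|\leq \|\cdot\|_{\HS}$ then combine to give the stated bound with probability at least $1-\delta$. The argument is essentially routine; the only mild subtlety is the conditioning used in the first step to make the $\zeta_j$ genuinely i.i.d.\ (since the indices $i_j$ are sampled uniformly from $\{1,\dots,n\}$), but the bound obtained is uniform in $(x_1,\dots,x_n)$ so it survives the deconditioning.
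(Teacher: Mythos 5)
Your proof is correct and takes essentially the same approach as the paper: the same decomposition $\Chat - C = (\Chat - \Ctilde) + (\Ctilde - C)$, the same application of \cref{prop:pinelis} (via \cref{rem:pinelis-operators}) to each piece with confidence $\delta/2$, and the same union bound followed by $\|\cdot\|\leq\|\cdot\|_{\HS}$. The explicit conditioning on $(x_1,\dots,x_n)$ that you flag for the first piece is a useful clarification of a step the paper leaves implicit, and the deconditioning argument you give is exactly why it works.
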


\begin{proof}
Given a dataset $(x_i)_{i=1}^n$, we introduce the operator $\Ctilde:\ff\to\ff$ defined as 
\eqals{
    \Ctilde = \frac{1}{n}\sum_{i=1}^n \int_{\P} k_{x_i,p}\otimes k_{x_i,p}~d\pp(p|x_i).
}    
and consider the following decomposition
\eqals{
  \|\Chat - C\| \leq \|\Chat - \Ctilde\| + \|\Ctilde - C\|.
}
Let $\tau = \delta/2$, in the following we separately bound the terms above in probability and then take the intersection bound.

\paragraph{Bounding $\|\Chat - \Ctilde\|$} For any $j=1,\dots,m$ let $\zeta_j = k_{x_{i_j},p_j} \otimes k_{x_{i_j},p_j}$ with $i_j$ and $p_j$ independently sampled respectively from: the uniform distribution on $\{1,\dots,n\}$ and the conditional probability $\pp(\cdot|x_{i_j})$. Therefore, for any $j=1,\dots,m$
\eqals{
  \Chat = \frac{1}{m} \sum_{j=1}^m \zeta_j, \qquad \Ctilde = \Expect~ \zeta_j = \frac{1}{n} \sum_{i=1}^n \int_\P k_{x_i,p}\otimes k_{x_i,p} ~ d\pp(p|x_i)
}
and
$$
 \esssup \|\zeta_j\|_\HS \leq \sup_{x\in\X,p\in\P} \scal{k_{x,p}}{k_{x,p}}_\ff \leq \sup_{x\in\X,p\in\P} \|k_{x,p}\|_\ff^2 \leq \kappa^2 
$$
We apply Pinelis inequality (see \cref{rem:pinelis-operators}), leading to 
\eqal{
  \|\Chat - \Ctilde\|\leq\|\Chat - \Ctilde\|_\HS = \nor{\frac{1}{m}\sum_{j=1} \big[\zeta_j - \Expect\zeta_j \big]}_\HS \leq \frac{4\kappa^2\log\frac{3}{\tau}}{\sqrt m}
}
with probability at least $1-\tau$.

\paragraph{Bounding $\|\Ctilde - C\|$} For $i=1,\dots,n$ let $\eta_i = \int_\P k_{x_i,p}\otimes k_{x_i,p} ~ d\pp(p|x_i)$ with $x_i$ independently sampled from $\rhox$. Therefore, for every $i=1,\dots,n$,
\eqal{
  \Ctilde = \frac{1}{n}\sum_{i=1}^n \eta_i, \qquad C = \Expect ~ \eta_i = \int_{\X\times\P} k_{x,p}\otimes k_{x,p}~d\pp(p|x)d\rhox(x)
}
and
$$
 \esssup \|\eta_i\|_\HS \leq \sup_{x\in\X,p\in\P} \|k_{x,p}\|_\ff^2 \leq \kappa^2.
$$
We apply again Pinelis inequality, obtaining
\eqal{
  \|\Ctilde-C\|\leq\|C - \Ctilde\|_\HS = \nor{\frac{1}{n}\sum_{i=1}^n \big[\eta_i - \Expect\eta_i\big]}_\HS\leq \frac{4\kappa^2\log \frac{3}{\tau}}{\sqrt n}
}
with probability at least $1-\tau$.

By taking the intersection bound of the two events above, we obtain 
\eqals{
  \|\Chat - C\|_\HS \leq \frac{4\kappa^2\log \frac{3}{\tau}}{\sqrt m} + \frac{4\kappa^2\log \frac{3}{\tau}}{\sqrt n}
}
with probability at least $1 - 2\tau$. By recalling $\tau = \frac{\delta}{2}$ we obtain the desired result. 
\end{proof}

\begin{lemma}\label{lemma:bounding-B-Bhat}
Let $B$, $\Bhat$, $\kappa = \sup_{x,p} \|k_{x,p}\|_\ff$ and $q = \sup_{w} \|\varphi(w)\|_\hh$ defined as \cref{sec:notation}. Let $\delta\in(0,1]$. Then 
\eqals{
  \|\Bhat - B\|_\HS \leq  4\kappa q \left(\frac{1}{\sqrt m} + \frac{1}{\sqrt n}\right)\log\frac{6}{\delta}
}
with probability at least $1 - \delta$.
\end{lemma}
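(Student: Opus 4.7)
The plan is to mirror the proof of \cref{lemma:bounding-C-Chat} almost line for line, with the only substantive change being that the sums now involve the cross-tensor $k_{x,p}\otimes\varphi(w)$ instead of the self-tensor $k_{x,p}\otimes k_{x,p}$. Accordingly, I would introduce the auxiliary operator
$$\Btilde = \frac{1}{n}\sum_{i=1}^n \int k_{x_i,p}\otimes\varphi(w)~d\mu(w|y_i,x_i,p)d\pi(p|x_i),$$
which is exactly the conditional expectation of $\Bhat$ given the training sample $(x_i,y_i)_{i=1}^n$, and decompose via the triangle inequality
$$\|\Bhat - B\|_\HS \leq \|\Bhat - \Btilde\|_\HS + \|\Btilde - B\|_\HS.$$
I would then set $\tau = \delta/2$ and control each summand with probability at least $1-\tau$, concluding by a union bound.

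For the first summand I would write $\Bhat = \frac{1}{m}\sum_{j=1}^m \xi_j$ with $\xi_j = k_{x_{i_j},p_j}\otimes\varphi(w_j)$. Conditionally on the training set, the $\xi_j$ are i.i.d.\ with mean $\Btilde$, and the tensor-product identity $\|a\otimes b\|_\HS = \|a\|\,\|b\|$ yields the uniform bound $\|\xi_j\|_\HS \leq \|k_{x_{i_j},p_j}\|_\ff \|\varphi(w_j)\|_\hh \leq \kappa q$. Pinelis' inequality in the form of \cref{prop:pinelis} combined with \cref{rem:pinelis-operators} then gives
$$\|\Bhat - \Btilde\|_\HS \leq \frac{4\kappa q \log(3/\tau)}{\sqrt{m}}$$
with probability at least $1-\tau$.

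For the second summand I would write $\Btilde = \frac{1}{n}\sum_{i=1}^n \eta_i$ with $\eta_i = \int k_{x_i,p}\otimes\varphi(w)~d\mu(w|y_i,x_i,p)d\pi(p|x_i)$. Since the pairs $(x_i,y_i)$ are i.i.d.\ from $\rho$, so are the $\eta_i$, and by Jensen's inequality applied to $\|\cdot\|_\HS$ together with the same tensor-product identity we get $\|\eta_i\|_\HS \leq \kappa q$ and $\Expect\eta_i = B$. A second application of Pinelis yields
$$\|\Btilde - B\|_\HS \leq \frac{4\kappa q \log(3/\tau)}{\sqrt{n}}$$
with probability at least $1-\tau$. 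Taking the union bound over these two events and observing that $\log(3/\tau) = \log(6/\delta)$ delivers the claim. No step is a genuine obstacle: the argument is a direct transcription of \cref{lemma:bounding-C-Chat}, where the only mild care needed is to verify that replacing $k_{x,p}\otimes k_{x,p}$ by $k_{x,p}\otimes\varphi(w)$ preserves the uniform boundedness, now with constant $\kappa q$ instead of $\kappa^2$.
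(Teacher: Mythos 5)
Your proposal is correct and mirrors the paper's own proof of this lemma essentially step for step: same intermediate operator $\Btilde$, same two-term triangle-inequality decomposition, same conditional/unconditional i.i.d.\ structure, same uniform bound $\kappa q$ via the tensor-product norm identity, and the same two applications of Pinelis' inequality followed by a union bound. The only difference is notational (you call the averaged quantity $\eta_i$ where the paper uses $\nu_i$), which is immaterial.
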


\begin{proof} Given $(x_i,y_i)_{i=1}^n$ a dataset, we introduce the operator $\Btilde:\hh\to\ff$ defined as 
\eqals{
    \Btilde = \frac{1}{n}\sum_{i=1}^n \int_{\P} k_{x_i,p}\otimes\varphi(w) ~ d\mu(w|y_i,x_i,p)d\pp(p|x_i).
}    
and consider the following decomposition
\eqals{
  \|\Bhat - B\|_\HS \leq \|\Bhat - \Btilde\|_\HS + \|\Btilde - B\|_\HS.
}
Let $\tau = \delta/2$, in the following we separately bound the terms above in probability and then take the intersection bound.

\paragraph{Bounding $\|\Bhat - \Btilde\|_\HS$} For any $j=1,\dots,m$ let $\xi_j = k_{x_{i_j},p_j}\otimes\varphi(w_j)$ with $i_j,p_j$ and $w_j$ independently sampled respectively from: the uniform distribution on $\{1,\dots,n\}$, the conditional probability $\pp(\cdot|x_{i_j})$ and the conditional probability $\mu(\cdot|y_{i_j},x_{i_j},p_j)$. Therefore, for any $j=1,\dots,m$
\eqal{
  \Bhat = \frac{1}{m} \sum_{j=1}^m \xi_j, \qquad \Btilde = \Expect~\xi_j = \frac{1}{n} \sum_{i=1}^n \int_{\pY\times\P} k_{x_i,p}\otimes\varphi(w)~d\mu(w|x_i,y_i,p)d\pp(p|x_i),
}
moreover
\eqal{
  \textrm{ess sup } \|\xi_j\|_\HS \leq \sup_{x,p,w} \|k_{x,p}\otimes\varphi(w)\|_\HS = \sup_{x,p,w}  \|k_{x,p}\|_\ff\|\varphi(w)\|_\hh \leq \kappa q.
}
We apply Pinelis inequality (see \cref{rem:pinelis-operators}), leading to 
\eqal{
  \|\Bhat - \Btilde\|_\HS = \nor{\frac{1}{m}\sum_{j=1}^m \big[\xi_j - \Expect~\xi_j \big]}_\HS \leq \frac{4\kappa q\log \frac{3}{\tau}}{\sqrt m}
}
with probability at least $1-\tau$.

\paragraph{Bounding $\|B - \Btilde\|_\HS$} For any $i=1,\dots,n$, let $\nu_i = \int_{\pY\times\P} k_{x_i,p}\otimes \varphi(w)~d\mu(w|y_i,x_i,p)d\pp(p|x_i)$ with $(x_i,y_i)$ independently sampled from $\rho$. Then, for any $i=1,\dots,n$ 
\eqal{
  \Expect ~\nu_i & = \int_{\pY\times\Y\times\X\times\P} k_{x,p}\otimes \varphi(w)~d\mu(w|y_i,x_i,p)d\pp(p|x_i)d\rho(y,x) \\ 
  & = \int_{\X\times\P} k_{x,p}\otimes \Big[\int_{\pY\times\Y}\varphi(w)~d\mu(w|y_i,x_i,p)d\rho(y|x)\Big]~d\pp(p|x_i)d\rhox(x) \\
  & = \int_{\X\times\P} k_{x,p}\otimes\gstar(x,p)~d\pp(p|x_i)d\rhox(x) \\ 
  & = B
}
and $\Btilde = \frac{1}{n}\sum_{i=1}^n \nu_i$. Moreover,
\eqal{
  \textrm{ess sup } \|\nu_i\|_\HS & \leq \sup_{x,y} \int_{\pY\times\P} \|k_{x,p}\otimes\varphi(w)\|_\HS ~d\mu(w|y,x,p)d\pp(p|x) \\
  & = \sup_{x,y} \int_{\pY\times\P} \|k_{x,p}\|_\ff\|\varphi(w)\|_\hh ~d\mu(w|y,x,p)d\pp(p|x) \\
  & \leq \kappa q ~ \sup_{x,y} \int_{\pY\times\P} ~d\mu(w|y,x,p)d\pp(p|x) \\
  & = \kappa q\\
}
Therefore, applying again Pinelis inequality,
\eqal{
  \|B - \Btilde\|_\HS = \nor{\frac{1}{n} \sum_{i=1}^n \big[ \nu_i - \Expect\nu_i \big] }_\HS \leq \frac{4\kappa q\log \frac{3}{\tau}}{\sqrt n}
}
with probability at least $1-\tau$.

By taking the intersection bound of the two events above, we obtain 
\eqals{
  \|\Bhat - B\|_\HS \leq \frac{4\kappa q\log \frac{3}{\tau}}{\sqrt m} + \frac{4\kappa q\log \frac{3}{\tau}}{\sqrt n}
}
with probability at least $1 - 2\tau$ as desired.
\end{proof}

\begin{theorem}\label{thm:last-result-universality}
Let $\delta\in(0,1]$. Let $Q>0$, $n\in\N$, $c_Q = 1 + 1/\sqrt{Q}$ and $m = Qn$. Then
\eqal{
\|\ghat - \gstar\|_\LXPH \leq \frac{4\kappa^2 c_Q(\|L_\la^{-1/2}G\|_\HS + \frac{q}{\kappa})\log\frac{12}{\delta}}
{\sqrt{\la n}} \left(1 + 2\kappa\sqrt{\frac{c_Q\log\frac{12}{\delta}}{\la \sqrt{n}}}\right) + \la\|L_\la^{-1}G\|_\HS
}
with probability at least $1-\delta$.
\end{theorem}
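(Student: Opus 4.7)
The plan is to combine the deterministic analytic decomposition in \cref{thm:analytic-decomposition} with the high-probability concentration estimates for $\beta_1 = \|C - \Chat\|$ and $\beta_2 = \|\Bhat - B\|_\HS$ supplied by \cref{lemma:bounding-C-Chat,lemma:bounding-B-Bhat}. First I would invoke each of these lemmas at confidence level $\delta/2$ and take an intersection bound, so that on an event of probability at least $1-\delta$ both $\beta_1$ and $\beta_2$ are simultaneously controlled. Using the parametrization $m = Qn$ the factor $1/\sqrt m + 1/\sqrt n$ collapses to $c_Q/\sqrt n$ with $c_Q = 1 + 1/\sqrt Q$, and substituting $\tau = \delta/2$ rewrites $\log\tfrac{6}{\tau}$ as $\log\tfrac{12}{\delta}$. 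This yields
\eqals{
\beta_1 \leq \frac{4\kappa^2 c_Q \log\tfrac{12}{\delta}}{\sqrt n}, \qquad \beta_2 \leq \frac{4\kappa q c_Q \log\tfrac{12}{\delta}}{\sqrt n}.
}

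Second, I would plug these two inequalities into the conclusion of \cref{thm:analytic-decomposition}, and recall that $\A_{1/2}(\la) = \|L_\la^{-1/2}G\|_\HS$ and $\A_1(\la) = \|L_\la^{-1}G\|_\HS$. The strategy is to factor the first summand of \cref{thm:analytic-decomposition} as
\eqals{
\left(\tfrac{1}{\sqrt\la} + \tfrac{\sqrt{\beta_1}}{\la}\right)\bigl(\beta_1 \A_{1/2}(\la) + \beta_2\bigr) = \frac{\beta_1 \A_{1/2}(\la) + \beta_2}{\sqrt\la}\,\left(1 + \sqrt{\tfrac{\beta_1}{\la}}\right),
}
and recognize that $\beta_1 \A_{1/2}(\la) + \beta_2$ admits the common factorization $\tfrac{4\kappa^2 c_Q \log(12/\delta)}{\sqrt n}\bigl(\|L_\la^{-1/2}G\|_\HS + q/\kappa\bigr)$, while $\sqrt{\beta_1/\la}$ equals $2\kappa\sqrt{c_Q\log(12/\delta)/(\la\sqrt n)}$. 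Combining these two identities reproduces the product appearing in the statement, and the residual $\la \A_1(\la) = \la\|L_\la^{-1}G\|_\HS$ contributes the last summand.

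Since the heavy machinery (the Pinelis-type concentration for operator-valued random variables and the analytic decomposition of $\|\ghat-\gstar\|_\LXPH$) is already in place, the remaining work is purely algebraic bookkeeping. The only ``obstacle'' worth mentioning is correctly tracking the $\delta$-budget across the two concentration events — ensuring that the $\tau = \delta/2$ choice, propagated through the $\log(6/\tau)$ factor, yields the precise $\log(12/\delta)$ appearing in the statement — and verifying that the common prefactor $4\kappa c_Q \log(12/\delta)/\sqrt n$ factors symmetrically out of both $\beta_1$ and $\beta_2/\kappa$ to produce the $(\|L_\la^{-1/2}G\|_\HS + q/\kappa)$ sum. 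No additional probabilistic ingredient is needed beyond the union bound.
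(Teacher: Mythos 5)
Your proposal is correct and follows exactly the same route as the paper: apply \cref{thm:analytic-decomposition}, bound $\beta_1$ and $\beta_2$ via \cref{lemma:bounding-C-Chat} and \cref{lemma:bounding-B-Bhat} at level $\tau = \delta/2$, take a union bound, then collapse $1/\sqrt m + 1/\sqrt n = c_Q/\sqrt n$ under $m = Qn$ and rearrange. The algebraic factorizations you spell out (the common prefactor $4\kappa^2 c_Q\log(12/\delta)/\sqrt n$ in $\beta_1\A_{1/2}(\la)+\beta_2$, and $\sqrt{\beta_1/\la} = 2\kappa\sqrt{c_Q\log(12/\delta)/(\la\sqrt n)}$) are precisely the bookkeeping the paper's terse proof leaves implicit.
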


\begin{proof}
In \cref{thm:analytic-decomposition} we have bounded $\|\ghat - \gstar\|_\LXPH$ in terms of an analytic expression of $\|C - \Chat\|$ and $\|B - \Bhat\|_\HS$. We bound these two terms with probability $1-\tau$ with $\tau = \delta/2$ via \cref{lemma:bounding-C-Chat} and \cref{lemma:bounding-B-Bhat}. We further take the intersection bound to obtain the desired result. 
\end{proof}





\section{Equivalence between SELF and SELF by Parts without assumptions}\label{sec:app-equivalence-self}

\subsection{SELF without Parts}

We begin by briefly recalling the SELF framework in \cite{ciliberto2016}. We will see that this is a special case of the setting proposed in this work for a special choice of the kernel on $\X\times\P$. 

We recall the definition of SELF introduced in \cite{ciliberto2016} and consider the formulation in \cite{ciliberto2017consistent}.

\begin{definition}\label{def:self-original}
  A function $\loss:\Z\times\Y\to\R$ is a Structure Encoding Loss Function (SELF) if there exist a Hilbert space $\bar\hh$ and two maps $\bar\psi:\Z\to\hh$ and $\bar\varphi:\Y\to\hh$ such that 
  \eqals{
    \loss(z,y) = \scal{\bar\psi(z)}{\bar\varphi(y)}_{\bar\hh}
  }
  for all $z\in\Z, y\in\Y$. 
\end{definition}
Below we show that the definition of SELF by parts introduced in this work is a refinement of the original one. Since the original definition of SELF did not account for the possibility of $\loss$ do depend also on the input, below we consider only the case $\loss(z,y|x) = \loss(z,y)$. In particular we will assume in \cref{def:self-simple} that $\pp(p|x) = \pp(p|x')$ for any $x,x'\in\X$, $p\in\P$ and denote it $\pp(p)$.

\begin{lemma}\label{lem:self-old-and-new}
  Let $\loss:\Z\times\Y\to\R$ satisfy \cref{def:self-simple} with 
  \eqals{
    \loss(z,y) = \sum_{p\in\P} \ell(z,y|p) \pp(p) = \sum_{p\in\P} ~\scal{\psi(z,p)}{\varphi(y_p)}_\hh
  }
  Then $\loss$ satisfies the original SELF definition \cref{def:self-original}, with $\bar\hh = \hh\otimes\R^P$ and maps $\bar\psi:\Z\to\bar\hh$ and $\bar\varphi:\Y\to\bar\hh$ such that
  \eqals{
    \bar\psi(z) = (\sqrt{\pp(p)}\psi(z,p))_{p\in\P} \qquad \textrm{and} \qquad (\sqrt{\pp(p)}\varphi(y_p))_{p\in\P}
  }
  In particular, we have that the constant $\closs$ is 
  \eqals{
    \closs = \sqrt{\sup_{z\in\Z} \sum_{p\in\P} \pp(p) \|\psi(z,p)\|_\hh^2} = \sup_{z\in\Z} \|\bar\psi(z)\|_{\bar\hh}.  
  }
\end{lemma}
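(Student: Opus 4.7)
The plan is a direct verification: construct the inner product on $\bar\hh = \hh \otimes \R^\P$ and check term by term that the factorization $\loss(z,y) = \scal{\bar\psi(z)}{\bar\varphi(y)}_{\bar\hh}$ holds with the choice of maps given in the statement. Since (by the simplifying hypothesis preceding the lemma) $\pi(\cdot\,|\,x)\equiv\pi(\cdot)$ is independent of $x$, the square-root weights $\sqrt{\pi(p)}$ are well-defined constants that can be pushed symmetrically into both embeddings. For finite $\P$, we identify $\bar\hh=\hh\otimes\R^\P$ with the direct sum $\bigoplus_{p\in\P}\hh$ under its canonical inner product $\scal{(u_p)_p}{(v_p)_p}_{\bar\hh}=\sum_{p\in\P}\scal{u_p}{v_p}_\hh$; with this identification $\bar\psi(z)$ and $\bar\varphi(y)$ are elements of $\bar\hh$ because $\psi(z,p)$ and $\varphi(y_p)$ are bounded in $\hh$ by \cref{def:self-simple}.

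The first step is to expand the inner product. By bilinearity and the factorization $\loss(z,y)=\sum_{p\in\P}\pi(p)\scal{\psi(z,p)}{\varphi(y_p)}_\hh$ assumed in the statement,
\begin{align*}
\scal{\bar\psi(z)}{\bar\varphi(y)}_{\bar\hh}
= \sum_{p\in\P}\scal{\sqrt{\pi(p)}\psi(z,p)}{\sqrt{\pi(p)}\varphi(y_p)}_\hh
= \sum_{p\in\P}\pi(p)\scal{\psi(z,p)}{\varphi(y_p)}_\hh
= \loss(z,y),
\end{align*}
so $\loss$ satisfies \cref{def:self-original} with the triple $(\bar\hh,\bar\psi,\bar\varphi)$.

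The second step is the constant identification. The same computation applied to $\|\bar\psi(z)\|_{\bar\hh}^2$ gives
\begin{align*}
\|\bar\psi(z)\|_{\bar\hh}^2 \;=\; \sum_{p\in\P}\pi(p)\,\|\psi(z,p)\|_\hh^2,
\end{align*}
so taking the supremum over $z\in\Z$ and then the square root yields
$\sup_{z\in\Z}\|\bar\psi(z)\|_{\bar\hh}=\sqrt{\sup_{z\in\Z}\sum_{p\in\P}\pi(p)\|\psi(z,p)\|_\hh^2}$, which is exactly the expression claimed for $\closs$ in the original SELF framework of \cite{ciliberto2016} specialized to $(\bar\psi,\bar\varphi,\bar\hh)$.

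There is no real obstacle: the lemma is essentially a re-indexing, and the only subtle design choice is to split the weight $\pi(p)$ into $\sqrt{\pi(p)}\cdot\sqrt{\pi(p)}$ and place one factor on each side, which is what produces the $\pi$-weighted sum in the inner product without cross-terms between different parts. If one wants to accommodate the generalization to non-finite $\P$ sketched in \cref{sec:app-generalized-model}, the identical argument carries through with $\R^\P$ replaced by $L^2(\P,\pi)$ and the sum by an integral, using that $\psi(z,\cdot)$ and $\varphi(y_{(\cdot)})$ remain square-integrable in $p$ under the boundedness assumptions of \cref{def:self}.
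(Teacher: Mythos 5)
Your proof is correct and matches the paper's proof (direct verification via the canonical inner product on $\bar\hh=\bigoplus_{p\in\P}\hh$, with the $\sqrt{\pi(p)}$-splitting of the weight). You additionally spell out the $\closs$ computation, which the paper leaves implicit; this is a welcome clarification but the route is the same.
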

\begin{proof}
  Recall that by construction $\bar\hh = \hh\otimes\R^P = \bigoplus_{p\in\P} \hh$. Therefore, any vector $\eta\in\bar\hh$ is the collection $(\eta_p)_{p\in\P}$ with $\eta_1,\dots,\eta_P \in\hh$ and the corresponding inner product with a $\zeta = (\zeta_p)_{p\in\P}\in\bar\hh$ is 
  \eqals{
    \scal{\eta}{\zeta}_{\bar\hh} = \sum_{p\in\P} \scal{\eta_p}{\zeta_p}_\hh.
  }
  Plugging the definition of $\bar\psi$ and $\bar\varphi$ in the definition of SELF by parts, we have 
  \eqals{
    \loss(z,y) & = \sum_{p\in\P} \pp(p) \scal{\varphi(z,p)}{\psi(y_p)}_\hh \\
    & = \sum_{p\in\P}  \scal{\sqrt{\pp(p)}\varphi(z,p)}{\sqrt{\pp(p)}\psi(y_p)}_\hh \\
    & = \scal{\bar\psi(z)}{\bar\varphi(y)}_{\bar\hh}
  }
  as required.
\end{proof}

\subsection{SELF Solution}

Given a loss $\loss$ that is a SELF by parts, we have already observed that the solution $\fstar:\X\to\Z$ of the structured prediction problem in \eqref{eq:base-fstar}, can be characterized in terms of a function $\gstar:\X\times\P\to\hh$ introduced in \eqref{eq:char-fstar}. Based on the relation highlighted by \cref{lem:self-old-and-new}, we have the following equivalent characterization 
\eqals{
  \fstar(x) = \argmin_{z\in\Z} ~\scal{\bar\psi(z)}{\hstar(x)}_{\bar\hh}
}
where now $\hstar:\X\to\bar\hh$ is conditional mean embedding of $\bar\varphi(y)$ in $\bar\hh$ with respect to the conditional distribution $\rho(y|x)$. In particular, let $e_p\in\R^P$ denote the $p$-th element of the canonical basis in $\R^P$. Then, for any $\eta\in\hh$, $x\in\X$ and $p\in\P$, we have
\eqals{
  \scal{\hstar(x)}{\eta\otimes e_p}_{\bar\hh} & = \scal{\int \bar\varphi(y)~d\rho(y|x)}{\eta\otimes e_p} \\
  = \sqrt{\pp(p)} \scal{\int \varphi(y_p)~d\rho(y|x)}{\eta}_\hh\\
  = \sqrt{\pp(p)}\scal{\gstar(x,p)}{\eta}_\hh,
}
and in particular, 
\eqals{
  \hstar(x) = (\sqrt{\pp(p)}\gstar(x,p))_{p\in\P}.
}
We conclude that 
\eqals{
  \|\hstar\|_{\Ltwo(\X,\rhox,\bar\hh)}^2 & = \int \scal{\hstar(x)}{\hstar(x)}_{\bar\hh} ~d\rhox(x) \\
  & = \int \sum_{p\in\P} \scal{\sqrt{\pp(p)}\gstar(x,p)}{\sqrt{\pp(p)}\gstar(x,p)}_{\hh} ~d\rhox(x) \\
  &= \int \sum_{p\in\P} \pp(p)\scal{\gstar(x,p)}{\gstar(x,p)}_{\hh} ~d\rhox(x) \\
  & = \|\gstar\|_{\Ltwo{\X,\pp\rhox,\hh}}^2.
}

\subsection[alt-text]{If $\gstar$ is ``simple'' (e.g. \cref{asm:between-locality} holds)}\label{sec:kernel-comparison}

Let $\bar k$ be a kernel on $\X$ with RKHS $\F$. Let $k$ be a kernel on $\X \times P$ defined as $k((x,p),(x',p')) = \bar k(x,x') \delta_{p,p'}$, for $x, x' \in \X$, $p, p' \in P$. Note that the RKHS associated to $k$ is $\F \otimes \R^{P}$ with $k_{x,p} = \bar k_x \otimes e_p$ and $e_p\in\R^P$ the $p$-th element of the canonical basis of $\R^P$. 


\begin{lemma}
  Let $\sf G\in\hh\otimes\F\otimes\R^P$ be such that $g^*(x,p) = \sf G k_{x,p}$ for any $x\in\X$ and $p\in\P$. Let $G_1,\dots,G_P\in\hh\otimes\F$ the operator such that $G_p \eta = G (\eta\otimes e_p)$ for any $p\in\P$ and $\eta\in\F$. Then, 
  \begin{itemize}
    \item $\msf G = \sum_{p\in\P} G_p \otimes e_p$.
    \item For any $x\in\X$, $\hstar(x) = \msf H \bar k_x$ with $\msf H = \sum_{p\in\P} e_p \otimes \sqrt{\pp(p)} \msf G_p \in\R^P \otimes \hh \otimes \F$.
    
  \end{itemize}
  In particular 
  \eqals{
    \|\msf G\|_{\HS(\F\otimes\R^P,\hh)}^2 = \sum_{p\in\P} \|\msf G_p\|_{\HS(\F,\hh)}^2 \qquad \textrm{and} \qquad \|H\|_{\HS(\F,\hh\otimes\R^P)}^2 = \sum_{p\in\P} \pp(p)\|\msf G_p\|_{\HS(\F,\hh)}^2.
  }
\end{lemma}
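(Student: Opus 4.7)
The result is a direct algebraic unpacking of the tensor structure of the kernel $k((x,p),(x',p'))=\bar k(x,x')\delta_{p,p'}$, so my strategy is to first identify $\msf G$ with its decomposition along the canonical basis $\{e_p\}_{p\in\P}$ of $\R^P$ and then propagate that decomposition to $\hstar$ via the identity $\hstar(x)=\sum_{p\in\P}\sqrt{\pp(p)}\,\gstar(x,p)\otimes e_p$ established in the preceding subsection.

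First, I would use the standard Hilbert-space isomorphism $\hh\otimes\F\otimes\R^P\cong\bigoplus_{p\in\P}(\hh\otimes\F)$, writing any $T\in\hh\otimes\F\otimes\R^P$ uniquely as $T=\sum_{p\in\P}T_p\otimes e_p$ with $T_p\in\hh\otimes\F$. Applying this decomposition to $\msf G$ and recalling that via $\hh\otimes\F\cong\HS(\F,\hh)$ the operator $T_p\otimes e_p$ acts on $\F\otimes\R^P\to\hh$ by $(T_p\otimes e_p)(\eta\otimes e_q)=T_p\eta\,\delta_{p,q}$, one gets for every $\eta\in\F$
\eqals{
\msf G(\eta\otimes e_q)~=~\sum_{p\in\P}T_p\eta\,\delta_{p,q}~=~T_q\eta.
}
By the defining property $G_q\eta=\msf G(\eta\otimes e_q)$, this forces $T_q=G_q$, proving $\msf G=\sum_{p\in\P}G_p\otimes e_p$.

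Next, I would invoke the identity $k_{x,p}=\bar k_x\otimes e_p$, which is immediate from $k((x,p),(\cdot,\cdot))=\bar k(x,\cdot)\delta_{p,\cdot}$. Combined with the first part this gives
\eqals{
\gstar(x,p)~=~\msf G\,k_{x,p}~=~\msf G(\bar k_x\otimes e_p)~=~G_p\bar k_x.
}
Plugging this into the expression $\hstar(x)=\sum_{p\in\P}\sqrt{\pp(p)}\,\gstar(x,p)\otimes e_p$ derived in the preceding subsection yields $\hstar(x)=\sum_{p\in\P}\sqrt{\pp(p)}(G_p\bar k_x)\otimes e_p=\msf H\bar k_x$ with $\msf H=\sum_{p\in\P}e_p\otimes\sqrt{\pp(p)}\,G_p$, where we view $e_p\otimes G_p$ as the element of $\HS(\F,\bar\hh)$ acting by $\eta\mapsto G_p\eta\otimes e_p$.

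Finally, the two norm identities follow from orthogonality of the family $\{e_p\otimes e_q\}_{p,q}$ (so that the summands $G_p\otimes e_p$ are pairwise orthogonal in $\hh\otimes\F\otimes\R^P$ and $e_p\otimes\sqrt{\pp(p)}G_p$ are pairwise orthogonal in $\R^P\otimes\hh\otimes\F$), together with $\|A\otimes e_p\|_{\HS}^2=\|A\|_{\HS}^2\|e_p\|_{\R^P}^2=\|A\|_{\HS}^2$. Thus
\eqals{
\|\msf G\|_{\HS(\F\otimes\R^P,\hh)}^2~=~\sum_{p\in\P}\|G_p\otimes e_p\|_{\HS}^2~=~\sum_{p\in\P}\|G_p\|_{\HS(\F,\hh)}^2,
}
and an identical argument with the scalar factor $\sqrt{\pp(p)}$ absorbed yields $\|\msf H\|_{\HS(\F,\bar\hh)}^2=\sum_{p\in\P}\pp(p)\|G_p\|_{\HS(\F,\hh)}^2$. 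There is no genuine obstacle here; the only mild subtlety is keeping track of which tensor-product space each object lives in and which isomorphism identifies it with an operator, which is why I would explicitly state the identifications $\hh\otimes\F\cong\HS(\F,\hh)$ and $\bar\hh=\hh\otimes\R^P$ at the outset.
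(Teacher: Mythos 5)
Your argument is correct and follows essentially the same route as the paper's own proof: decompose $\msf G$ along the orthonormal basis $\{e_p\}$ of $\R^P$, use $k_{x,p}=\bar k_x\otimes e_p$ and the identity $\hstar(x)=(\sqrt{\pp(p)}\gstar(x,p))_{p\in\P}$ to read off $\msf H$, and obtain the norm identities from the orthogonality of the summands. You are a bit more explicit than the paper (which simply states $\msf G = \sum_p G_p \otimes e_p$ and lets the norm identities "follow directly from the construction"), but the content is the same.
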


\begin{lemma}
  Let $\sf G\in\hh\otimes(\F\otimes\R^P)$ be such that $g^*(x,p) = \sf G k_{x,p}$ for any $x\in\X$ and $p\in\P$. Let $G_1,\dots,G_P\in\hh\otimes\F$ the operator such that $G_p \eta = G (\eta\otimes e_p)$ for any $p\in\P$ and $\eta\in\F$. Then, there exists an operator $\msf H\in(\hh\otimes\R^P)\otimes\F$, such that 
  \begin{itemize}
    \item $\msf H \bar k_x = h^*(x)$ for all $x\in\X$. 
    \item $\|G\|_{\HS(\F\otimes\R^P,\hh)}^2 = \sum_{p\in\P} \|\msf G_p\|_{\HS(\F,\hh)}^2$.
    \item $\|H\|_{\HS(\F,\hh\otimes\R^P)}^2 = \sum_{p\in\P} \pp(p)\|\msf G_p\|_{\HS(\F,\hh)}^2$.
  \end{itemize}
\end{lemma}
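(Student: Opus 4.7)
My approach is to construct $\msf H$ explicitly in terms of the $\msf G_p$ and verify the three conclusions by direct computation, essentially repackaging the slightly more explicit preceding lemma.

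First I would set
\[
\msf H \;=\; \sum_{p\in\P}\sqrt{\pp(p)}\;e_p\otimes\msf G_p \;\in\;(\hh\otimes\R^P)\otimes\F,
\]
interpreting each $\msf G_p\in\hh\otimes\F$ as a Hilbert--Schmidt operator $\F\to\hh$. To check the first bullet, recall from the ``SELF Solution'' discussion preceding the statement that $\hstar(x)=(\sqrt{\pp(p)}\,\gstar(x,p))_{p\in\P}$. Combining the hypothesis $\gstar(x,p)=\msf G k_{x,p}$ with the factorization $k_{x,p}=\bar k_x\otimes e_p$ yields $\gstar(x,p)=\msf G(\bar k_x\otimes e_p)=\msf G_p\bar k_x$. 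Substituting gives $\msf H\bar k_x=\sum_p\sqrt{\pp(p)}\,e_p\otimes\msf G_p\bar k_x=\hstar(x)$, as required.

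For the norm identities I would exploit the orthogonal direct sum decomposition $\F\otimes\R^P=\bigoplus_{p\in\P}\F$ induced by the canonical basis $\{e_p\}$. The restriction of $\msf G$ to the $p$-th summand is exactly $\msf G_p$, so additivity of the squared Hilbert--Schmidt norm over an orthogonal direct sum of subspaces gives the second bullet $\|\msf G\|_{\HS(\F\otimes\R^P,\hh)}^2=\sum_{p\in\P}\|\msf G_p\|_{\HS(\F,\hh)}^2$. For the third bullet, taking an orthonormal basis $\{f_i\}$ of $\F$ and using orthogonality of $\{e_p\}$ in $\R^P$ to kill cross terms,
\[
\|\msf H\|_\HS^2 \;=\; \sum_i\|\msf H f_i\|_{\hh\otimes\R^P}^2 \;=\; \sum_i\sum_{p\in\P}\pp(p)\,\|\msf G_p f_i\|_\hh^2 \;=\; \sum_{p\in\P}\pp(p)\,\|\msf G_p\|_{\HS(\F,\hh)}^2,
\]
where the last step swaps the two sums.

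There is no real obstacle here: the lemma is a bookkeeping statement reconciling the ``SELF by parts'' representation with the unparameterized SELF representation. The only delicate point is tracking the $\sqrt{\pp(p)}$ weights, which arise from the embedding $\bar\varphi(y)=(\sqrt{\pp(p)}\varphi(y_p))_{p\in\P}$ and propagate asymmetrically: they are absent from the norm of $\msf G$ (since the $\pp(p)$ factors live in the loss decomposition, not in the input-side representation) but appear with weight in the norm of $\msf H$ (since they enter directly through the $\bar\hh$-valued conditional mean embedding).
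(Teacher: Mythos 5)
Your proof is correct and follows essentially the same route as the paper's: both exploit the identification $k_{x,p}=\bar k_x\otimes e_p$, write $\msf G=\sum_{p}\msf G_p\otimes e_p$, plug into $\hstar(x)=(\sqrt{\pp(p)}\gstar(x,p))_{p}$ to read off $\msf H=\sum_p\sqrt{\pp(p)}\,e_p\otimes\msf G_p$, and deduce the norm identities from additivity of the squared Hilbert--Schmidt norm over the orthogonal summands indexed by $\P$. The only difference is cosmetic: you give the explicit orthonormal-basis computation for $\|\msf H\|_\HS^2$ where the paper simply asserts it ``follows directly from the construction,'' so if anything your write-up is slightly more self-contained.
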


\begin{proof}
  Note that since $e_p$ form a basis of $\R^P$, we can write $G = \sum_{p\in\P} G_p \otimes e_p$ and therefore 
  \eqals{
    \|G\|_{\HS(\F\otimes\R^P,\hh)}^2 = \sum_{p\in\P} \|\msf G_p\|_{\HS(\F,\hh)}^2
  }
  as required.
  
  Now, by definition of $\hstar$ and the relation with $\gstar$, we have that 
  \eqals{
    \hstar(x) & = (\sqrt{\pp(p)}~\gstar(x,p))_{p\in\P} \\
    &  = (\sqrt{\pp(p)}~\msf G k_{x,p} )_{p\in\P} \\
    & = \big(\sqrt{\pp(p)}~\msf G (\bar k_{x} \otimes e_p ) \big)_{p\in\P} \\
    & = \big(\sqrt{\pp(p)}\msf G_p \bar k_x\big)_{p\in\P}\\
    & = \msf H \bar k_x,
  }
  where we have denoted with $\msf H \in (\hh\otimes\R^P)\otimes\F$, the operator from $\F$ to $\hh\otimes\R^P$, such that for any $\eta\in\F$ we have $\msf H = \big(\sqrt{\pp(p)}\msf G_p \eta \big)_{p\in\P}$. The required results follow directly from the construction of both $\msf G$ and $\msf H$ in terms of the $\msf G_p$ for $p\in\P$.
\end{proof}
We can therefore conclude the equivalence between the original SELF estimator with kernel $\bar k$ and the SELF estimator by parts considered in this work, with kernel $k$, under the assumption that $\gstar$ (and equivalently $\hstar$) belong to the corresponding RKHS.

\begin{theorem}
  The SELF estimator with kernel $\bar k$ has same rates as the SELF by parts with kernel $k$
\end{theorem}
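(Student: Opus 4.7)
My plan is to deduce the result by simply invoking the learning rate for SELF by parts (Theorem~\ref{thm:rates}) with the specific product kernel $k((x,p),(x',p')) = \bar{k}(x,x')\delta_{p,p'}$ and comparing the resulting constants with those that appear in the standard SELF rate of \cite{ciliberto2016}. Since both bounds are of the form $O(n^{-1/4})$ when $m \propto n$ and $\lambda \propto n^{-1/2}$, the real work is to verify that the problem-dependent constants $\msf{r}$, $\msf{g}$, $\msf{c}_\loss$, $\msf{q}$ appearing in Theorem~\ref{thm:rates} are controlled by (finite multiples of) the corresponding quantities appearing in the standard SELF bound for $\bar{k}$.

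Concretely, I would proceed as follows. First, I note that under the chosen $k$, the RKHS is $\F \otimes \R^P$ and the regularity assumption $\gstar \in \gg = \hh \otimes (\F \otimes \R^P)$ is equivalent, by the two preceding lemmas, to the decomposition $\msf{G} = \sum_{p \in \P} G_p \otimes e_p$ and to the existence of $\msf{H} \in (\hh \otimes \R^P) \otimes \F$ with $\hstar = \msf{H}\bar{k}_{(\cdot)}$. The identities already established give $\|\msf{G}\|_{\HS}^2 = \sum_{p}\|G_p\|_{\HS}^2$ and $\|\msf{H}\|_{\HS}^2 = \sum_{p}\pp(p)\|G_p\|_{\HS}^2$, so these two regularity norms are comparable (equal when $\pi$ is uniform, and in general $\|\msf{H}\|_{\HS} \leq \|\msf{G}\|_{\HS}$). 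Second, I compute the basic constants for $k$: $\msf{r}^2 = \sup_x \bar{k}(x,x)$ coincides with the standard bound on $\bar{k}$, and $\msf{c}_\loss$ is, by Lemma~\ref{lem:self-old-and-new}, exactly $\sup_z \|\bar\psi(z)\|_{\bar\hh}$, i.e.\ the ``complexity'' constant of the underlying SELF loss.

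The remaining piece is the covariance-like term $\msf{q}$. With $k((x,p),(x',p')) = \bar{k}(x,x')\delta_{p,p'}$ we have $k((x,p),(x,q))^2 = \bar{k}(x,x)^2 \delta_{p,q}$ and $k((x,p),(x',r))^2 = \bar{k}(x,x')^2 \delta_{p,r}$, so a direct substitution in \eqref{eq:def-constants-q} yields
\eqals{
\msf{q} = \Expect_{x}\Expect_{p|x}\bar{k}(x,x)^2 \pi(p|x) - \Expect_{x,x'}\Expect_{p|x,r|x'}\bar{k}(x,x')^2 \delta_{p,r},
}
which is bounded by $\msf{r}^2$ (and hence independent of $|\P|$). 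Plugging these constants into Theorem~\ref{thm:rates} with the choices $m \propto n$ and $\la \propto n^{-1/2}$ yields the excess-risk bound
\eqals{
\Expect\left[\E(\fhat) - \E(\fstar)\right] \leq C \, \msf{c}_\loss \, \|\msf{G}\|_{\HS} \, n^{-1/4},
}
for a universal constant $C$, which has the same rate and the same type of dependence on the problem as the original SELF bound in \cite{ciliberto2016} (expressed in terms of $\|\msf{H}\|_{\HS}$).

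The main (and essentially only) obstacle I anticipate is the bookkeeping needed to reconcile the two regularity constants $\|\msf{G}\|_{\HS}$ and $\|\msf{H}\|_{\HS}$, together with verifying that the term $\msf{q}$ does not introduce a hidden $|\P|$ factor, since in the generic SELF-by-parts bound $\msf{q}$ can be as large as $\msf{r}^2|\P|$. The computation above shows that for this specific product kernel the two $\delta$'s collapse the double sum so that $\msf{q} \leq \msf{r}^2$, which is the critical fact that makes the rates coincide. Everything else is a direct invocation of Theorem~\ref{thm:rates} and the two structural lemmas immediately preceding the statement.
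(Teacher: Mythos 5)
The paper does not actually give a formal proof of this statement: the theorem is sandwiched between the two lemmas relating $\msf G$ and $\msf H$ and a loose computation showing $\|H\|_\HS = |P|^{-1/2}\|G\|_\HS$ (uniform $\pi$, balanced parts), and the ``proof'' is really that surrounding bookkeeping plus an appeal to \cref{thm:rates}. Your proposal — plug the product kernel $k=\bar k\cdot\delta_{p,p'}$ into \cref{thm:rates}, use Lemma~\ref{lem:self-old-and-new} to match $\closs$ with $\sup_z\|\bar\psi(z)\|_{\bar\hh}$, and compare the regularity norms via the preceding lemmas — is exactly the same line of reasoning, just carried out more explicitly, so you are on the right track.

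Two things should be corrected. First, your claim that in the generic SELF-by-parts bound $\msf q$ can be as large as $\msf r^2|\P|$ confuses $\msf q$ of \eqref{eq:def-constants-q} with $\mq$ of \eqref{eq:informal-bound}. Since $k((x,p),(x,p'))^2 \leq \msf r^2$ by Cauchy--Schwarz and $\mathfrak c_2 = \tr(C^2)\geq 0$, one always has $\msf q = \mathfrak c_1 - \mathfrak c_2 \leq \msf r^2$, for \emph{any} kernel; it is $\mq = |P|\msf q$, not $\msf q$, that can reach $\msf r^2|P|$. Hence the $\delta$-collapse of the double sum is \emph{not} the step on which the rate equivalence hinges — with $m\propto n$ the $\msf r^2/m$ term already dominates for every kernel — and presenting it as ``the critical fact'' misattributes where the work is. Second, you flag but do not close the gap between $\|\msf G\|_{\HS}$ and $\|\msf H\|_{\HS}$; by the preceding lemmas these differ by a $\sqrt{|P|}$ factor under uniform $\pi$, so ``same rates'' here can only mean ``same $O(n^{-1/4})$ order with constants that agree up to $|P|$-dependent factors,'' which is also the (implicit) level of rigor of the paper's informal statement. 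With these caveats your argument coincides with the paper's; just drop the mischaracterization of $\msf q$ and state the $\sqrt{|P|}$ discrepancy explicitly rather than as an ``obstacle.''
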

%
%
%
%
%
%
%
%
%
%
For simplicity, assume $\pp(p|x) = \frac{1}{|P|}$ for every $x\in\X$ and $p\in\P$. From \eqref{eq:loss-base-parts} and the SELF assumption, we have 
\eqals{
  \loss(z,y|x) = \frac{1}{|P|} \sum_{p\in P} ~\scal{\psi(z_p,x_p,p)}{\varphi(y_p)}_\hh.
}
Denote $\bar\psi:\Z\times\X\to\hh\otimes\R^P$ and $\bar\varphi:\Y\to\hh\otimes\R^P$ the maps such that 
\eqals{
  \bar\psi(z,x) = \Big(\psi(z_p,x_p,p)\Big)_{p\in\P} \qquad   \bar\varphi(y) = \Big(\varphi(y_p)\Big)_{p\in\P} 
}
which can be interpreted as the concatenation of the different $\psi$ and $\varphi$ for $p\in\P$. Then we can rewrite $\loss$ in terms of the canonical inner product of $\hh\otimes\R^P$, 
\eqals{
  \loss(z,y|x) = \frac{1}{|P|} \scal{\bar\psi(z,x)}{\bar\varphi(y)}_{\hh\otimes\R^P}. 
}
We can now apply the approach proposed in this work to the case of a problem {\em with one single part} (or equivalently apply the SELF approach in \cite{ciliberto2016}). The target function of this problem is $h^*:\X\to\hh\otimes\R^P$ defined as
\eqals{
  h*(x) = \frac{1}{|P|}\int \bar\varphi(y) ~ d\rho(y|x) = \frac{1}{|P|} \Big(\int \varphi(y_p)~d\rho(y|x)\Big)_{p\in\P} = \frac{1}{|P|}(\gstar(x,p))_{p\in\P} \in\hh\otimes\R^P
}
and is the concatenation of all functions $\gstar(\cdot,p)$ for $p\in\P$.

Now, let us consider a rkhs $\F$ of functions $h:\X\to\R$ with associated kernel $k:\X\times\X\to\R$. Assume  that $h^*$ belongs to the space of vector valued functions $\F\otimes(\hh\otimes\R^P)$. In other words, there exists an Hilbert-Schmidt operator $H:\F\to \hh\otimes\R^P$ such that $H k_x = h^*(x)$ for any $p\in\P$. Note that this is equivalent to require that the function $g^*$ belongs to the space $(\F\otimes\R^P)\otimes\hh$, namely that there exists an Hilbert-Schmidt operator, such that $G:\F\otimes\R^P\to\hh$, such that, $G (k_x\otimes e_p) = g^*(x,p)$ for any $x\in\X$ and $p\in\P$, with $e_p\in\R^P$ denoting the $p$-th element of the canonical basis of $\R^P$. In particular, note that, for any $\eta\in\hh$, $p\in\P$ and $x\in\X$, we have
\eqals{
  \scal{H k_x}{\eta\otimes e_p}_\hh = \scal{h^*(x)}{\eta\otimes e_p} = \scal{h^*(x)_p}{\eta}_\hh = \scal{g^*(x,p)}{\eta}_\hh = \frac{1}{|P|}\scal{G (k_x \otimes e_p)}{\eta}.
}
We conclude that $H = \frac{1}{|P|} G$ and $\|H\|_{\HS} = \frac{1}{\sqrt{|P|}} \|G\|_{\HS}$. In particular, note that since $G\in(\F\otimes\R^P)\otimes\hh$, we have that for any $p\in\P$, the function $g(\cdot,p):\X\to\hh$ is such that $g(\cdot,p)\in\F\otimes\hh$. Therefore we have 
\eqals{
  \|G\|_{\HS} = \sqrt{\sum_{p\in\P} \|g^*(\cdot,p)\|_{\F\otimes\hh}^2}.
}
Interestingly, if all the functions $g^*(\cdot,p)$ have same norm $\mathfrak{g} = \|g^*(\cdot,p)\|_{\F\otimes\hh}$ in $\F\otimes\hh$, we have 
\eqals{
  \|H\|_\HS = \frac{1}{|P|}\|G\|_\HS = \frac{1}{\sqrt{P}}\sqrt{\sum_{p\in\P} \mathfrak{g}^2} = \mathfrak{g}.
}

\subsection{The best of both worlds}\label{sec:app-sum-kernel}

Here we formalize the comment in \cref{rem:universal-and-local-kernels}, where we introduced the kernel $k_B = k_U + k_L$ that is sum of a bounded universal continuous kernel $k_U$ over $\X\times\P$ and a bounded restriction (or ``local'') kernel $k_L$, satisfying \cref{eq:restriction-kernel}. In particular we show that $k_B$ is universal but at the same time allows to train a structured prediction estimator $\fhat$ that is able to leverage the locality of the learning problem, when available. For simplicity, we assume the input space $\X$ to be compact and the set of parts indices $\P$ to be finite. 

Let $\ff_B, \ff_U$ and $\ff_L$ denote the RKHSs of respectively $k_B$, $k_U$ and $k_L$. According to \cite{aronszajn1950theory}, we know that $\ff_B \supseteq \ff_U \cup \ff_L$ and moreover that for any $h\in\ff_B$, the norm is such that 
\eqal{\label{eq:norms-sum-comparison}
  \|h\|_{\ff_B}^2 = \min_{h = h_U + h_L} ~ \|h_U\|_{\ff_U}^2 + \|h_L\|_{\ff_L}^2,
}
with $h_U \in \ff_U, h_L \in \ff_L$.
We immediately see that $k_B$ is universal. Indeed, since $k_U$ is universal, $\ff_U$ is dense in the space of continuous functions on $\X$ and consequently also $\ff_B \supseteq \ff_U$ is.

\noindent The following result is analogous to \cref{cor:q-as-sigma-p-q} and shows that the kernel $k_B$ is not only universal but also equivalent to $k_L$ in capturing the locality of the learning problem. 

\begin{lemma}\label{lem:q-as-sigma-p-q-universal}
    Denote by $k = k_B = k_U + k_L$ the sum kernel, where $k_U$ and $k_L$ are the universal and restriction kernels on $\X\times\P$, with $k_L$ as in \cref{eq:restriction-kernel} in terms of respectively $\bar k:\pX\times\pX\to\R$ and $k_0:\X\times\X\to\R$. Let $\bar{\msf{r}} = \sup_{\chi\in[\X]} \bar k(\chi,\chi)$ and $\msf{r}_0 = \sup_{x\in\X} k_0(x,x)$.

     Let $\pi(p|x) = \frac{1}{|P|}$ for any $x\in\X$ and $p\in\P$. Denote with $\bar\scov_{pq}$ the constant defined in \cref{eq:within-locality-measures} associated to the restriction kernel $k_L$. Then, the constant $\msf q$ in \cref{eq:def-constants-q} associated to $k_B$ can be factorized as
  \eqals{\label{eq:within-locality-measures-uni-local}
    \msf q = \frac{1}{|P|^2} \sum_{p,q \in P}  \scov_{p,q}, \quad \textrm{with} \quad \scov_{p,q} \leq \bar\scov_{p,q} ~+~ (4\bar{\msf{r}} + \msf{r}_0)\msf{r}_0~\delta_{p,q}.
  }
\end{lemma}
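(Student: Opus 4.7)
The plan is to apply \cref{cor:q-as-sigma-p-q} to the sum kernel $k_B$: since $\pi(p|x)=1/|P|$, we obtain $\msf q = \frac{1}{|P|^2}\sum_{p,q}\scov_{p,q}$ with $\scov_{p,q} = \Expect_{x,x'}\big[k_B((x,p),(x,q))^2 - k_B((x,p),(x',q))^2\big]$. Writing $k_B = k_U + k_L$ and expanding the squares, each $\scov_{p,q}$ decomposes additively as $\scov_{p,q} = \scov^U_{p,q} + 2\,\scov^{UL}_{p,q} + \bar\scov_{p,q}$, where $\scov^U_{p,q}$, $\bar\scov_{p,q}$ and $\scov^{UL}_{p,q}$ come respectively from $k_U^2$, $k_L^2$ and the cross term $2k_U k_L$. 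Note that $\bar\scov_{p,q}$ is exactly the quantity of \cref{eq:within-locality-measures} associated to the restriction kernel, so the decomposition is already aligned with the right-hand side we want to prove.

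The key structural observation is the specific form $k_U((x,p),(x',q)) = k_0(x,x')\,\delta_{p,q}$ provided by the universal kernel from the paragraph after \cref{thm:universal}. Because $k_U$ vanishes whenever $p \neq q$, both $\scov^U_{p,q}$ and $\scov^{UL}_{p,q}$ vanish off-diagonal, giving $\scov_{p,q} = \bar\scov_{p,q}$ in that regime, which matches the off-diagonal part of the claimed inequality.

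For the diagonal case $p=q$ I would close the argument with uniform bounds. By Cauchy--Schwarz for kernels, $|k_0(x,x')| \leq \sqrt{k_0(x,x)\,k_0(x',x')} \leq \msf r_0$ and $|\bar k(\chi,\chi')| \leq \bar{\msf r}$, hence $\scov^U_{p,p} \leq \Expect_x k_0(x,x)^2 \leq \msf r_0^2$ and $|\scov^{UL}_{p,p}| \leq 2\,\msf r_0\,\bar{\msf r}$. Summing, $\scov^U_{p,p} + 2\,\scov^{UL}_{p,p} \leq \msf r_0^2 + 4\,\msf r_0\,\bar{\msf r} = (4\bar{\msf r} + \msf r_0)\,\msf r_0$, which is exactly the diagonal correction. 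Combining the two cases yields the stated pointwise bound on $\scov_{p,q}$, and the claimed factorization of $\msf q$ is then immediate from \cref{cor:q-as-sigma-p-q}.

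There is no substantive obstacle here: the whole content is that the specific universal kernel chosen in the paper decouples across parts via the Kronecker $\delta_{p,q}$, so the only non-restriction contributions to $\scov$ survive on the diagonal and are bounded uniformly using the finite suprema $\msf r_0$ and $\bar{\msf r}$. The only care needed in the write-up is to explicitly invoke this particular form of $k_U$ (rather than an arbitrary universal kernel), and to note that $k_U + k_L$ remains bounded, continuous and universal, so that \cref{thm:universal} still applies to the joint estimator in \cref{rem:universal-and-local-kernels}.
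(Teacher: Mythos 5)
Your proposal is correct and follows essentially the same route as the paper: it factorizes $\msf q$ via the uniform-$\pi$ identity (the paper invokes Lemma~\ref{lem:q-as-p-q-appendix}, you invoke Cor.~\ref{cor:q-as-sigma-p-q}, which applies verbatim once one notes its proof does not actually use that $k$ is a restriction kernel), expands $(k_U+k_L)^2$, observes that the tensor form $k_U((x,p),(x',q)) = k_0(x,x')\,\delta_{p,q}$ confines the $k_U^2$ and cross contributions to the diagonal $p=q$, and bounds the diagonal correction by $\msf r_0^2 + 4\msf r_0\bar{\msf r}$ using Cauchy--Schwarz for PSD kernels and the finite suprema. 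The only (cosmetic) difference is your explicit $\scov^U,\scov^{UL},\bar\scov$ bookkeeping versus the paper's inline expansion, and the choice of reference for the factorization step.
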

\begin{proof}
The proof of the result above follows by noting that, since $\pi$ is uniform, by \cref{lem:q-as-p-q-appendix}, for any $p,q\in\P$, $\scov_{p,q}$ is characterized by
\eqals{
  \scov_{p,q} & = \Expect_{x,x'} \left[ (\bar k(x_p,x_q) + k_0(x,x)\delta_{p,q})^2 - (\bar k(x_p,x'_q) + k_0(x,x')\delta_{p,q})^2\right] \\
  & = \bar\scov_{p,q} + \Expect_{x,x'} \left[ k_0(x,x)^2 - k_0(x,x')^2\ \right]\delta_{p,q} +\\
  & ~~~~ - 2 \Expect_{x,x'} \left[ \bar k(x_p,x_q)k_0(x,x) - \bar k(x_p,x'_q)k_0(x,x')\ \right]\delta_{p,q} \\
  & \leq \bar\scov_{p,q} + \delta_{p,q} ~ \sup_{x\in\X} k_0(x,x)^2 + 4 \delta_{p,q}~ \left[\sup_{\chi\in[\X]} \bar k(\chi,\chi) \sup_{x\in\X} k_0(x,x)\right] \\
  & \leq \bar\scov_{p,q} + (4\bar{\msf{r}} + \msf{r}_0)~\msf{r}_0~\delta_{p,q}
}
as desired. Note that the first inequality follows from the fact that $\bar k$ and $k_0$ are positive definite symmetric kernels. 
\end{proof}

 \noindent Interestingly, \cref{lem:q-as-sigma-p-q-universal} shows that the proposed sum kernel inherits the ability of the restriction kernel to capture the within- and between-locality of the learning problem. Combining this with the learning rates of \cref{thm:rates}, we obtain a result analogous to that of \cref{thm:rates-improved-with-parts}.

\begin{restatable}[Learning Rates \& Locality]{theorem}{TRatesImprovedWithPartsUniversalKernel}\label{thm:rates-improved-with-parts-universal-kernel}
    With the same notation of \cref{lem:q-as-sigma-p-q-universal} let $k_U$ be a bounded continouous universal kernel on $\X$, $k_L$ be the restriction kernel based on the reproducing kernel $\bar{k}$ on $\pX$ and let $\bar{\F}$ be the RKHS associated to $\bar{k}$. Let $\fhat$ be the structured prediction estimator of \cref{eq:estimator} learned with kernel $k = k_B = k_U + k_L$. Then
    \bi
  	\item $\widehat{f}$ is universally consistent,
  	\item Under \cref{asm:within-locality,asm:between-locality} and $\pi(p|x) = \frac{1}{|P|}$ for $x \in \X, p \in P$, let $\gbs$ be defined as in \cref{prop:gstar-as-gbs} and $\gbs \in \hh \otimes \bar{\ff}$. Denote by $\bar{\msf{g}}$ the norm $\bar{\msf{g}}  = \|\gbs\|_{\hh \otimes \bar{\ff}}$. When $\la = (\mathsf{r^2}/m + \mathsf{q}/n)^{1/2}$, then
      	\eqal{
      		\Expect~\left[\E(\fhat~)- \E(\fstar)\right]  ~\leq~ 12 ~\closs~ \mathsf{\bar{g}} ~ \msf r^{1/2} \msf~\left(\frac{1}{m} + \frac{c_1}{|P|n} + \frac{\sum_{p\neq q} e^{-\gamma d(p,q)}}{|P|^2 n}\right)^{1/4},
      	}
      	where $\msf{r} = \msf{r}_0 + \bar{\msf{r}}$, with $\msf{r}_0, \bar{\msf{r}}$ defined as in \cref{lem:q-as-sigma-p-q-universal} and $c_1 = 1+ (4\bar{\msf{r}} + \msf{r}_0)~\msf{r}_0/\msf{r}^2$.
    \ei
\end{restatable}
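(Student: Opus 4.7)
The plan is to split the argument along the two bullets of the theorem, treating universal consistency and the rate separately, and to reduce each to invoking an already-established result (\cref{thm:universal} and \cref{thm:rates}) with a small amount of bookkeeping on the constants.

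For universal consistency, I would first verify that $k = k_B$ satisfies the hypotheses of \cref{thm:universal}: boundedness and continuity are inherited from $k_U$ and $k_L$ (the latter built from a bounded continuous $\bar k$). For universality of $k_B$, I would use the sum-kernel identity \cref{eq:norms-sum-comparison}: taking the trivial decomposition $h = h + 0$ for any $h \in \ff_U$ yields $\|h\|_{\ff_B} \leq \|h\|_{\ff_U}$, so $\ff_U \subseteq \ff_B$ as a set. Since $\ff_U$ is dense in $C(\X\times\P)$ by universality of $k_U$, so is $\ff_B$, and \cref{thm:universal} applies directly with $m \propto n$.

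For the learning rate, I would invoke \cref{thm:rates} with $k = k_B$ and translate its three data-dependent constants $\msf g, \msf r, \msf q$ into $\bar{\msf g}, \bar{\msf r}, \msf r_0$. The kernel bound gives $\msf r \leq \msf r_0 + \bar{\msf r}$ directly. The within-locality quantity is handled by \cref{lem:q-as-sigma-p-q-universal} combined with \cref{asm:within-locality}: the bound $|\bar\scov_{p,q}| \leq \bar{\msf r}^2 e^{-\gamma d(p,q)}$ yields
\eqals{
  \msf q \;\leq\; \frac{\bar{\msf r}^2}{|P|^2}\sum_{p\neq q} e^{-\gamma d(p,q)} + \frac{\bar{\msf r}^2}{|P|} + \frac{(4\bar{\msf r} + \msf r_0)\msf r_0}{|P|},
}
and grouping the last two terms as $\msf r^2 c_1/|P|$ produces exactly the constant $c_1 = 1 + (4\bar{\msf r} + \msf r_0)\msf r_0/\msf r^2$ of the statement.

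The key remaining step, and the only genuinely structural one, is bounding $\msf g = \|\gstar\|_{\hh \otimes \ff_B}$ by $\bar{\msf g}$. Under between-locality, \cref{lem:general-gstar-as-gbs} gives $\gstar(x,p) = \gbs(x_p)$, and the assumption $\gbs \in \hh \otimes \bar{\ff}$ together with \cref{lm:gstar-in-F-given-gbar-in-F} (applied to the restriction kernel $k_L$ and $\bar k$) shows $\gstar \in \hh \otimes \ff_L$ with norm equal to $\bar{\msf g}$. Applying the vector-valued analogue of \cref{eq:norms-sum-comparison} to the trivial decomposition $\gstar = 0 + \gstar$ (where the first summand sits in $\hh \otimes \ff_U$ and the second in $\hh \otimes \ff_L$) yields $\|\gstar\|_{\hh \otimes \ff_B}^2 \leq 0 + \bar{\msf g}^2$, hence $\msf g \leq \bar{\msf g}$. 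Substituting these constants into \cref{thm:rates} at the prescribed choice $\la = (\msf r^2/m + \msf q/n)^{1/2}$ gives the bound $12\,\closs\,\bar{\msf g}\,(2\la)^{1/2}$, which after expanding $\la$ and pulling out the factor $\msf r^{1/2}$ produces exactly the advertised $1/4$-rate with the stated three summands. The main obstacle is really just checking this last norm transfer to the vector-valued setting; once that is in place, the rest is constant tracking.
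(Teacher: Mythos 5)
Your proposal follows the paper's proof essentially step for step: universality of $k_B$ via $\ff_U \subseteq \ff_B$ and \cref{thm:universal}; the bound $\msf g \leq \bar{\msf g}$ via \cref{prop:gstar-as-gbs}, \cref{lm:gstar-in-F-given-gbar-in-F}, and the sum-kernel norm comparison \cref{eq:norms-sum-comparison}; then \cref{thm:rates} at $\la = (\msf r^2/m + \msf q/n)^{1/2}$ with $\msf q$ bounded through \cref{lem:q-as-sigma-p-q-universal} and \cref{asm:within-locality}. The only nit is that grouping $\bar{\msf r}^2 + (4\bar{\msf r}+\msf r_0)\msf r_0$ into $\msf r^2 c_1$ uses the inequality $\bar{\msf r}^2 \leq \msf r^2$ rather than being ``exact,'' but the paper makes the same implicit step, so the argument is sound and matches.
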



\begin{proof}
	Let $\ff_B, \ff_U$ and $\ff_L$ denote the RKHSs of respectively $k_B$, $k_U$ and $k_L$.

	First, as discussed at the beginning of this section, the kernel $k = k_B := k_U + k_L$ is universal, since $\ff_U \subseteq \ff_B$ (see \cite{aronszajn1950theory}) and $\ff_U$ is dense in the continuous functions on $\X \times P$. Then we can directly apply \cref{thm:universal} obtaining the unversal consistency for $\widehat{f}$. 
	
	Second, under \cref{asm:between-locality}, by \cref{prop:gstar-as-gbs}, we have that there exists $\bar{g}^*:\pX \to \hh$ such that $g^*$, defined as in \cref{eq:char-fstar}, is characterized by $g^*(x,p) = \bar{g}^*(x_p)$. S ince we assume that $\bar{g}^* \in \hh \otimes \bar{\ff}$ and we are using a restriction kernel under between-locality, we can apply \cref{lm:gstar-in-F-given-gbar-in-F} (where we used $\bar{\gg}$ to denote $\bar{\ff}$ and $\ff$ to denote $\ff_L$ and $\bar{g}^* \in \hh \otimes \bar{\ff}$ is expressed more formally by \cref{asm:k-and-g-in-Gbar}), then $g^* \in \hh \otimes \ff_L $ and $\|g^*\|_{\hh \otimes \ff_L} = \|\bar{g}^*\|_{\hh \otimes \bar{\ff} }$. Now, according to \cref{eq:norms-sum-comparison} (see \cite{aronszajn1950theory}), for any function $h \in \ff_L$ we have
	$$\|h\|_{\ff_B} := \min\{ \|h_U\|_{\ff_U} + \|h_L\|_{\ff_L} ~|~ h = h_U + h_L, h_U \in \ff_U, h_L \in \ff_L\} \leq \|h\|_{\ff_L} ,$$
	since $h$ can be always decomposed as $h = h_L + h_U$ with $h_L = h$ and $h_U = 0$, then $\|g^*\|_{\hh \otimes \ff_B } \leq \|g\|_{\hh \otimes\ff_L}$. So
	$$\|g^*\|_{\hh \otimes \ff_B} \leq \|\bar{g}^*\|_{\hh \otimes \bar{\ff}}.$$
	Now we are ready to apply \cref{thm:rates}, with $\la = \sqrt{\msf{r}^2/m + \msf{q}/n}$ obtaining
	\eqal{
		\Expect~\left[\E(\fhat~)- \E(\fstar)\right]  ~\leq~ 12 ~\closs~ \mathsf{\bar{g}} ~ \left(\frac{r^2}{m} + \frac{q}{n} \right)^{1/4}.
	}
	Finally note that since $\pi(p|x) = \frac{1}{|P|}$ for $p \in P, x \in \X$,we can apply \cref{lem:q-as-sigma-p-q-universal} 
	$$\frac{q}{n} = \frac{\msf{r}^2 c_1}{|P|n} + \frac{\msf{r}^2\sum_{p\neq q} e^{-\gamma d(p,q)}}{|P|^2 n},$$
	obtaining the desired result.
\end{proof}
The discussion above implies that under the locality assumptions, the rates in \cref{thm:rates-improved-with-parts-universal-kernel} are essentially equivalent to the ones of the estimator trained with only the restriction kernel in \cref{thm:rates-improved-with-parts}.

\section{Additional details on evaluating \texorpdfstring{$\fhat$}{alt-text}}\label{sec:app-algorithm}

\begin{algorithm}[t]
  \caption{ -- {\sc Learn} $\fhat$}\label{alg:self-learning-general}
  \begin{algorithmic}
    \State ~
    \State {\bfseries Input:} training set $(x_i,y_i)_{i=1}^n$, distributions $\pi(\cdot|x)$ and $\mu(\cdot|y,x,p)$, reproducing kernel $k$ on $\X\times\P$, hyperparameter $\la>0$, auxiliary dataset size $m\in\N$.
    \State~
    \State {\bfseries {\sc Generate}} the auxiliary dataset $(w_j,x_{i_j},p_j)_{j=1}^m$:
    \State \quad Sample $i_j$ uniformly from $\{1,\dots,n\}$
    \State \quad Sample $p_j\sim\pp(\cdot|x_{i_j})$
    \State \quad Sample $w_j\sim\mu(\cdot|y_{i_j},x_{i_j},p_j)$
    
    \State~
    
    \State {\bfseries {\sc Learn}} the coefficients for the score function $\alpha$:
    \State \quad $\K\in\R^{m \times m}$ with entries $\K_{jj'} = k\big((x_{i_j},p_j),(x_{i_{j'}},p_{j'})\big)$
    \State \quad $\mathbf{A} = (\K + m\la I)^{-1}$
    
    \State~
    \State {\bfseries Return:} $\alpha:\X\times\P\to\R^m$ such that $\alpha(x,p) = \mathbf{A} ~ v(x,p)$ with $v(x,p)\in\R^m$ is the vector with entries $v(x,p)_j = k\big((x_{i_j},p_j),(x,p)\big)$. 
    
  \end{algorithmic}
\end{algorithm}

\begin{algorithm}[t]
  \caption{ -- {\sc Evaluating} $\fhat$}\label{alg:sgm-hatf-evaluation}
  \begin{algorithmic}
    \State ~
    \State {\bfseries Input:} input $x\in\X$, distribution $\pp(\cdot|x)$, auxiliary dataset $(w_j,x_{i_j},p_j)_{j=1}^m$, score functions $\alpha:\X\times\P\to\R$, number of iterations $T$, step sizes $\{\gamma_t\}_{t\in\N}$.
    \State~
    \State {\bfseries{\sc Initialize}: $z_0 = 0$}
    
    \State ~
    
    \State {\bfseries For} $t=1$ to $T$
    \State \quad Sample $p\sim\pp(\cdot|x)$
    \State \quad $A(x,p) = \sum_{j=1}^m |\alpha_j(x,p)|$
    \State \quad Sample $j$ from $\{1,\dots,m\}$ with $\mathbb{P}(j=k) = |\alpha_k(x,p)|/A(x,p)$
    \State \quad $h_{j,p} = \sign(\alpha_j(x,p)) ~ A(x,p) ~ \ell(z,w_j|x,p)$
    \State \quad Choose $u \in \partial h_{j,p}(\cdot|x)(z_{t-1}) $
    \State \quad $z_t = \proj_\Z(z_{t-1} - \gamma_t u)$

    \State~
    
    \State {\bfseries Return}: $z_T$ 
  \end{algorithmic}
\end{algorithm}

According to \eqref{eq:estimator}, evaluating $\fhat$ on a test point $x\in\X$ consists in solving an optimization problem over the output space $\Z$. This is a standard procedure in structured prediction settings \cite{nowozin2011}, where a corresponding optimization method is derived on a case-by-case basis depending on the loss and the space $\Z$ (\cite{nowozin2011}). However, the specific form of the objective functional characterizing $\fhat$ in our setting allows to devise a general stochastic meta-algorithm to solve such problem. We observe that \eqref{eq:estimator} can be rewritten as
\eqal{
  \fhat(x) = \argmin_{z\in\Z} ~ \mathbb{E}_{(j,p)}~ h_{j,p}(z|x)
}
where for any $p\in\P$ and $j\in\{1,\dots,m\}$ we have introduced the functions $h_{j,p}:\Z\to\R$, such that 
\eqal{
  h_{j,p}(\cdot|x) = \big(~\sign(~\alpha_j (x,p)~)~ \mathsf{A}({x,p})\big)~ \ell(\cdot,w_j|x,p) 
}
%
with $\mathsf{A}(x,p) = \sum_{j=1}^m |\alpha_j(x,p)|$. In the expectation above, the variable $p$ is sampled according to $\pp(\cdot|x)$ and $j$ is sampled from the set $\{1,\dots,m\}$ with probability $\frac{|\alpha_j(x,p)|}{\mathsf{A}(x,p)}$. When the $h_{j,p}$ are (sub)differentiable, problems of the form of \eqref{eq:evaluating-as-sgd} can be addressed by stochastic gradient methods (SGM). In \cref{alg:sgm-hatf-evaluation} in the supplementary material we provide an example of such strategy.

\section{Additional examples of Loss Functions by Parts}\label{sec:example-loss}

Several structured prediction settings are recovered within the setting considered in this work and the associated loss functions have the form of \cref{eq:loss-base-parts}. Below recall some of the most relevant examples where the locality assumptions can be reasonaly expected to hold.

\paragraph{Hamming} A standard loss function used in structured prediction is the Hamming loss \cite{collins2004parameter,taskar2004max,cortes2014ensemble}, which for any factorization by parts can be written as in \eqref{eq:loss-base-parts} with $\L_p(z_p,y_p|x_p) = \delta(z_p \neq y_p)$, the function equal to $0$ if $z_p = y_p$ and $1$ otherwise. 
\begin{itemize}
  \item {\bf Computer Vision}. The Hamming loss is often used in computer vision \cite{nowozin2011,vedaldi2009structured}. For instance, in image segmentation~\cite{szummer2008learning} the goal is to label each pixel $p$ of an input image $x$, as background ($y_p = 0$) or foreground ($y_p = 1)$. Errors are measured as total number of mistakes $z_p\neq y_p$ over the total number of pixels.
  
  \item {\bf Hierarchical Classification}. In classification settings with a hierarchy \cite{tuia2011structured}, errors are weighted according to the semantic distance between two classes (e.g. classifying the image of a ``dog'' as a ``bus'' is worse than classifying it as a ``cat''). Assuming the hierarchy between classes to be represented as a tree, these loss functions can be written as the Hamming loss between the parts of a class $y = (y_{\rm root},\dots,y_{\rm leaf})$ seens as the collection of all the nodes in its hierarchy (e.g. ``cat'', ``feline'', ``mammal'', ``animate object'', ``entity''). 
  
  \item {\bf Planning}. In learning-to-plan applications \cite{ratliff2006maximum}, the goal is to predict a trajectory $z$ closest to a ground truth trajectory (typically provided by an expert). A trajectory is represented as a sequence of contiguous states $y=(y_{\rm start},\dots,y_{\rm end})$ and errors with respect to a predicted trajectory $z$ are measured in terms of the number of states that do not coincide, namely the hamming loss between the two sequences.
  
\end{itemize}
This loss has been extensively used in computer vision for applications such as pixel-wise classification \cite{szummer2008learning} or image segmentation \cite{alahari2008reduce}.

\paragraph{Precision/Recall, F$1$ Score}. The precision/recall and F$1$ score are loss functions often adopted in natural language processing \cite{tsochantaridis2005}. They are used to measure the similarity between two binary sequences. Given two binary sequences $z,y\in\{0,1\}^k$ of length $k$, we have $\loss(z,y) = \loss(z^\top y, \|z\|^2,\|y\|^2)$. In particular, the precision correponds to $\loss(z,y) = z^\top y/\|z\|^2$, the recall to $\loss(z,y) = z^\top y/\|y\|^2$ and the F$1$ score to $\loss(z,y) = z^\top y/(\|z\|^2 + \|y\|^2)$. These functions are in the form of \eqref{eq:loss-base-parts} if taking $|P| = k$ and $\mathfrak{i}_\Y(y,p) = (y_p,\|y\|)$, $\mathfrak{i}_\Y(z,p) = (z_p,\|z\|)$. Note that the number of elements in $y$ and $z$ can vary depending on the cardinality $|x|$ of each input $x$, (see e.g. \cite{tsochantaridis2005}). In this sense the $\loss(z,y|x)$ is necessarily parametrized by $x$ and in particular the set $P$ is a set $P(x) = \{1,\dots,|x|\}$.

\paragraph{Multitask Learning} Multitask learning settings  have a natural decomposition into parts:  the output and label spaces $\Z$ and $\Y$ are subset of $\R^{T}$, and $\loss(z,y) = \frac{1}{T}\sum_{t=1}^T \L(z_t,y_t)$, with $\L$ any loss function commonly used in standard supervised learning problems (e.g. least-squares for regression, hinge or logistic for classification). In settings where $\Z$ is not a linear space but a {\em constraint set}, our model recovers the non-linear multitask learning framework considered in \cite{ciliberto2017consistent}.

\paragraph{Learning sequences}.~\label{ex:learning-sequences}
  Let $\X = A^k$, $\Y = \Z =  B^k$ for two sets $A, B$ and $k \in \N$ a fixed length. We consider a set of structures $P \subseteq \N^2$ such that any pair $p = (s,l) \in P$ indicates the starting element and the length of a subsequence. In particular, we choose the set of parts ${\cal X} = \cup_{t=1}^k A^t$ and ${\cal Y} = {\cal Z} =  \cup_{t=1}^k B^t$ with
  \eqals{
    x_p = (x^{(s)},\dots, x^{(s+l-1)}) \in {\cal X} \qquad\qquad \forall~ x \in \X, ~~ \forall~ (s,l) \in P
  }
  where we have denoted $x^{(s)}$ the $s$-th entry of the sequence $x\in X$. Analogously $y_p = (y^{(s)},\dots, y^{(s+l-1)})$ for $y\in Y$. Finally, we choose the loss $L_0$ to be the (normalized) edit distance between two strings of same length
  \eqals{
    L_0(z,y;x,(s,l)) = \frac{1}{l} \sum_{i=1}^l \boldsymbol 1(z^{(i)} \neq y^{(i)})
  }
  where $\boldsymbol 1(z^{(i)} \neq y^{(i)}) = 0$ if $z^{(i)} = y^{(i)}$ and $1$ otherwise (clearly a generic loss function $h(z^{(i)} \neq y^{(i)})$ and weight $w_i$ can be used instead of $\boldsymbol 1$ and $1/l$). Finally, we can choose the uniform distribution $\pi(p|x) = 1/|P|$ (but clearly also less symmetric weighting strategy can be adopted). 

\paragraph{Pixelwise classification on images}~\label{ex:pixelwise-classification}
  Consider the problem of assigning each pixel of an image to one of $T$ separate classes. In this setting $X = \R^{d \times d}$ is the set of images (with fixed width and height equal to $d\in\N$) and $Y = Z = \R^{T \times d \times d}$ is the set of all possible ways to label an image. We choose the set of parts  $\mathcal X = \cup_{w,h = 1}^{d} \R^{w \times h}$ to be the set of all possible patches of $d\times d$ image and the set of structures to be a $P \subset \N^4$ such that for any image $x\in X$ and $p = (u,l,w,h)\in P$ the selectors $x_p\in\R^{w \times h}$ and $y_p,z_p \in \R^{T \times w \times h}$ correspond to the patch of the image $x$ or the labeling $y$ and $z$ with width $w$, height $h$ and upper-left corner at the pixel $(u,l)$.
  We choose the loss $L_0$ to be a function comparing the class ``statistics'' in a given patch: e.g.
  \eqals{
    L_0(z_p,y_p;x_p,p) = \|\sigma(z_p) - \sigma(y_p)\|^2 \qquad \sigma(\zeta) = \frac{\sum_{i=1}^{\textrm{width}(\zeta)}\sum_{j=1}^{\textrm{height}(\zeta)}\zeta_{:,i,j}}{\textrm{width}(\zeta)\textrm{height}(\zeta)}.
  }
  Since it is more likely to have larger values for $L_0$ at higher scales (the object patch overlaps other classes), we choose a weighting $\pi(p|x)$ that is decreasing with respect to the size of the patch $p = (u,l,w,h)$. For instance we can choose $\pi(p|x) = \frac{\exp(-\gamma wh)}{\sum_{p'=(u',l',w',h')\in P}\exp(-\gamma w'h')}$, for $\gamma > 0$.

\subsection{Example: Locality on sequences}
\label{sec:example-sequence-to-sequence}

We comment here on the example in \cref{ex:locality-on-sequences} proving the inequality \cref{eq:locality-on-sequences-example}. We assume \cref{asm:within-locality} to hold for $\P = \{1,\dots,|\P|\}$ with $d(p,q) = |p - q|$ and $\gamma>0$. We have
\begin{align}
    \mq & = \frac{\mathsf{r}^2}{|P|}\sum_{p,q=1}^{|P|}~ e^{-\gamma |p-q|} \\
    & \leq \frac{\mathsf{r}^2}{|P|}\sum_{p=q=1}^{|\P|}~ e^{-\gamma |p-q|} + 2\frac{\mathsf{r}^2}{|P|}\sum_{p=1}^{|P|-1}\sum_{q>p}^{|\P|}~ e^{-\gamma |p-q|}.
\end{align}
Now, we introduce the change of variable $t = q - p$ to obtain 
\begin{align}
    \mathsf{r}^2 + 2\frac{\mathsf{r}^2}{|P|}\sum_{p=1}^{|P|-1}\sum_{\substack{t = 1 \\ q = p + t}}^{|\P|-p}~ e^{-\gamma |p - q|} & = \mathsf{r}^2 + 2\frac{\mathsf{r}^2}{|P|}\sum_{p=1}^{|P|-1}\sum_{t = 1}^{|\P|-p}~ e^{-\gamma t} \\
    & \leq \mathsf{r}^2 + 2\frac{\mathsf{r}^2}{|P|}\sum_{p=1}^{|P|-1}\sum_{t = 1}^{|\P|}~ e^{-\gamma t}\\
    & \leq \mathsf{r}^2 + 2\mathsf{r}^2\sum_{t = 1}^{|\P|}~ e^{-\gamma t}\\
    & \leq 2\mathsf{r}^2 (\sum_{t = 0}^{|\P|}~ e^{-\gamma t}).
\end{align}
We can upper bound $\sum_{t = 1}^{|\P|}~ e^{-\gamma t} = \sum_{t = 1}^{|\P|}~ (e^{-\gamma})^t$ with the geometric series $\sum_{t = 1}^{+\infty}~ e^{-\gamma t}$. Since $\gamma>0$ we conclude that such series is upper bounded by $(1-e^{-\gamma})^{-1}$, concluding 
\eqals{
\mq ~~\leq~~ 2\mathsf{r}^2 (1-e^{-\gamma})^{-1},
}
as desired.

\end{document}